\numberwithin{equation}{section}
\newcommand{\p}{^{\perp}}
\renewcommand{\P}{\mathrm{P}}
\newcommand{\D}{\mathrm{D}}
\newcommand{\Hrm}{\mathrm{H}}
\newcommand{\Vrm}{\mathrm{V}}
\newcommand{\K}{\mathrm{K}}
\newcommand{\R}{\mathbb{R}}
\newcommand{\Rns}{\mathbb{R}^{n\times s}}
\newcommand{\Rn}{\mathbb{R}^{n}}
\newcommand{\Mcal}{\mathcal{M}}
\newcommand{\M}{\mathcal{M}}
\newcommand{\Ucal}{\mathcal{U}}
\newcommand{\calU}{\mathcal{U}}
\newcommand{\calL}{\mathcal{L}}
\newcommand{\U}{\mathcal{U}}
\newcommand{\W}{\mathcal{W}}
\newcommand{\Acal}{\mathcal{A}}
\newcommand{\A}{\mathcal{A}}
\newcommand{\bigO}{\mathcal{O}}
\newcommand{\dis}{\displaystyle}
\newcommand{\txt}{\text}
\newcommand{\ttt}{\texttt}
\newcommand{\St}{\mathrm{St}}
\newcommand{\grad}{\mathrm{grad}}
\newcommand{\dist}{\mathrm{dist}}
\newcommand{\trace}{\mathrm{trace}}
\newcommand{\Hess}{\mathrm{Hess}}
\newcommand{\Exp}{\mathrm{Exp}}
\newcommand{\LAb}{\mathcal{L}_{\Acal,b}}
\newcommand{\Trm}{\mathrm{T}}
\newcommand{\Prm}{\mathrm{P}}
\newcommand{\Id}{\mathrm{Id}}
\newcommand{\I}{\mathrm{I}}
\newcommand{\rank}{\operatorname{rank}}
\newcommand{\nl}{\newline}
\newcommand{\transpose}{^\top\! }
\newcommand{\inner}[2]{\left\langle{#1},{#2}\right\rangle}
\newcommand\restr[1]{\raisebox{-.5ex}{$|$}_{#1}}
\DeclareMathOperator*{\conv}{conv}
\DeclareMathOperator*{\cl}{cl}
\newcommand{\andt}{\txt{   and   }}
\newcommand{\argmin}{\mathrm{argmin}}
\newcommand{\tr}{\mathrm{tr}}
\newcommand{\Null}{\mathrm{null}}
\newcommand{\Retr}{\mathrm{Retr}}
\newcommand{\range}{\mathrm{range}}
\newcommand{\Grass}{\mathrm{Grass}}
\newcommand{\lambdamin}{\lambda_\mathrm{min}}
\def \SVD {\mathrm{SVD}}
\newcommand{\norm}[1]{\left\|#1\right\|}
\newcommand{\fronorm}[1]{\left\|#1\right\|_\mathrm{F}}
\newtheorem{theorem}			     {Theorem}	[section]
\newtheorem{proposition}[theorem]	 {Proposition}	
\newtheorem{corollary}	  [theorem]	 {Corollary}	
\newtheorem{lemma}	      [theorem]  {Lemma}		
\newtheorem{definition}	         {Definition}[section]
\newtheorem{assumption} {A\ignorespaces}
\newtheorem{testcase}			     {Case study}	[section]
\newtheorem{remark}			     {Remark}	[section]
\definecolor{listinggray}{gray}{0.9}
\definecolor{lbcolor}{rgb}{0.9,0.9,0.9}
\newcommand{\aref}[1]{\hyperref[#1]{A\ref{#1}}}
\title{~~~~~Nonlinear matrix recovery using optimization \nl on the Grassmann manifold}
\author[1,2]{Florentin Goyens\footnote{This author’s work was supported by The Alan Turing Institute under the EPSRC Grant No. EP/N510129/1 and under the Turing Project Scheme.}}
\author[1,2]{Coralia Cartis\footnote{This author’s work was supported by The Alan Turing Institute under the Turing Project Scheme.}}
\author[2,3]{Armin Eftekhari}
\affil[1]{Mathematical Institute\\
University of Oxford\\
Oxford, United Kingdom}
\affil[2]{The Alan Turing Institute\\
London, United Kingdom}
\affil[3]{Department of Mathematics and Mathematical Statistics\\
Umeå University\\
Umeå, Sweden}
\date{\today}
\begin{document}
\bibliographystyle{abbrv}
\tabulinesep=1.2mm

\maketitle

\begin{abstract}
	We investigate the problem of recovering a partially observed high-rank matrix whose columns obey a nonlinear structure such as a union of subspaces, an algebraic variety or grouped in clusters. The recovery problem is formulated as the rank minimization of a nonlinear feature map applied to the original matrix, which is then further approximated by a constrained non-convex optimization problem involving the Grassmann manifold.
We propose two sets of algorithms, one arising from Riemannian optimization and the other as an alternating minimization scheme, both of which include first- and second-order variants.	 
	 Both sets of algorithms have theoretical guarantees. In particular, for the alternating minimization, we establish global convergence and worst-case complexity bounds. Additionally, using the Kurdyka-Lojasiewicz property, we show that the alternating minimization converges to a unique limit point.
	   We provide extensive numerical results for the recovery of union of subspaces and clustering under entry sampling and dense Gaussian sampling.  Our methods are competitive with existing approaches and, in particular, high accuracy is achieved in the recovery using Riemannian second-order methods.\nl
\textbf{Keywords:} nonlinear matrix recovery, nonconvex optimization, Riemannian optimization, second-order methods.
\end{abstract}

\section{Introduction}
\label{sec:introduction}
In the matrix recovery problem, one tries to estimate a matrix $M\in \mathbb{R}^{n\times s}$ from partial information.  The \emph{low-rank matrix recovery problem} deals with instances where the matrix $M$ is low-rank.  This problem has received great attention in the literature, as applications abound in recommender systems and engineering (see~\cite{davenport2016overview} and the references therein for an overview). It was shown in~\cite{Candes2009} that solving a convex program allows to recover the original matrix $M$ with very high probability, provided enough samples are available. However, solving this convex semi-definite problem for large instances is very costly in time and memory allocation. This has sparked the search for alternative nonconvex formulations of the problem~\cite{jain2010guaranteed,wen2012solving}. Riemannian optimization methods are used in some of the most efficient algorithm known to date for low-rank matrix completion. These methods solve optimization problems defined on smooth Riemannian manifolds, such as the manifold of fixed-rank matrices~\cite{Vandereycken2013} or the Grassmann manifold~\cite{boumal2015low}. 

 All traditional approaches to matrix completion fail if the matrix $M$ is high-rank. Our work is based on the recent discovery that an adaptation of traditional methods allows to recover specific classes of high-rank matrices~\cite{fan2018non,Ongie2017}. This problem is known as \emph{nonlinear matrix recovery} (or high-rank matrix recovery). Recovering high-rank matrices requires one to make assumptions on the structure of $M$. Let $m_1, \dots, m_s$ denote the columns of $M$. When the points $m_i \in \R^n$ belong to a low-dimensional subspace in $\R^n$, low-rank matrix recovery methods can be applied. Nonlinear matrix recovery attempts to recover $M$ when the points $m_i$ are related in a nonlinear way.
 
  Classically, for some integer $ m <ns$, the matrix $M$ satisfies $m$ linear equations of the type $\langle A_i,M\rangle =b_i$ for given matrices $A_i\in  \mathbb{R}^{n\times s} $ and a given vector $b\in \mathbb{R}^m$, where we use the usual inner product $\langle A_i,M\rangle = \trace(A_i^\top M)$. The matrices $A_i$ are assumed to be randomly drawn from a known distribution. One defines the linear operator 
\begin{equation}
\mathcal{A}: \R^{n\times s} \to \R^m \txt{ where }\A(M)_i = \langle A_i,M\rangle
\label{eq:measurements}
\end{equation}
 so as to have the compact notation $\mathcal{A}(M)=b$ for the measurements. When each matrix $A_i$ has exactly one non-zero entry which is equal to $1$, this is known as a matrix completion problem.  The matrix $M$ is then known on a subset $\Omega$ of the complete set of entries $\{1, \dots, n\} \times \{1, \dots, s\}$. Without loss of generality we assume $n\leq s$.
\paragraph{Problem description} 
  Nonlinear matrix recovery methods use features that map the columns of $M$ to a space of higher dimension. 
The \emph{feature map} is defined as
\begin{equation}
\varphi : \mathbb{R}^n \to \mathcal{F} : v \mapsto \varphi (v),
\end{equation}
 where $\mathcal{F}$ is a Hilbert space. If $\mathcal{F}$ is finite dimensional, we write $\mathcal{F}= \R^N$ where  $N$ is the dimension of the feature space, with $N\geq n$.  We obtain the \emph{feature matrix} $\Phi(M)$ by applying $\varphi$ to each column of $M$,
\begin{equation} 
 \Phi(M) = \begin{bmatrix}
\varphi(m_1) & \dots & \varphi(m_s)
\end{bmatrix}\in \R^{N\times s}.
\label{eq:features_matrix}
\end{equation}
The map $\varphi$ is chosen using a priori knowledge of the data so that the features of the data points $\varphi(m_i)$ for $i=1, \dots, s$, all belong to the same subspace in $\R^N$. The nonlinear structure in $M$ will cause a rank deficiency in the feature matrix $\Phi(M)$, even though $M$ may be full-rank.  This is illustrated in Figure~\ref{fig:map}. 
\begin{figure}[!h]
\centering
\begin{tikzpicture}[scale = 0.5]
\begin{axis}[hide axis,colormap/violet]
\addplot3[
	surf,
	domain=-2:2,
	domain y=-1.3:1.3,
] 
	{exp(-x^2-y^2)*x};
\end{axis}
\filldraw[fill=black] (1,2.5) circle (.1);
\filldraw[fill=black] (5,3.6) circle (.1);
\filldraw[fill=black] (5.4,2.9) circle (.1);
\filldraw[fill=black] (6,3.2) circle (.1);
\filldraw[fill=black] (6,3.2) circle (.1);
\filldraw[fill=black] (4.2,2) circle (.1);
\filldraw[fill=black] (4.5,2.5) circle (.1);
\filldraw[fill=black] (4,3) circle (.1);
\filldraw[fill=black] (3,3) circle (.1);
\filldraw[fill=black] (2,2.3) circle (.1);
\filldraw[fill=black] (1.7,2.8) circle (.1);
\filldraw[fill=black] (2.5,3.5) circle (.1);
\node[right] at (6.5,4) {$\mathbb{R}^n$};
\end{tikzpicture}
\begin{tikzpicture}[scale = 0.4]
\node[above] at (-6.6,0.5) {$\varphi$};
\draw[->,thick] (-7.6,-0.3) to [out = 0, in = 180] (-5.6,-0.3);
    \shade[left color=blue!40,right color=blue!30] (4,0) -- (3.6,-2.3) -- (-4.6,-0.3) -- (-2.7,1.5) -- (4,0);
\node[above] at (4,1) {$ \mathcal{F}$};
\filldraw[fill=black] (3,-1) circle (.1);
\filldraw[fill=black] (2.5,-0.8) circle (.1);
\filldraw[fill=black] (2,-1.2) circle (.1);
\filldraw[fill=black] (1.5,0.3) circle (.1);
\filldraw[fill=black] (1,-0.5) circle (.1);
\filldraw[fill=black] (3,-1) circle (.1);

\filldraw[fill=black] (-3,0.3) circle (.1);
\filldraw[fill=black] (-2.5,1.2) circle (.1);
\filldraw[fill=black] (-2,0.7) circle (.1);
\filldraw[fill=black] (-1.5,0.7) circle (.1);
\filldraw[fill=black] (-1,-0.5) circle (.1);
\filldraw[fill=black] (-3.3,-0.5) circle (.1);
\end{tikzpicture}
\caption{The feature map $\varphi$ is chosen to exploit the nonlinear structure.}
\label{fig:map}
\end{figure}
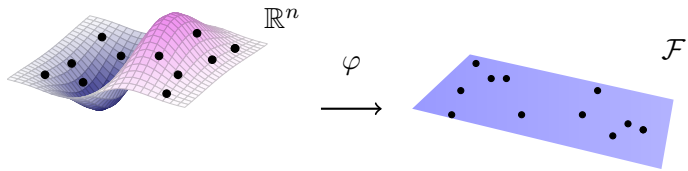

If the features are infinite dimensional or that $N$ is very large, the feature map should be represented using a kernel, which is known as the kernel trick. The set $\mathcal{F}$ is then called a reproducing kernel Hilbert space.  The kernel map represents the inner product between elements in the Hilbert space of features,
\begin{equation}
k: \R^n \times \R^n \to \R:	k(x,y) = \langle \varphi(x),\varphi(y)\rangle_\mathcal{F}.
\label{eq:def_kernel}
\end{equation} 
This allows to define the \emph{kernel matrix} of the data $\K(M,M)\in \R^{s \times s}$, with $\K_{ij}(M,M) = k(m_i,m_j)$. Throughout, we assume that $r = \rank(\Phi(M))$ is known and smaller than $\min(N,s)$. When $\mathcal{F}= \R^N$, we note that $\K(M,M) = \Phi(M)^\top \Phi(M) $ and therefore $\rank \big(\K(M,M)\big) = \rank \big(\Phi(M)\big)$. We use the term \emph{embedding} to denote a mapping to a higher-dimensional space, which may be performed using a kernel or a feature map.
In~\cite{Ongie2017}, Ongie et al. use the monomial kernel for the completion of matrices whose columns belong to an algebraic variety (a set defined by a finite number of polynomial equations). This can notably be applied to a union of subspaces. In~\cite{fan2018non}, Fan et al. use the monomial kernel and the Gaussian kernel on image inpainting problems. In Section~\ref{sec:features}, we detail why polynomial and Gaussian kernels may be used to model data which belongs to, respectively, an algebraic variety or a set of clusters.

\paragraph*{Problem formulation}
 For an appropriately chosen feature map, the nonlinear matrix recovery problem can be formulated as the rank minimization of the features matrix under the measurements constraint
 \begin{equation}
\left\lbrace
\begin{aligned}
& \underset{X\in \R^{n\times s}}{\min}
& & \rank(\Phi(X)) \\
&&&  \Acal(X) = b.
\end{aligned}
\right.
\label{eq:min_rank_phi}
\end{equation}
This seeks to find the matrix which fits the observations using a minimum number of independent features. As is the case for low-rank matrix recovery, minimizing the rank directly is NP-hard and should be avoided~\cite{Candes2009}. It is necessary to find a suitable approximation to the rank function. 

\paragraph{Related work}
\label{sec:related_work}
In essence,~\cite{fan2018non} and~\cite{Ongie2017} apply different minimization algorithms to the Schatten p-norm of the features which is defined by
\begin{equation} \label{eq:Schatten}
\left(\sum_{i=1}^{\min(N,s)} \sigma_i (\Phi(X))^p \right)^{1/p}~~\txt{ for } 0<  p \leq 1.
\end{equation}
When $p=1$, the sum of the singular values is the nuclear norm. Both~\cite{fan2018non} and~\cite{Ongie2017} use a kernel representation of the features, so that the features are never computed explicitly. In~\cite{fan2018matrix}, the authors introduce the algorithm \textsc{NLMC}, which applies a quasi-Newton method to minimize the Schatten p-norm. The Schatten p-norm for $0<p\leq 1$ is nonsmooth. This has the benefit of encouraging sparsity in the singular values, but it might prevent fast convergence near a minimizer. The algorithm \textsc{VMC}, introduced in~\cite{Ongie2017}, minimizes a smooth approximation of the Schatten p-norm. It uses a kernelized version of an iterative reweighted least-squares algorithm (IRLS). The IRLS method was originally proposed in~\cite{fornasier2011low} and~\cite{mohan2012iterative} for low-rank matrix recovery and rank minimization. The IRLS framework has the advantage that it generalizes seamlessly to the kernel setting.
 In~\cite{fan2019polynomial} a truncated version of the Schatten norm is proposed, where only the smallest singular values are minimized,
\begin{equation}
\left(\sum_{i=r+1}^{\min(N,s)} \sigma_i (\Phi(X))^p \right)^{1/p}~~\txt{ for } 0<  p \leq 1,
\end{equation}
where $r = \rank(\Phi(M))$. They use the kernel trick and propose an algorithm which alternates between truncated singular value decompositions and a step of the Adam method with an additional tuning of the stepsizes.

 In~\cite{fan2020robust}, the authors propose an extension to handle outliers in the data. This is achieved by introducing a sparse matrix in the model, which absorbs the outliers.
In~\cite{ongie2021tensor}, the authors build a tensor representation of the data and apply known matrix completion techniques in the tensor space. Their algorithm, LADMC, is a simple and efficient approach for which they are able to show that the sampling requirements nearly match the information theoretic lower bounds for recovery under a union of subspace model. This is remarkable as the sampling pattern in the tensor space is not random, and low-rank recovery results do not apply directly. Note that the approach in~\cite{ongie2021tensor} is only applicable to matrix completion problems, not matrix sensing.

In~\cite{Fan_2019_CVPR}, a new algorithm KFMC  (kernelized factorization matrix completion) is proposed, which lends itself to online completion. In this setting, the columns of the matrix $M$ are accessible as a stream and the matrix $M$ is never stored in its entirety. They also develop a variant algorithm to deal with out of samples extensions. That is, how to complete a new column without recomputing the model. The offline formulation applies the kernel trick to 
\begin{equation}
\left\lbrace
\begin{aligned}
& \min_{X,D,Z} 
& &\fronorm{\Phi(X) - \Phi(D)Z}^2 + \alpha \fronorm{\Phi(D)}^2 + \beta \fronorm{Z}^2 \\
& 
& & X_{ij} = M_{ij}, \; (i,j) \in \Omega, \\
&&&D \in \R^{n\times r}, Z\in \R^{r\times s}, X\in \R^{n\times s}.
\end{aligned}
\right.
\end{equation}
The variable $D\in \R^{n\times r}$ aims to find $r$ points in $\R^n$ so that their features will form a basis for $\Phi(M)$ in the Hilbert space. The last two terms in the objective are added as regularizers to improve the practical performances of the algorithm, as is often done in low-rank matrix completion~\cite{davenport2016overview}. 
In the method DMF, proposed in~\cite{fan2018matrix}, the embedding is replaced by a deep-structure neural network who is trained to minimize the reconstruction error for the observable entries of $M$. This work showcases the applicability and performance of nonlinear matrix completion with numerous examples including image inpainting and collaborative filtering problems. For data drawn from multiple subspaces, \cite{Fan18accelerated} proposes a sparse factorization where each subspace is represented in a low rank decomposition. They solve this model with an algorithm in the spirit of PALM~\cite{Bolte2013}. They are able to show sampling bounds to guarantee recovery.

\paragraph{Contribution and outline of the paper} 

In Section~\ref{sec:features} we describe the approach taken to recover high-rank matrices. It consists in using a feature map (or kernel) that exploits the nonlinear structure present in the matrix.
This is applied to data which follows algebraic variety models or grouped in clusters. For these, we respectively use the monomial kernel and the Gaussian kernel. We demonstrate that the Gaussian kernel can be used to perform clustering with missing data, which expands the use cases of nonlinear matrix recovery. 

In Section~\ref{sec:formulation},  we propose to use a new formulation for nonlinear matrix recovery. We use the feature map to write the recovery problem as a constrained nonconvex optimization problem on the Grassmann manifold. This extends the residual proposed in~\cite{eftekhari2019streaming} in the context of low-rank matrix completion to the nonlinear case.  

We propose to use Riemannian optimization methods to solve the recovery problem, which is new in the context of nonlinear matrix recovery. Riemannian optimization, as described in Section~\ref{sec:riemannian_opti}, provides a framework to design algorithms for problems with smooth constraints. This allows to seamlessly choose between standardized first- and second-order methods. The use of second-order methods allows to recover high-rank matrices up to high accuracy if desired. 

Section~\ref{sec:am_definition} presents an alternating minimization algorithm to solve the recovery problem. First- and second-order variants of the alternating minimization are discussed. We prove global convergence of the algorithm to first-order stationary points in Section~\ref{sec:am_convergence} and give a global complexity rate to achieve an arbitrary accuracy on the gradient norm from an arbitrary initial guess. In Section~\ref{sec:KL}, we also show convergence of the sequence of iterates to a unique limit point using the Kurdyka-Lojasiewicz property. 
Our alternating minimization method is a similar approach to the method proposed in~\cite{fan2019polynomial}. We provide extensive convergence analysis, which was not done in~\cite{fan2019polynomial}.
 
Section~\ref{sec:framework} summarizes the applications and algorithms covered in this paper with a framework to solve nonlinear matrix completion. 

We conclude with an extensive set of numerical experiments that compare the performances of the optimization approaches and the quality of the solutions that can be obtained (Section~\ref{sec:numerics}). We discuss the influence of the complexity of the data and the role of model parameters on the recovery. Moreover, we showcase that our approach is very efficient at clustering data with missing information. 
\paragraph{Notations}
Throughout the paper we use a notation consistent with~\cite{Absil2008} for the derivative of a function $f$ defined on a Riemannian manifold.  The unconstrained gradient of a function $f$ is written $\nabla f$, when the domain of $f$ is extended to an embedding Euclidean space. Conversely, we use $\grad f$ for the Riemannian gradient of $f$ defined over a Riemannian manifold. For matrices $A,B \in \Rns$, $\inner{A}{B} = \trace(A\transpose B)$ is the canonical inner product, $\range(A)$ is the column space of $A$, $\Null(A)$ is the null space of $A$. The identity matrix of size $n$ is denoted by $\I_n$ and $\Id$ is the identity operator.

\section{The feature map}
\label{sec:features}
As mentioned, our approach uses an embedding of the original matrix in a space of features, in the spirit of~\cite{fan2018non}\cite{Ongie2017}. Through the case studies below (\ref{example:variety},~\ref{example:uos} and~\ref{example:clusters}), we describe the embeddings that we use and some of the data structures to which they apply. 
\begin{assumption}
The feature map $\varphi$ is chosen such that $\Phi(M)$ is low rank. In addition, $\Phi(X)$ should be high rank if $X$ does not exhibit the same geometrical structure as $M$. 
\end{assumption}
The goal is to find an embedding that reveals the nonlinear relation between the points $m_i$, the columns of $M$. In~\cite{Ongie2017} the authors use the polynomial features for data sets represented by algebraic varieties. 

\begin{testcase}[Algebraic varieties~\cite{cox1994ideals}]
\label{example:variety}
Let $\R[x]$ be the set of real valued polynomials over $\R^n$. A real (affine) algebraic variety is defined as the zero set of a system of polynomials $P \subset \R[x]$:
\begin{equation}
V(P) = \lbrace x\in \R^{n}: p(x) = 0 \txt{ for all } p \in P\rbrace.
\end{equation}
We say that the matrix $M $ follows an algebraic variety model if every column of $M$ belongs to the same algebraic variety. 
\end{testcase}
Let 
\begin{equation}
N(n,d) = \begin{pmatrix}
n+d\\
n
\end{pmatrix}, 
\label{eq:N_features}
\end{equation}
which reads $n+d$ choose $n$, the number of monomials of degree $d$ or less that can be formed with $n$ variables. The monomial features $\varphi_d$ for some degree $d$ are defined as 
\begin{equation}
\varphi_d\colon \R^n \to \R^{N(n,d)} \colon \varphi_d(x) = \begin{pmatrix}
x^{\bm\alpha^1}\\
x^{\bm\alpha^2}\\
\vdots \\
x^{\bm\alpha^{N(n,d)}}\\
\end{pmatrix}
\label{eq:monomial-features}
\end{equation}
 where, for $i = 1, 2,\dots,N(n,d)$, the exponent $\bm{\alpha}^i = (\alpha^i_1,\alpha^i_2,\dots, \alpha^i_n)$ is a multi-index of non-negative integers; so that 
 $x^{\bm \alpha^i} := x_1^{\alpha^i_1} x_2^{\alpha^i_2}\dots x_n^{\alpha^i_n}$ and  $\alpha^i_1 + \alpha^i_2 + \dots + \alpha^i_n \leq d$.
  The dimension of the feature space $N(n,d)$ increases exponentially in $d$. Therefore, a kernel implementation is usually used in practice for moderate and large dimensions, or more precisely, whenever $s\leq N(n,d)$. The monomial kernel of degree $d$ is defined for any $X,Y\in \R^{n\times s}$ as
\begin{equation}
\K_d(X,Y) = (X^\top Y + c  \mathbf{1}_{s\times s} )^{ \odot d},
\label{eq:mono_kernel}
\end{equation}
where the value $c\in \R$ is a parameter of the kernel, $\mathbf{1}_{s\times s} $ is a square matrix of size $s$ full of ones and $\odot $ is an entry-wise exponent. If the equations describing the variety are known to be homogeneous, one can set $c=0$. Note that the monomial kernel in~\eqref{eq:mono_kernel} is not exactly the kernel associated with the monomial features in~\eqref{eq:monomial-features}. Instead, $\K_d(X,X) = \tilde{\Phi}_d(X)^\top \tilde{\Phi}_d(X)$ for a map of monomials $\tilde{\Phi}_d$ that has non-unitary coefficients given by the multinomial theorem. For $x,y\in \Rn$, we have
\begin{align}
k_d(x,y) &= (x\transpose y +c)^d = (x_1 y_1 + \cdots + x_n y_n +c)^d \\
		&= \sum_{\alpha^i_1 + \alpha^i_2 + \dots + \alpha^i_{n+1} = d} \dfrac{d!}{\alpha^i_1! \alpha^i_2! \dots  \alpha^i_{n+1} !} (x_1y_1)^{\alpha^i_1} \cdots (x_n y_n)^{\alpha^i_n} c^{\alpha^i_{n+1}}\\
		&= \sum_{\alpha^i_1 + \alpha^i_2 + \dots + \alpha^i_{n+1} = d} \left( \dfrac{\sqrt{d!}x_1^{\alpha^i_1} \cdots x_n ^{\alpha^i_n} \sqrt{c^{\alpha^i_{n+1}}}}{\sqrt{\alpha^i_1! \alpha^i_2! \dots  \alpha^i_{n+1} !}} \right)\left(  \dfrac{\sqrt{d!}y_1^{\alpha^i_1} \cdots y_n ^{\alpha^i_n} \sqrt{c^{\alpha^i_{n+1}}}}{\sqrt{\alpha^i_1! \alpha^i_2! \dots  \alpha^i_{n+1} !}}   \right).
\end{align}
It follows that $k_d(x,y) = \inner{\tilde \varphi_d(x)}{\tilde \varphi_d(y)}$ for a map $\tilde{\varphi}_d\colon \Rn \to \R^{N(n,d)}$ such that the entries of $\tilde{\varphi}_d(x)$ are of the form 
\begin{equation}
\sqrt{d!}	x_1^{\alpha^i_1} x_2^{\alpha^i_2}\dots x_n^{\alpha^i_n}\sqrt{c^{\alpha^i_{n+1}}}/\sqrt{\alpha^i_1! \cdots \alpha^i_n! \alpha^i_{n+1}!}
\end{equation} for some natural numbers $\alpha^i_1 + \alpha^i_2 + \dots + \alpha^i_{n+1} = d$. The meaningful consequence is that the kernel $\K_d$ corresponds to features $\tilde\varphi_d$ which form a basis of the set of polynomials in $n$ variables of degree at most $d$. Therefore, $\Phi_d(X)$ and $\tilde \Phi_d(X)$ have the same rank as $\K_d(X,X)$ by virtue of $\K_d(X,X) = \tilde{\Phi}_d(X)^\top \tilde{\Phi}_d(X)$.

In~\cite{Ongie2017}, the authors argue why using the monomial embedding is appropriate when the points $m_i$ belong to an algebraic variety. Suppose the variety $V(P)\subset \R^n$ is defined by the set of polynomials $P=\lbrace p_1,\dots,p_k\rbrace$ where each $p_i$ is at most of degree $d$. Then the columns of $X$ belong to the variety $V(P)$ if and only if there exists $C\in \R^{N\times k}$ such that $\Phi_d(X)^\top C=0$, where the columns of $C$ define the coefficients of the polynomials $p_i$ in the monomial basis. This implies that $\rank(\Phi_d(X)) \leq \min(N-k,s)$. This justifies that $\Phi_d(X)$ is rank deficient when there are sufficiently many data points such that $s\geq N-k$, and $X$ follows an algebraic variety model. The second case study below presents a union of subspaces as a particular type of algebraic variety.

\begin{testcase}[Union of subspaces]
\label{example:uos}
Given two affine subspaces $\mathcal{S}_1$, $\mathcal{S}_2\subset \R^n$ of dimension $r_1$ and $r_2$ respectively,  we can write $\mathcal{S}_1 =\lbrace x:q_i(x)=0 \txt{ for } i=1,\dots,n-r_1\rbrace$ 
and $\mathcal{S}_2 =\lbrace x:p_j(x)=0 \txt{ for } j=1,\dots, n-r_2\rbrace$ 
where the $q_i$ and $p_j$ are affine functions. The union $ \mathcal{S}_1 \cup \mathcal{S}_2$ can be expressed as the set where all possible products $q_i(x)p_j(x)$ vanish.  Therefore, $ \mathcal{S}_1 \cup \mathcal{S}_2$ is the solution of a system of $(n-r_1)(n-r_2)$ quadratic polynomial equations. Similarly, a union of $k$ affine subspaces of dimensions $r_1 , \dots, r_k$ is a variety described by a system of $\Pi_{i=1}^k (n-r_i)$ polynomial equations of degree $k$.
\label{test:uos}
\end{testcase}

\begin{proposition}[Rank of monomial features~\cite{Ongie2017}]
If the columns of a matrix $X\in \R^{n\times s}$ belong to a union of $p$ affine subspaces of dimension at most $\tilde r$, then for any degree $d\geq 1$, the matrix $\Phi_d(X) \in \R^{N(n,d)\times s}$ of monomial features, with $N(n,d)$ the dimension of the features space defined in equation~\eqref{eq:N_features}, satisfies 
\begin{equation}
\rank \Phi_d(X) \leq p\begin{pmatrix}
\tilde r+d\\
d
\end{pmatrix}.
\end{equation}
\label{prop:rank_phi_uos}
\end{proposition}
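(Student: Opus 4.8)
The plan is to reduce the bound to a per-subspace computation and then combine the contributions additively. Write $\mathcal{S}_1, \dots, \mathcal{S}_p$ for the affine subspaces, with $\dim \mathcal{S}_\ell = r_\ell \leq \tilde r$, and partition the columns of $X$ into groups according to the subspace they belong to (breaking ties arbitrarily if a column lies in several of them). Grouping the columns amounts to a permutation of the columns of $\Phi_d(X)$, and since $\rank \Phi_d(X) = \dim \range \Phi_d(X)$ is invariant under column permutations, it suffices to bound the dimension of the span of the feature vectors arising from each subspace and then sum these dimensions.

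The key step is a single-subspace lemma: if $x$ ranges over an affine subspace $\mathcal{S}$ of dimension $r$, then all the feature vectors $\varphi_d(x)$ lie in a fixed linear subspace of $\R^{N(n,d)}$ of dimension at most $N(r,d) = \binom{r+d}{d}$. To establish this I would parametrize $\mathcal{S} = \{\, U y + c : y \in \R^{r}\,\}$ with $U \in \R^{n\times r}$ and $c \in \R^n$ fixed. Each coordinate $x_j = (Uy+c)_j$ is an affine function of $y$, so each monomial feature $x^{\bm\alpha^i} = \prod_{j} x_j^{\alpha^i_j}$ with $|\bm\alpha^i| \le d$ becomes, after substitution, a polynomial in $y$ of degree at most $|\bm\alpha^i| \le d$. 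Consequently every entry of $\varphi_d(Uy+c)$ is a real linear combination of the monomials $y^{\bm\beta}$ with $|\bm\beta| \le d$; collecting these coefficients into a matrix $B \in \R^{N(n,d)\times N(r,d)}$ depending only on $(U,c)$ yields $\varphi_d(Uy+c) = B\,\varphi_d^{(r)}(y)$, where $\varphi_d^{(r)}$ denotes the degree-$\le d$ monomial map in $r$ variables. Hence every such feature vector lies in $\range(B)$, whose dimension is at most the number of columns $N(r,d) = \binom{r+d}{d}$, and since $r \le \tilde r$ this is bounded by $\binom{\tilde r+d}{d}$.

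With the lemma in hand the proposition follows by subadditivity of the dimension of a sum of subspaces. Writing $B_\ell$ for the matrix attached to $\mathcal{S}_\ell$, every column of $\Phi_d(X)$ lies in some $\range(B_\ell)$, so $\range \Phi_d(X) \subseteq \range(B_1) + \dots + \range(B_p)$, and therefore $\rank \Phi_d(X) = \dim \range \Phi_d(X) \le \sum_{\ell=1}^{p} \dim \range(B_\ell) \le p\,\binom{\tilde r+d}{d}$.

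I expect the only genuine obstacle to be making the substitution step fully rigorous, namely verifying that the degree of each feature does not increase under the affine change of variables and packaging this as a single linear map $B$ that is independent of the point $x$. Everything else (the column partition, the range containment, and the dimension count) is routine linear algebra. A minor point to state carefully is the monomial count: the dimension of the space of polynomials of degree at most $d$ in $r$ variables equals $\binom{r+d}{d}$, which is precisely $N(r,d)$ from \eqref{eq:N_features} and is monotone in $r$, so that the worst case is attained when $r = \tilde r$.
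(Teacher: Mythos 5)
Your proof is correct and complete. The paper itself gives no proof of this proposition --- it is imported from Ongie et al.~\cite{Ongie2017} with a citation --- and your argument (parametrize each affine subspace as $Uy+c$, observe that the substituted features factor as $\varphi_d(Uy+c)=B\,\varphi_d^{(r)}(y)$ through the degree-$\le d$ monomial map in $r\le\tilde r$ variables, then bound the column space of $\Phi_d(X)$ by the sum of the ranges $\range(B_1)+\dots+\range(B_p)$) is precisely the standard per-subspace factorization argument used in that reference, so nothing further is needed.
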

In practice, choosing the degree $d$ of the monomial kernel is a tricky task. In Section~\ref{sec:numerics}, we discuss the practical choice of this degree and how it impacts the rank of the feature matrix and the possibility to recover $M$. Previous works using the monomials kernel to recover high-rank matrices all restricted themselves to degrees two or three~\cite{fan2018non, Ongie2017}. Using a polynomial embedding of large degree would seem helpful to capture all the nonlinearity in some data sets. Unfortunately, increasing the degree will grow the dimensions of the optimization problem exponentially. Indeed, the dimension $N(n,d)$ blows up with $d$ for even moderate values of $n$ and the number of data points required is at least $N(n,d)-k$ where $k$ is the number of polynomial equations that define the variety.

We now define the Gaussian kernel which will be used when the columns of the matrix $M$ are grouped in several clusters. 
\begin{testcase}[Clusters]
\label{example:clusters}
 For $X, Y\in \R^{n\times s}$, the entry $(i,j)$ of the Gaussian kernel $\K^G: \R^{n\times s} \times \R^{n\times s} \to \R^{s\times s}$ is defined as
\begin{equation}
\K_{ij}^G(X,Y) = e^{ -\dfrac{\|x_i - y_j\|^2_2}{2\sigma^2}},
\label{eq:gaussian_kernel}
\end{equation}
where $\sigma>0$ is the width of the kernel. The Gaussian kernel acts as a proximity measure. For two columns of $X$, labelled $x_i$ and $x_j$, we observe that $x_i$ being close to $x_j$ gives $\K^G_{ij}(X,X) \approx 1$ and if $x_i$ is far from $x_j$ then $\K^G_{ij}(X,X) \approx 0$. Therefore, the rank of the Gaussian kernel approximately coincides with the number of clusters in $X$. More precisely, one can show that the singular values whose index exceeds the number of clusters decay rapidly~\cite{Singer2006}. The value of $\sigma$ should be chosen appropriately depending on the size of the clusters.
\end{testcase}
In Figure~\ref{fig:clusters-demo}, we present a small data set of 100 data points divided in two clusters in $\R^2$ with the singular values of the Gaussian kernel (in log-scale). We see that the two largest singular values are much greater than the third one, and that the following singular values decrease at an approximately exponential rate. Therefore, the Gaussian kernel is near a low rank matrix for clustered data, which will allow us to complete such data sets from partial measurements.
In~\cite{fan2018matrix}, the Gaussian kernel was also used effectively on image inpainting and denoising problems. 
\begin{figure}[!htb]
\centering
\subfigure{
\includegraphics[width = 0.45\textwidth]{./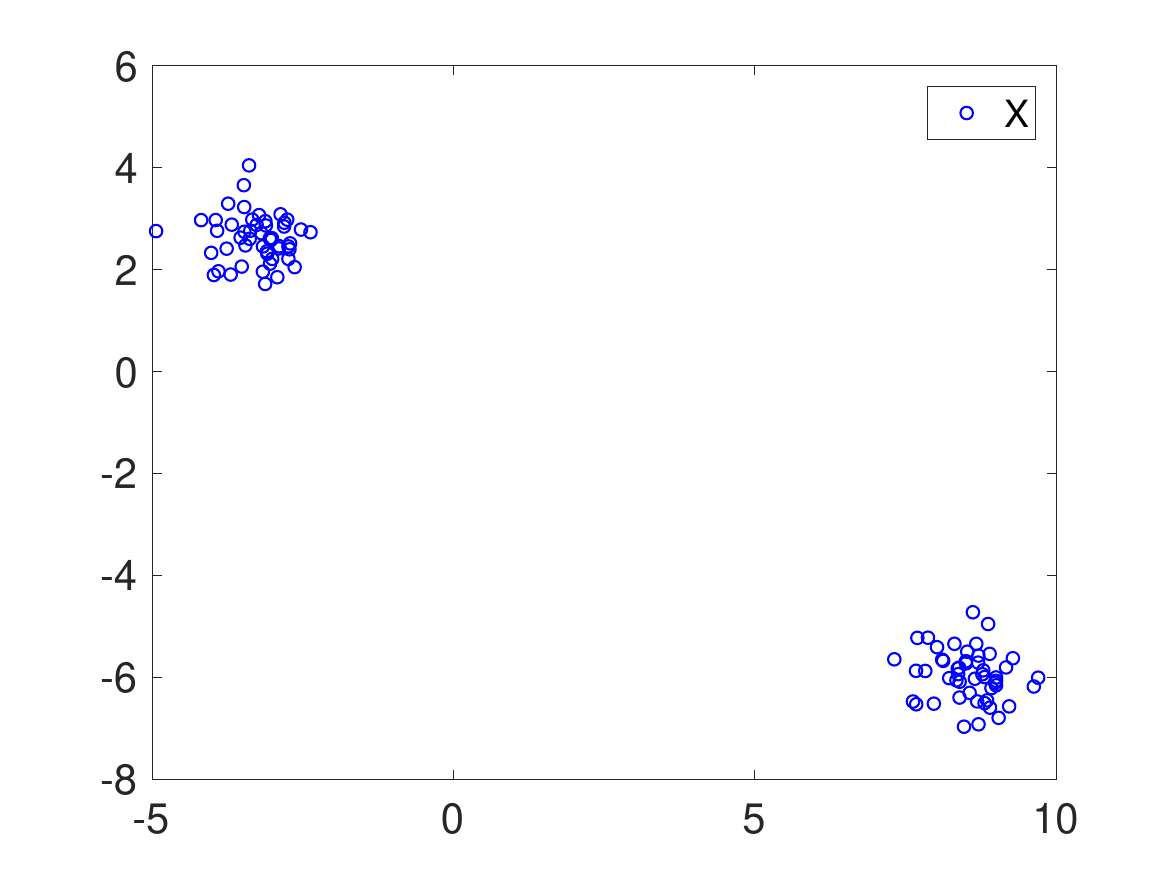}
}
\quad
\subfigure{
\includegraphics[width = 0.45\textwidth]{./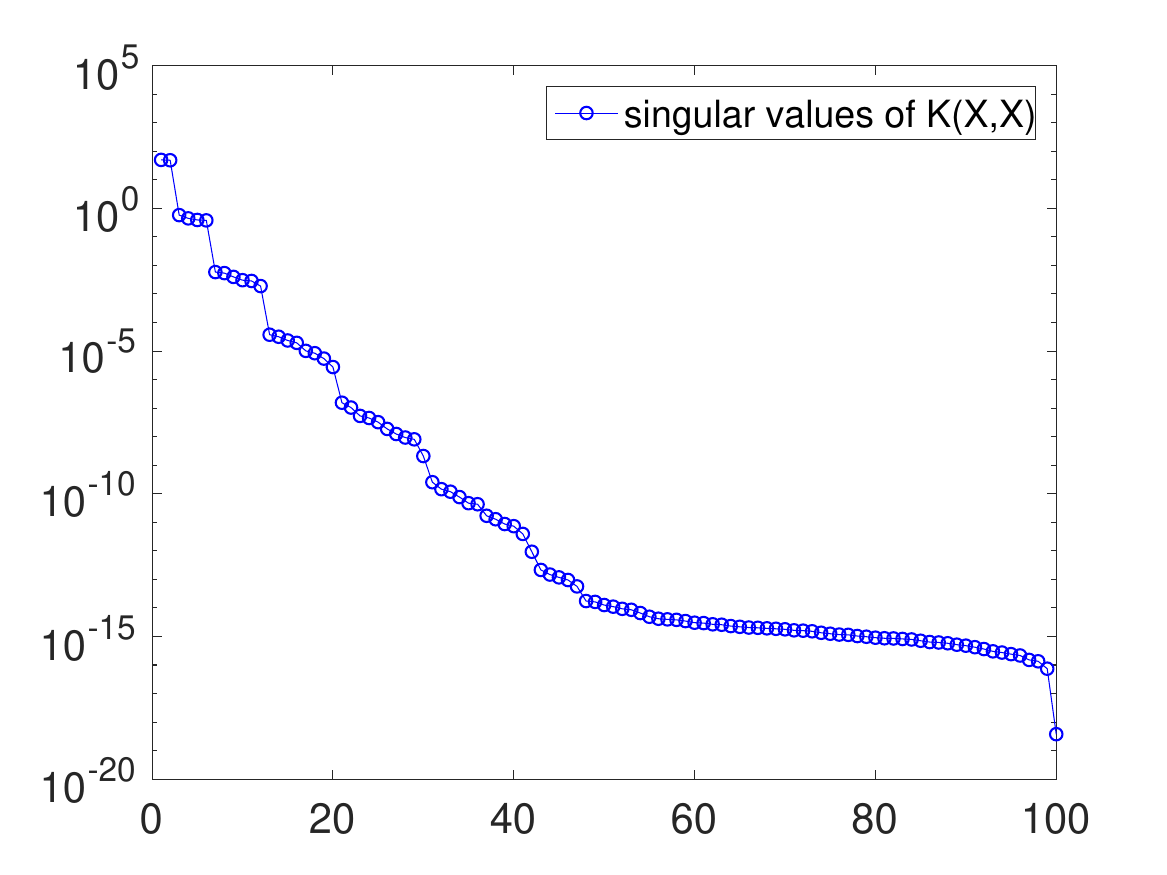}
}
\caption{Clustered data and the singular values of the Gaussian kernel in log-scale.}
\label{fig:clusters-demo}
\end{figure}

\FloatBarrier

\section{Nonlinear matrix recovery as an optimization problem}
\label{sec:formulation}
\paragraph{Noiseless measurements case}
Considering a noiseless measurements case, we would like to minimize the rank of the feature matrix, as in Equation~\eqref{eq:min_rank_phi}. This is unfortunately known to be intractable, even in the case where $M$ is low-rank~\cite{fazel2004rank}. We have to resort to approximations of this problem. The second difficulty is the nonlinearity of $\Phi$.

As a nonconvex approximation to~\eqref{eq:min_rank_phi}, we consider the formulation in~\cite{eftekhari2019streaming} for low-rank problems, and extend it to the nonlinear case. Assuming that $\Phi(M)$ has rank $r$ leads to the following formulation
\begin{equation}
\left\lbrace
\begin{aligned}
& \underset{X,\U}{\min}
& & f(X,\U) := \fronorm{ \Phi(X) - \P_{\U}\Phi(X)}^2 \\
& 
& & \U\in \Grass(N,r) \\
&&& \mathcal{A}(X)=b,
\end{aligned}
\right.
\label{eq:p1}
\end{equation}
where  $\Grass(N,r)$ is the Grassmann manifold, the set of all subspaces of dimension $r$ in $\R^{N}$, $\P_{\U}$ is the orthogonal projection on the subspace $\U$ and $\fronorm{.}$ denotes the Frobenius norm. The linear measurements, $\A(X)=b$, are defined in equation~\eqref{eq:measurements}.  Given $U\in \R^{N\times r}$ such that $\range(U) = \U$ and $U^\top U=I_r$, the projection is given by $\Prm_\U = UU^\top$. In~\eqref{eq:p1} ,the objective function is expected to be nonconvex but smooth for practical choices of $\varphi$, such as case study~\ref{example:variety},~\ref{example:clusters}. 
If the variable $\U$ is additionally constrained to be the range of the $r$ leading singular vectors of $\Phi(X)$, the cost function becomes 
$\sum_{i = r+1}^{\min(N,s)} \sigma_i(\Phi(X))^2$. The advantage of the formulation in \eqref{eq:p1} is that is it straightforward to express it as a finite sum of $s$ terms over all the data points. This allows to use stochastic sub-sampling algorithms that scale better to matrices with many columns (large $s$). 

The advantage of using the Grassmann manifold, which is a quotient space, instead of the Stiefel manifold of orthogonal matrices $\St(N,r) := \lbrace U\in \R^{N\times r}: U^\top U = \I_r\rbrace$ is that due to the invariance of the cost function with respect to the matrix that represents the subspace $\U$, local optimizers cannot possibly be isolated in a formulation over $\St(N,r)$. Therefore, the fast local convergence rates of some second-order algorithms might not apply on $\St(N,r)$, while they would apply on the quotient manifold.

Consider $U\p$ a basis of $\U\p$, the orthogonal complement of $\U$ in $\R^N$. The variable $\U\p\in \Grass(N,N-r)$ has a nice interpretation since $U\p$ spans $\Null(\Phi(X)^\top)$ when $f(X,\U) =0$. In the case of algebraic varieties (case study~\ref{example:variety}), $U\p$ gives the coefficients of the polynomials defining the variety in the basis given by $\Phi$. Recovering the equations of the variety is of interest in some applications~\cite{breiding2018learning,Goyens2020smoothing}.

\paragraph{Noisy measurements case}
When the measurements are known to be noisy, i.e.  $\A(M)=b + \eta$ for some noise $\eta\in \R^m$, the measurement constraint can be lifted into the cost function as a penalty.  This gives
\begin{equation}
\left\lbrace
\begin{aligned}
& \underset{X,\U}{\min}
& &  f_\lambda(X,\U):=  \fronorm{ \Phi(X) - \P_{\U}\Phi(X)}^2  + \lambda \norm{ \mathcal{A}(X) -b}^2_2\\
& 
& & \U\in \Grass(N,r), \\
\end{aligned}
\right.
\label{eq:p1_noise}
\end{equation}
where the parameter $\lambda>0$ has to be adjusted.  This allows to satisfy the noisy measurements approximately. 

\subsection{Kernel representation of the features}
When working with a kernel instead of a feature map, as shown in~\eqref{eq:def_kernel}, we want to find a cost function equivalent to that of~\eqref{eq:p1} which uses the kernel instead of the feature map. We find that they are related in the following way.
 \begin{proposition}
Given a feature map $\Phi$ and the associated kernel $\K : X\times X \mapsto \Phi(X)^\top \Phi(X)$, for $\W \in \Grass(s,r)$ we have
\begin{equation}
 \fronorm{\Phi (X)^\top - \P_{\W}\Phi (X)^\top}^2 = \trace\Big( \K(X,X) - \P_{\W}\K(X,X) \Big).
 \label{eq:trace-kernel}
\end{equation} 
\end{proposition}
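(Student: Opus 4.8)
The plan is to reduce both sides of~\eqref{eq:trace-kernel} to a trace and then exploit that an orthogonal projector is symmetric and idempotent. First I would fix an orthonormal basis $W\in\R^{s\times r}$ of the subspace $\W$, so that $W^\top W=\I_r$ and the projector admits the explicit form $P_{\W}=WW^\top$; in particular $P_{\W}$ is symmetric and satisfies $P_{\W}^2=P_{\W}$. Writing $A:=\Phi(X)^\top\in\R^{s\times N}$, the defining relation of the kernel gives $AA^\top=\Phi(X)^\top\Phi(X)=K(X,X)$, which is the bridge between the feature-map side and the kernel side of the identity.

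Next I would expand the left-hand side using $\fronorm{B}^2=\trace(B^\top B)$ with $B=(\I_s-P_{\W})A$. Since $Q:=\I_s-P_{\W}$ is again an orthogonal projector, it is symmetric and idempotent, whence $Q^\top Q=Q^2=Q$. Therefore
\[
\fronorm{(\I_s-P_{\W})A}^2=\trace\big(A^\top Q^\top Q A\big)=\trace\big(A^\top Q A\big).
\]
Applying the cyclic invariance of the trace to move $A$ past $A^\top$ gives $\trace(A^\top Q A)=\trace(Q AA^\top)=\trace(QK(X,X))$. Substituting $Q=\I_s-P_{\W}$ and using linearity of the trace yields $\trace(K(X,X)-P_{\W}K(X,X))$, which is exactly the right-hand side.

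I do not expect a genuine obstacle here: the statement follows directly from the identity $\fronorm{B}^2=\trace(B^\top B)$, the symmetry and idempotence of orthogonal projectors, and cyclicity of the trace. The only point requiring care is the bookkeeping of dimensions and transposes — note that $P_{\W}$ acts on $\R^s$ and multiplies $\Phi(X)^\top$ on the left (as an $s\times N$ matrix), whereas in formulation~\eqref{eq:p1} the projector lives in $\R^N$ and multiplies $\Phi(X)$ on the left. It is precisely this transposition that lets us trade the $N$-dimensional projection for the $s\times s$ kernel, so that the rank-$r$ subspace may equivalently be sought in $\Grass(s,r)$ rather than $\Grass(N,r)$.
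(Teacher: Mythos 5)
Your proof is correct and essentially identical to the paper's argument: both rely on $K(X,X)=\Phi(X)^\top\Phi(X)$, the symmetry and idempotence of the orthogonal projector $\I_s-P_{\W}$, cyclic invariance of the trace, and the identity $\fronorm{B}^2=\trace(B^\top B)$; you merely run the chain of equalities from the Frobenius-norm side to the kernel side, whereas the paper proceeds in the opposite direction. No gap to report.
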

\begin{proof}
We write $\P_{\W^{ \bot}} =  \I_{s\times s} - \P_{\W}$ and find
\begin{align}
\nonumber
 \trace\Big( \P_{\W^{ \bot}}\K(X,X) \Big) &=  \trace\Big( \P_{\W^{ \bot}}\Phi(X)\transpose \Phi(X) \Big)\\\nonumber
& = \trace\Big(\Phi (X)  \P_{\W^{ \bot}}\Phi(X)^\top \Big) \\ 
 &=   \trace\Big(\Phi (X)  \left(\P_{\W^{ \bot}}\right)\transpose \P_{\W^{ \bot}}\Phi(X)\transpose \Big)\\\nonumber
&  =  \trace\Big((\P_{\W\p}\Phi(X)^\top)^\top   \P_{\W\p}\Phi(X)\transpose \Big)\\
 & = \fronorm{\P_{\W\p}\Phi(X)^\top}^2.\nonumber
\end{align}
\end{proof}
Using the kernel formula~\eqref{eq:trace-kernel} corresponds to finding a subspace $\W$ of dimension $r$ in the row space of $\Phi(X)$. When a kernel is used for the embedding, the following optimization problem is solved, 
\begin{equation}
\left\lbrace
\begin{aligned}
& \underset{\W,X}{\min}
& & f(X,\W) := \trace\big(\K(X,X) - \P_{\W} \K(X,X)\big) \\
& 
& & \W\in \Grass(s,r) \\
&&& \mathcal{A}(X)=b.
\end{aligned}
\right.
\label{eq:p1kernel}
\end{equation}
  Replacing the features $\Phi$ by the corresponding kernel becomes beneficial when the dimension $N$ of the features is larger than the number of points $s$ and when a convenient formula is available to compute the kernel and its derivatives. For example, in the case of clusters (case study~\ref{example:clusters}), the features exist implicitly in an infinite dimensional space and we use the Gaussian kernel to represent them.
  
In the upcoming sections, we usually describe the algorithms and their properties using the notation of problem~\eqref{eq:p1} with a feature map $\Phi$ and cost function $f(X,\U)$. Unless specified otherwise, the developments also apply to problem~\eqref{eq:p1kernel} and the use of a kernel. 

\section{Riemannian optimization algorithms}
\label{sec:riemannian_opti}
Riemannian optimization methods provide an elegant and efficient way to solve optimization problems with smooth nonlinear constraints. The field of Riemannian optimization has  rapidly developed over the past two decades. In particular,
Riemannian optimization methods have proved very efficient in low-rank matrix completion~\cite{boumal2015low, Vandereycken2013}. In this section, we investigate the use of Riemannian optimization methods to solve~\eqref{eq:p1} or~\eqref{eq:p1kernel}. For an overview of optimization algorithms on Riemannian manifolds, see~\cite{Absil2008,boumal2022intromanifolds}. In order to formally express~\eqref{eq:p1} as a Riemannian optimization problem, we define a notation for the affine subspace that represents the measurements on the matrix $M$,
\begin{equation}
\LAb = \{X\in \R^{n\times s}: \A(X)=b\}.
\label{eq:Lab}
\end{equation}
We form the product manifold \begin{equation}
\M = \LAb \times \Grass(N,r),
\label{eq:M}
\end{equation}
 so that Problem~\eqref{eq:p1} can be viewed as  the unconstrained minimisation of a smooth cost function defined on the manifold $\M$,
\begin{equation}
\left\lbrace
\begin{aligned}
& \underset{(X,\U)}{\min}
&  &  \fronorm{ \Phi(X) - \P_{\U}\Phi(X)}^2 \\
& & &(X,\U) \in \M.
\end{aligned}
\right.
\label{eq:p1riem}
\end{equation}
We introduce the notation $z = (X,\U) \in \M$ to denote both variables that appear in the optimization problem. Riemannian optimization algorithms are feasible methods that iteratively exploit the local geometry of the feasible set. Analogously to unconstrained optimization in Euclidean spaces, each iteration of a Riemannian optimization algorithm uses derivatives to build a model that locally approximates the cost function. This model is then fully or approximately minimized. Most commonly, the model uses the gradient and possibly the Hessian or an approximation of it. This yields respectively a first- or second-order method. Riemannian optimization follows these principles, only the model is defined on a local linearisation of the manifold, namely, the tangent space. The Riemannian gradient, written $\grad f$ is a vector belonging to the tangent space of the manifold. The Riemannian Hessian, written $\Hess f$ is a symmetric operator on that tangent space.
In Appendix~\ref{sec:appendix_derivatives}, we show how to compute the Euclidean gradient and Hessian for the cost function of~\eqref{eq:p1kernel} in the case of the kernels presented in Section~\ref{sec:features} (monomial kernel, Gaussian kernel). Then, to find their Riemannian counterparts, 
$\nabla f$ and $\nabla^2 f$ are projected onto the tangent space of $\M$ using the tools defined later on in this section. When exploring the tangent space, it is necessary to have a tool that allows to travel on the manifold in a direction prescribed by a tangent vector. This operation is called a retraction~\cite[Def. 4.1.1]{Absil2008}. 
\begin{definition}[Retraction]
A retraction on a manifold $\M$ is a smooth mapping $Retr$ from the tangent bundle $\Trm\M$ to $\M$ with the following properties. Let $\Retr_z\colon \Trm_z\M \to \M$ denote the restriction of $\Retr$ to $\Trm_z \M$. 
\begin{itemize}
\item[(i)] $\Retr_z(0_z) = z$, where $0_z$ is the zero vector in $\Trm_z\M$;
\item[(ii)] the differential of $\Retr_z$ at $0_z$, $\D \Retr_z(0_z)$, is the identity map.
\end{itemize}
The retraction curves $t\mapsto \Retr_z(t\eta)$ agree up to first order with geodesics passing through $z$ with velocity $\eta$, around $t=0$. 
\end{definition}
Let us detail the tools necessary to use Riemannian optimization methods on the two manifolds that compose our search space $\M$. Note that the Cartesian product of two Riemannian manifolds is a Riemannian manifold. The geometry of $\LAb$ is rather trivial because the manifold is affine. It must nonetheless be implemented so we will describe how to handle this constraint in a Riemannian way. More generally, this gives a straightforward way to deal with affine equality constraints. 
\paragraph{Measurement subspace}
At any point $X\in \LAb$, the tangent space is the null space of $\A$,
\begin{equation}
\Trm_X \LAb = \Null(\A) = \{ \Delta \in \R^{n\times s}: \A(\Delta)=0\}.
\end{equation}
Since the tangent space does not depend on $X$, we write $\Trm \LAb$. This tangent space inherits an inner product from the embedding space $\R^{n\times s}$,
\begin{equation}
\langle \Delta_1 , \Delta_2 \rangle = \trace(\Delta_1^\top \Delta_2)~~\txt{for all }\Delta_1,\Delta_2 \in \Trm\LAb.
\end{equation}
The Riemannian gradient is the orthogonal projection of the Euclidean gradient onto  the tangent space $\Trm\LAb$. 
From the fundamental theorem of algebra, $\Null(\A) = \range(\A^\top)^\perp$. Therefore we can express $\Prm_{\Null(\A)} = \Id - \Prm_{\range(\A^\top)}$. 

The application $\A $ is represented by a flat matrix $A \in \R^{m \times ns}$ such that $\A(X)= AX(:)\in \R^m$, where $X(:)$ is a vector of length $ns$ made of the columns of $X$ taken from left to right and stacked on top of each other. Visually this gives,
\begin{equation}
\A(X)=\begin{pmatrix}
\langle A_1,X\rangle\\
\langle A_2,X\rangle\\
\vdots \\
\langle A_m,X\rangle\\
\end{pmatrix} = \begin{pmatrix}
A_1(:)^\top\\
A_2(:)^\top\\
\vdots\\
A_m(:)^\top\\
\end{pmatrix}
X(:) =:  AX(:).
\label{eq:Asensing}
\end{equation} 
The tall matrix $A^\top$ represents the linear application $\Acal^\top$. Viewing $A^\top$ as the matrix of an overdetermined system of linear equations convinces us that $\Prm_{\range(\A^\top)} = A^\top (AA^\top)^{-1}A$. Equivalently, if $Q \in \R^{m\times m}$ is an orthogonal basis for $\range(\A^\top)$ (which can be obtained by a reduced QR factorization of $A^\top$), we can apply $\Prm_{\range(\A^\top)} = Q Q^\top.$  The projection is given by 
\begin{equation}
 \Prm_{\Null(\A)} = \I_{ns} - A^\top(AA^\top)^{-1}A =  \I_{ns} - Q Q^\top .
 \label{eq:PnullA} 
\end{equation} 
Therefore,
\begin{equation}
\Prm_{\Trm \LAb}: \R^{n\times s} \to  \Trm \LAb: \Prm_{\Trm \LAb}(\Delta)  = (\I_{ns} - Q Q^\top) \Delta 
\label{eq:PTLAb}
\end{equation}
where $Q \in \R^{m\times m}$ is an orthogonal basis for $\range(\A^\top)$. 
In the case of matrix completion, the operator $\A$ selects the known entries of $M$. The description of the feasible subspace is simplified. We write $\calL_{\Omega,b} = \{ X: X_{ij} = M_{ij}, ij\in \Omega\}$ to make explicit that the measurements correspond to matrix completion. The tangent space is  $\Trm \mathrm{L}_{\Omega,b} = \{ \Delta: \Delta_{ij} =0 \txt{ for } ij \in \Omega\}$ and the projection onto $\Trm \mathrm{L}_{\Omega,b}$ simply amounts to setting the entries outside of $\Omega$ to zero, 
\[
\Prm_{\Trm \calL_{\Omega,b}} (\Delta) = \left\lbrace 
\begin{aligned}
&0 &~\txt{for } ij \in  \Omega\\
&\Delta_{ij} &~\txt{for } ij \notin \Omega.
\end{aligned} \right. 
\]
The natural retraction on $\LAb$ for $X\in \LAb$ and $\Delta \in \Trm \LAb$ is given by
\begin{equation}
\Retr_X: \Trm \LAb \to  \LAb: \Retr_X(\Delta) = X+\Delta,
\end{equation}
because the manifold is flat. These tools are also needed for the Grassmann manifold and we follow the representation given in~\cite{boumal2015low}.
\paragraph{Grassmann  manifold}
The Grassmann manifold, written $\Grass(N,r)$, is the set of all linear subspaces of dimension $r$ in $\R^N$. A point $\U\in \Grass(N,r)$ is represented by a full-rank matrix $U\in \R^{N\times r}$ such that $\range(U) = \U$. For any orthogonal matrix $Y\in \R^{r\times r}$, the matrix $UY$ is also a valid representation of $\U$, since $\range(UY) = \range(U)$. The set of matrices with orthonormal columns is defined as the Stiefel manifold, $\St(N,r) = \{ U \in \R^{N\times r}: U^\top U = \I_r\}$,  and the orthogonal group is defined as $O(r) = \{ Y\in \R^{r\times r}: Y^\top Y = \I_r\}$. The orthogonal group induces an equivalence relation on the Stiefel manifold, where any two matrices are equivalent if they have the same column space. In this regard, the Grassmann is a quotient space
\begin{equation}
\Grass(N,r) = \St(N,r)/\mathrm{O}(r),
\end{equation}
and each equivalence class consists of all matrices with the same span. The tangent space to the Stiefel manifold at $U\in \St(N,r)$ has the form 
\begin{align}
\Trm_U \St(N,r) = \left\lbrace U Z + U^\perp B: Z\in \mathrm{Skew}(r), B\in \R^{(N-r)\times r}\right\rbrace,
\end{align}
where $\mathrm{Skew}(r)$ is the set of skew-symmetric matrices of size $r$~\cite[Section 7.3]{boumal2022intromanifolds}. The equivalence class of $U\in \St(N,r)$—seen as a submanifold of $\St(N,r)$—has a tangent space at $U$, which is called the vertical space $\Vrm_U \St(N,r) = \left\lbrace UZ : Z\in \mathrm{Skew}(r)\right\rbrace \subseteq \Trm_U \St(N,r)$. 
The orthogonal complement of $\Vrm_U\St(N,r)$ in $\Trm_U\St(N,r)$ is called the horizontal space and is given by
\begin{equation}
\Hrm_U \St(N,r) = \left\lbrace U^\perp B: B\in \R^{(N-r)\times r}\right\rbrace = \range(U\p) \subseteq \Trm_U\St(N,r).
\end{equation}
As is common in differential geometry, the tangent space to $\Grass(N,r)$ at $\U$ is represented by the horizontal space $\Hrm_U\St(N,r)$, that is, $\Hrm_U\St(N,r) \simeq \Trm_\U \Grass(N,r)$. Any tangent vector $H_\U \in \Trm_\U \Grass(N,r)$ is represented by an horizontal vector $H_U \in \Hrm_U\St(N,r) $ called the horizontal lift of $H_\U$ at $U$. A thorough treatment of quotient manifolds, such as the Grassmann, and their usage in optimization can be found in~\cite{Absil2008,boumal2022intromanifolds}.
The projection onto the horizontal space is given by
\begin{equation}
\mathrm{Proj}_{\Hrm_U \St(N,r)}\colon \Trm_U \St(N,r) \to \Hrm_U\St(N,r)\colon H \mapsto \left(\Id - U U\transpose\right) H.
\label{eq:PTGr}
\end{equation}
The horizontal space is equipped with the usual inner product
\begin{align}
\langle H_1, H_2 \rangle_U &= \trace(H_1^\top H_2) &&\text{ for all } H_1, H_2 \in \Hrm_U  \St(N,r). 
\end{align}
The norm of a tangent vector to the Grassmann is given by the norm of its horizontal lift. Hence we understand the notation $\fronorm{H_\U}$ for $H_\U \in \Trm_\U \Grass(N,r)$ as $\fronorm{H_\U} = \fronorm{H_U}$, where $H_U$ is the horizontal lift of $H_\U$.
Let us call qf, the mapping that sends a matrix to the $Q$ factor of its (reduced) $QR$ decomposition with $Q\in \St(N,r)$ and $R$ upper triangular with positive diagonal entries. To move away from $\Ucal\in \Grass(N,r)$ in the direction $H \in \Hrm_U\St(N,r)$, we use the following retraction
\begin{equation}
\Retr_\U \colon \Hrm_U\St(N,r) \to \Grass(N,r)\colon H \mapsto \range\left(\txt{qf}(U + H)\right).
\end{equation}

We are now in a position to present the Riemannian trust-region algorithm~\cite[Ch.7]{Absil2008}. This is an extension of the classical trust-region methods~\cite{conn2000trust} to Riemannian manifolds.

\paragraph{Riemannian trust-region (RTR)}
 At each iterate, RTR builds a local model of the function.  The method sequentially minimizes this model under a ball constraint that prevents undesirably large steps where the model does not accurately represent the function. The trust-region subproblem takes the following form around $z_k \in \M$
\begin{equation}
\min_{\substack{\eta\in \Trm_{z_k}\M\\
                   \norm{\eta}_{z_k} \leq \Delta_k}} \Hat m_{z_k}(\eta):= f(z_k) + \langle \eta , \grad f(z_k)\rangle_{z_k} + \dfrac{1}{2}\langle \eta, H_k [\eta]\rangle_{z_k},
\label{eq:TRsub}                   
\end{equation}
where $H_k\colon \Trm_{z_k}\M \to \Trm_{z_k}\M$ is a symmetric operator on $\Trm_{z_k}\M$, $\Delta_k$ is the trust-region radius and the model $\hat m_{z_k}\colon \Trm_{z_k} \M \to \R$ is a quadratic approximation of the \emph{pullback} $\hat f_{z_k}  = f\circ \Retr_{z_k} $, defined on the tangent space at $z_k\in \M$. 

\paragraph{First-order Riemannian trust-region}
When the Hessian of the cost function is expensive to compute or not available altogether, one can use a first-order model and set $H_k = 0$. This method will be very similar to gradient descent. But the trust-region is used to ensure global convergence, as opposed to a line-search. 

\paragraph{Second-order Riemannian trust-region}
When the true Hessian of the cost function is available, the classical second-order trust-region method is obtained with $H_k = \Hess f(z_k)$. It is also possible to use an approximation of the true Hessian for $H_k$.

\begin{algorithm}
\caption{Riemannian trust-region (RTR)~\cite{boumal2019global}}\label{algo:RTR}
\begin{algorithmic}[1]
\State {\bf Given: } $z_0 \in \M$ and $0< \Delta_0<\bar{\Delta}$, $\varepsilon_g>0$, $\varepsilon_H>0$ and $0<\rho'< 1/4$ 
\State \textbf{Init: } $k=0$
\While{true}
\If{$\norm{\grad f(z_k)}>\varepsilon_g $}
\State Obtain $\eta_k\in \Trm_{z_k}\M$ satisfying~\aref{assu:1st-order-decrease-rtr}
\ElsIf{$\varepsilon_H <\infty$}
\If{$\lambdamin(H_k) <-\varepsilon_H$}
\State Obtain $\eta_k\in \Trm_{z_k}\M$ satisfying~\aref{assu:2nd-order-decrease-rtr}
\Else
\State  \textbf{return } $z_k$
\EndIf
\Else
\State \textbf{return }$z_k$
\EndIf
\State $z_k^+ = \Retr_{z_k}(\eta_k)$
\State $\rho = f(z_k) - f(z_k^+) / (\hat m_{z_k}(0) - \hat m_{z_k}(\eta_k))$
\If{$\rho <1/4$}
\State $\Delta_{k+1} = \Delta_k/4$
\ElsIf{ $\rho >3/4$ \textbf{ and } $\norm{\eta_k}= \Delta_k$ }
\State $\Delta_{k+1} = \min(2\Delta_k, \bar{\Delta})$
\Else
\State $\Delta_{k+1} = \Delta_k $
\EndIf
\If{$\rho >\rho'$}
\State $z_{k+1} = z^+_k$
\EndIf
\State $k =k+1$
\EndWhile
\end{algorithmic}
\end{algorithm}

We apply RTR, as described in Algorithm~\ref{algo:RTR}, to problem~\eqref{eq:p1} or~\eqref{eq:p1kernel}. If a first-order critical point is sought, set $\varepsilon_H = \infty$. The second-order version of RTR provably converges to second-order critical points for any initialization under a weak decrease condition in the subproblems and satisfies global worst-case complexity bounds matching their unconstrained counterparts, as was shown in~\cite{boumal2019global}. The local convergence rate is quadratic for an appropriate choice of parameters~\cite[Chap.7]{Absil2008}. We note that there is no guarantee on the quality of the stationary point, due to the nonconvexity. Nonetheless, we see in Section~\ref{sec:numerics} that the method performs very well in practice for nonlinear matrix recovery. We introduce the following assumptions. 
\begin{assumption}
There exists $f^*>-\infty$ such that $f(x)\geq f^*$ for all $x\in\M$.
\label{assumption:bounded_below}
\end{assumption}
The cost functions of problems~\eqref{eq:p1} and~\eqref{eq:p1kernel} are nonnegative, so~\aref{assumption:bounded_below} is satisfied throughout this paper.
We also state a regularity assumption on the gradient and Hessian of the pullback which was introduced in~\cite{boumal2019global}. In Appendix~\ref{sec:appendix_derivatives}, we detail how these conditions relate to the smoothness of the Riemannian derivatives and discuss the practicality of these assumptions for problem~\eqref{eq:p1kernel}.

\begin{assumption}[Lipschitz gradient of the pullback]
There exists $L_g\geq 0$ such that for all $z = (X,\U)\in \M$, the pullback $\hat f_z = f\circ \Retr_{z}$ has Lipschitz continuous gradient with constant $L_g$, that is, for all $\eta \in \Trm_{z}\M$, it holds that
\begin{equation}
\left| \hat f_z(\eta) - [f(z) + \inner{\eta}{\grad f(z)}]\right| \leq \dfrac{L_g}{2}\norm{\eta}^2.
\label{eq:grad-pullback-lipschitz}
\end{equation}
\label{assu:lip-gradient-tr}
\end{assumption}

\begin{assumption}[Lipschitz Hessian of the pullback]
There exists $L_H\geq 0$ such that, for all $z = (X,\U)\in \M$, the pullback $\hat f_z = f\circ \Retr_{z}$ has Lipschitz continuous Hessian with constant $L_H$, that is, for all $\eta \in \Trm_{z}\M$, it holds that
\begin{equation}
\left| \hat f_z(\eta) - \left[f(z) + \inner{\eta}{\grad f(z)} + \dfrac{1}{2}\inner{\eta}{\nabla^2 \hat f_z(0_{z})[\eta]} \right]\right| \leq \dfrac{L_H}{6}\norm{\eta}^3.
\label{eq:hess-pullback-lipschitz}
\end{equation}
\label{assu:lip-hessian-tr}
\end{assumption}
Algorithm~\ref{algo:RTR} is flexible in that it does not specify how the subproblems are solved. We discuss the implementation of RTR in Section~\ref{sec:numerics}. For the complexity results, the following decreases in the model for first- and second-order steps are required.
\begin{assumption}
There exists $c_2>0$ such that all first-order steps $\eta_k$ satisfy 
\begin{equation}
\hat m_k(0_{z_k}) - \hat m_k(\eta_k) \geq c_2 \min\left(\Delta_k, \dfrac{\varepsilon_g}{c_0}\right)\varepsilon_g.
\end{equation}
\label{assu:1st-order-decrease-rtr}
\end{assumption}

\begin{assumption}
There exists $c_3>0$ such that all second-order steps $\eta_k$ satisfy 
\begin{equation}
\hat m_k(0_{z_k}) - \hat m_k(\eta_k) \geq c_3 \Delta_k^2\varepsilon_H.
\end{equation}
\label{assu:2nd-order-decrease-rtr}
\end{assumption}

\begin{assumption}
There exists $c_0 \geq 0$ such that, for all first-order steps, $\norm{H_k}\leq c_0$ and $H_k$ is radially linear, that is, for all $ \alpha \geq 0$ and $\eta \in \Trm_{z_k} \M$, it holds $H_k[\alpha \eta] = \alpha H_k[\eta]$.
\label{assu:rtr-H-first-order}
\end{assumption}

\begin{assumption}
There exists $c_1\geq 0$ such that, for all second-order steps,
$$\left| \inner{\eta_k}{ \left(\nabla^2 \hat{f}_{z_k} (0_{z_k}) - H_k\right) [\eta_k] } \right| \leq \dfrac{c_1 \Delta_k}{3} \norm{\eta_k}^2.$$
In addition, for all second-order steps, $H_k$ is linear and symmetric. 
\label{assu:rtr-H-second-order}
\end{assumption}

Define the following constants
\begin{align}
\lambda_g &= \dfrac{1}{4}\min\left(\dfrac{1}{c_0}, \dfrac{c_2}{L_g + c_0}\right) &\text{ and } && \lambda_H &= \dfrac{3}{4}\dfrac{c_3}{L_H + c_1}.
\end{align}
The following (sharp) worst-case bound for Riemannian trust-region was recently established. 
\begin{theorem}[Global complexity of RTR~\cite{boumal2019global}]
Under~\aref{assu:lip-gradient-tr},~\aref{assu:1st-order-decrease-rtr},~\aref{assu:rtr-H-first-order} and assuming $\varepsilon_g \leq \dfrac{\Delta_0}{\lambda_g}$, Algorithm~\ref{algo:RTR} produces an iterate $z_{N_1}$ satisfying $\norm{\grad f(z_{N_1})} \leq \varepsilon_g$ with 
\begin{equation}
N_1 \leq \bigO(1/\varepsilon^2_g).
\end{equation}
Furthermore, if $\varepsilon_H <\infty$ then under additionally~\aref{assu:lip-hessian-tr},~\aref{assu:2nd-order-decrease-rtr},~\aref{assu:rtr-H-second-order} and assuming $\varepsilon_g \leq \dfrac{c_2}{c_3}\dfrac{\lambda_H}{\lambda_g^2}$ and $\varepsilon_H \leq \dfrac{c_2}{c_3\lambda_g}$, Algorithm~\ref{algo:RTR} also produces an iterate $z_{N_2}$ satisfying $\grad f(z_{N_2})\leq \varepsilon_g$ and $\lambdamin(H_{N_2}) \geq - \varepsilon_H$ with 
\begin{equation}
N_1\leq N_2 \leq \bigO\left( \dfrac{1}{\varepsilon^2\varepsilon_H}\right).
\end{equation} 
\end{theorem}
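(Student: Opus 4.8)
\noindent
The plan is to follow the classical trust-region complexity analysis, adapted to the Riemannian setting through the pullback regularity in \aref{assu:lip-gradient-tr} and \aref{assu:lip-hessian-tr}. I would begin by partitioning the iterations into \emph{successful} ones (those with $\rho \geq \rho'$, for which $z_{k+1} = z_k^+$) and \emph{unsuccessful} ones (those with $\rho < \rho'$, for which the radius is divided by $4$ and the iterate is left unchanged). Since unsuccessful steps do not alter $f$ and successful steps strictly decrease it, the sequence $f(z_k)$ is nonincreasing and the total available decrease is capped by $f(z_0) - f^*$ through \aref{assumption:bounded_below}. The two workhorses are the guaranteed model decrease (\aref{assu:1st-order-decrease-rtr} for first-order steps, \aref{assu:2nd-order-decrease-rtr} for second-order steps) and an upper bound on the model mismatch $|f(z_k^+) - \hat m_k(\eta_k)|$. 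The relation $N_1 \leq N_2$ is then immediate, since the algorithm only tests the curvature condition once the gradient tolerance has been met.

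\noindent
For the first-order bound, note that while $k < N_1$ we have $\norm{\grad f(z_k)} > \varepsilon_g$, so a first-order step is taken. Combining the Lipschitz gradient estimate of \aref{assu:lip-gradient-tr} with the bound $\norm{H_k} \leq c_0$ from \aref{assu:rtr-H-first-order} yields $|f(z_k^+) - \hat m_k(\eta_k)| \leq \tfrac{1}{2}(L_g + c_0)\norm{\eta_k}^2 \leq \tfrac{1}{2}(L_g+c_0)\Delta_k^2$. Dividing by the model decrease of \aref{assu:1st-order-decrease-rtr} shows $|1 - \rho| \leq \tfrac34$, hence $\rho \geq \tfrac14 > \rho'$, as soon as $\Delta_k \leq \lambda_g \varepsilon_g$ (this is exactly the role of the definition of $\lambda_g$). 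Such an iteration is therefore successful and does not shrink the radius; since the radius can only be quartered on failures and the hypothesis $\varepsilon_g \leq \Delta_0/\lambda_g$ starts it above the threshold, I obtain the uniform lower bound $\Delta_k \geq \tfrac14\lambda_g\varepsilon_g$ for all $k < N_1$. Each successful step then buys $f(z_k) - f(z_{k+1}) \geq \rho' c_2 \min(\Delta_k, \varepsilon_g/c_0)\varepsilon_g \geq C\varepsilon_g^2$, so \aref{assumption:bounded_below} caps the number of successful iterations by $\bigO(1/\varepsilon_g^2)$. A standard geometric accounting argument---the radius lives in $[\tfrac14\lambda_g\varepsilon_g, \bar\Delta]$, shrinks by $4$ on failures and grows by at most $2$ on successes---bounds the unsuccessful iterations by a constant multiple of the successful ones plus a $\bigO(\log(1/\varepsilon_g))$ term, giving $N_1 \leq \bigO(1/\varepsilon_g^2)$.

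\noindent
The second-order bound reuses the same skeleton. When the gradient is small but $\lambdamin(H_k) < -\varepsilon_H$, a second-order step is taken; the cubic pullback estimate of \aref{assu:lip-hessian-tr} together with the Hessian-mismatch bound of \aref{assu:rtr-H-second-order} gives $|f(z_k^+) - \hat m_k(\eta_k)| \leq \tfrac16(L_H + c_1)\Delta_k^3$, which against the decrease of \aref{assu:2nd-order-decrease-rtr} forces $\rho \geq \tfrac14$ once $\Delta_k$ falls below a multiple of $\lambda_H\varepsilon_H$. The two scaling hypotheses $\varepsilon_g \leq \tfrac{c_2}{c_3}\tfrac{\lambda_H}{\lambda_g^2}$ and $\varepsilon_H \leq \tfrac{c_2}{c_3\lambda_g}$ are precisely what is needed to make the first-order threshold $\lambda_g\varepsilon_g$ the binding one, so a uniform bound $\Delta_k \geq c\,\varepsilon_g$ (for a fixed $c>0$) survives across both step types. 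A successful second-order step therefore decreases $f$ by at least $\rho' c_3\Delta_k^2\varepsilon_H \geq C'\varepsilon_g^2\varepsilon_H$; since first-order successful steps decrease $f$ by even more, every successful step buys at least a constant multiple of $\varepsilon_g^2\varepsilon_H$, and the same budget-and-accounting argument yields $N_2 \leq \bigO(1/(\varepsilon_g^2\varepsilon_H))$.

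\noindent
The main obstacle I anticipate is the bookkeeping around the trust-region radius when two step types with different success thresholds are interleaved: one must verify that the radius lower bound derived from the first-order threshold remains valid on second-order iterations, and that the balance conditions on $\varepsilon_g,\varepsilon_H$ render the two thresholds compatible---this is exactly where the stated hypotheses enter. The Riemannian aspects themselves are benign, since \aref{assu:lip-gradient-tr} and \aref{assu:lip-hessian-tr} transport the Euclidean Taylor estimates onto the pullback $\hat f_{z_k} = f \circ \Retr_{z_k}$, so beyond replacing Euclidean quantities by their tangent-space counterparts the manifold structure does not complicate the argument.
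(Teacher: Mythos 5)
You should first note that the paper contains no proof of this statement: the theorem is imported verbatim, with its assumptions and constants, from the cited reference \cite{boumal2019global}, so your proposal can only be measured against the standard proof given there. Your architecture is indeed that proof's skeleton: split iterations into successful and unsuccessful ones, play the model error against the guaranteed model decrease to force $\rho \geq 1/4$ once the radius is small, deduce a uniform radius floor, cap the total decrease by $f(z_0)-f^*$ via \aref{assumption:bounded_below}, and control failures by a geometric accounting of the radius. Your first-order half is sound: for $k < N_1$ every step is first-order, the bound $|f(z_k^+)-\hat m_{z_k}(\eta_k)| \leq \tfrac12(L_g+c_0)\norm{\eta_k}^2$ follows from \aref{assu:lip-gradient-tr} and \aref{assu:rtr-H-first-order}, the floor $\Delta_k \geq \tfrac14\lambda_g\varepsilon_g$ holds by induction using $\varepsilon_g \leq \Delta_0/\lambda_g$, and each success earns order $\varepsilon_g^2$, giving $N_1 = \bigO(1/\varepsilon_g^2)$.

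The gap is in the second-order half, exactly at the bookkeeping you yourself flag as the main obstacle. You claim the hypotheses $\varepsilon_g \leq \tfrac{c_2}{c_3}\tfrac{\lambda_H}{\lambda_g^2}$ and $\varepsilon_H \leq \tfrac{c_2}{c_3\lambda_g}$ make the first-order threshold $\lambda_g\varepsilon_g$ the binding one, so that a uniform floor $\Delta_k \geq c\,\varepsilon_g$ (fixed $c>0$) survives across both step types. This is unjustified. Success at radius $\Delta_k \leq \lambda_g\varepsilon_g$ is guaranteed only for \emph{first-order} steps; on iterations where $\norm{\grad f(z_k)} \leq \varepsilon_g$ and $\lambdamin(H_k) < -\varepsilon_H$ the algorithm takes a second-order step, whose success is guaranteed (via \aref{assu:lip-hessian-tr}, \aref{assu:2nd-order-decrease-rtr}, \aref{assu:rtr-H-second-order}) only when $\Delta_k \leq \lambda_H\varepsilon_H$. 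Both stated hypotheses are \emph{upper} bounds on $\varepsilon_g$ and $\varepsilon_H$, and neither implies $\lambda_g\varepsilon_g \leq \lambda_H\varepsilon_H$: take $\varepsilon_H \rightarrow 0$ with $\varepsilon_g$ fixed and both hypotheses still hold, yet $\lambda_H\varepsilon_H \ll \lambda_g\varepsilon_g$. In that regime a run of unsuccessful second-order steps can legitimately drive the radius down to order $\lambda_H\varepsilon_H/4$, far below $c\,\varepsilon_g$, so the first link of your chain (uniform $\Delta_k \geq c\varepsilon_g$ $\Rightarrow$ every success earns $C'\varepsilon_g^2\varepsilon_H$ $\Rightarrow$ $N_2 = \bigO(1/(\varepsilon_g^2\varepsilon_H))$) breaks. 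The correct uniform floor is $\Delta_k \geq \tfrac14\min\bigl(\Delta_0,\lambda_g\varepsilon_g,\lambda_H\varepsilon_H\bigr)$, and the two hypotheses serve a different purpose than the one you assign them: they calibrate the guaranteed decreases of the two step types against each other, namely $\varepsilon_H \leq \tfrac{c_2}{c_3\lambda_g}$ ensures a successful first-order step at radius of order $\lambda_g\varepsilon_g$ earns at least the order $c_3\lambda_g^2\varepsilon_g^2\varepsilon_H$ target, while $\varepsilon_g \leq \tfrac{c_2}{c_3}\tfrac{\lambda_H}{\lambda_g^2}$ ensures a first-order success still earns that much even after the radius has been dragged down to order $\lambda_H\varepsilon_H$. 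You would need to redo the budget argument with the $\min$-floor and this per-step-type accounting (as the cited reference does) rather than with a single $\varepsilon_g$-scale floor.
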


\section{An alternating minimization algorithm}
\label{sec:am_definition}

In this section, we propose an alternating minimization algorithm to solve~\eqref{eq:p1} or~\eqref{eq:p1kernel} (Algorithm~\ref{algo:am}). This comes from the natural separation of the variables into two blocks $X$ and $\U$, yielding two distinct minimization subproblems. Alternating minimization type methods have been very popular in recent years to solve large-scale nonconvex problems~\cite{Bolte2013,wen2012solving}. This is due to their good practical performances and ease of implementation, as often one or both of the subproblems have a closed form solution. Strictly speaking, this is still a Riemannian optimization approach, as all iterates will be feasible for the constraints. But this section describes a two-block coordinate minimization, whereas the previous section was considering full block variants. 

We set the initial guess $X_0$ as any solution of the underdetermined linear system $\mathcal{A}(X)=b$ and $\U_0$ as the span of the $r$ leading singular vectors of $\Phi(X_0)$. The framework is as follows, for $k \geq 0$:\nl
With $\U_{k}$ fixed, solve
\begin{equation}
X_{k+1} = \left\lbrace
\begin{aligned}
& \underset{X\in \R^{n \times s}}{\argmin}
& & \fronorm{\Phi (X) - \P_{\U_{k}}\Phi (X)}^2 \\
& 
& & \mathcal{A}(X)=b.
\end{aligned}
\right.
\label{eq:s1}
\end{equation}
With $X_{k+1}$ fixed, solve
\begin{equation}
\U_{k+1} = \left\lbrace
\begin{aligned}
& \underset{\U}{\argmin}
& & \fronorm{\Phi (X_{k+1}) - \P_{\U}\Phi (X_{k+1})}^2 \\
& 
& &\U\in \Grass(N,r).
\end{aligned}
\right.
\label{eq:s2}
\end{equation}
\paragraph{}
This separation of the variables takes advantage of the fact that problem~\eqref{eq:s2}, even though nonconvex, is solved to global optimality by computing the $r$ leading left singular vectors of the matrix $\Phi(X_{k+1})$. The result is a consequence of the celebrated Eckart-Young-Mirsky theorem, which gives the best rank $r$ approximation in Frobenius norm of a matrix by the $r$ leading terms of the singular value decomposition~\cite{eckart1936approximation, mirsky1960symmetric}. In particular, let $\Phi(X_k) = \sum_{i=1}^{\min(N,s)} \sigma_i u_i v_i^\top$, then 
\begin{equation} 
\U_{k+1} = \txt{span}(u_1, \dots, u_r)
\label{eq:truncate_svd}
\end{equation}
 is a global minimizer of~\eqref{eq:s2}\footnote{Note that the solution need not be unique, in the case where $\sigma_r = \sigma_{r+1}$.}. This truncated singular value decomposition (SVD) is denoted by \verb=truncate_svd= in Algorithm~\ref{algo:am}. The singular vectors can be computed to high accuracy in polynomial time~\cite{trefethen1997numerical}. 
\paragraph{} 
 Problem~\eqref{eq:s1} is in general hard to solve to global optimality.  The difficulty comes from the nonconvexity of the cost function, which is due to $\Phi$. One can choose from a variety of first-or second-order methods to find an approximate first-or second-order critical point.  We will present the merits of both possibilities in Section~\ref{sec:numerics}. 
\paragraph{i) First-order version of alternating minimization} When only gradient information is available, a first-order method will be used to minimize subproblem~\eqref{eq:s1}. For the sake of illustration, in Algorithm~\ref{algo:am} we present a projected gradient descent with line search for~\eqref{eq:s1}. The gradient of the cost function is projected onto the null space of $\A$. This ensures that the iterates remain in the feasible set $\LAb$. The line search is a classical backtracking with an Armijo condition for sufficient decrease. Variants in the line search or even constant step sizes are possible. 
\paragraph{ii) Second-order version of alternating minimization}
In the subproblem~\eqref{eq:s1}, it is possible to use second-order methods to speed up the convergence and reach a higher accuracy. We apply RTR on the affine manifold $\LAb$ with the Hessian of the cost function $\nabla_{XX}^2 f(X_k,\U_k)$ in the model. 
\paragraph*{}	Algorithm~\ref{algo:am} details the alternating minimization where gradient descent with an Armijo line search, a standard inexact procedure in nonconvex optimization, is applied in the subproblem~\eqref{eq:s1}. The  Armijo line search is described in Algorithm~\ref{algoLS}. 

\begin{algorithm}
\caption{Alternating minimization scheme for Problem~\eqref{eq:p1} or~\eqref{eq:p1kernel}}\label{algo:am}
\begin{algorithmic}[1]
\State \textbf{Given:} The sensing matrix $A\in \R^{m\times ns}$, measurements $b\in \R^m$, tolerances $\varepsilon_u \geq 0, \varepsilon_x\geq 0$, an estimation of $r = \rank(\Phi(M))$.
\State Set $k=0$
\State Find $X_0$ that satisfies $AX_0 = b$
\State $U_{0} = \verb= truncate_svd= (\Phi(X_0))$ \Comment{ Equation~\eqref{eq:truncate_svd}} 
\While{$ \fronorm{\grad_X f(X_{k},\U_k) } > \varepsilon_x$ or $ \fronorm{\grad_\U f(X_{k},\U_k)} > \varepsilon_u$}
\State Set $X_k^{(0)} = X_k, i=0$
\State Choose $\varepsilon_{x,k}$ using Equation~\eqref{eq:greedy_decrease} or~\eqref{eq:adaptive_decrease}
\While{$ \|\grad_X f(X_{k}^{(i)},\U_{k}) \| > \varepsilon_{x,k}$}
\State $ \grad_X f(X_{k}^{(i)},\U_{k}) =  \Prm_{\Trm \LAb}\Big(\nabla_X f(X_{k}^{(i)},\U_{k}) \Big)$ \Comment{Equation~\eqref{eq:PTLAb}}
\State $ \alpha_k^{(i)} = $ Armijo$\left((X_{k}^{(i)},\U_{k}), - \grad_X f(X_{k}^{(i)},\U_{k})\right)$ \Comment{Algorithm~\ref{algoLS} }
\State  $X_{k}^{(i+1)} = X_k^{(i)} - \alpha_k^{(i)} \grad_X f(X_{k}^{(i)},\U_{k})$
\State $i =i+1$
\EndWhile
\State $X_{k+1} = X_k^{(i)}$
\If{$ \|\grad_\U f(X_{k+1},\U_k) \| \leq \varepsilon_{u}$}
\State $U_{k+1} = U_k$
\Else
\State $U_{k+1} = \verb= truncate_svd=(\Phi(X_{k+1}))$ 
\EndIf
\State $k = k +1$
\EndWhile
\State {\bf Output:} $(X_k,\U_k)$ such that $\|\grad f(X_k,\U_k)\| \leq \varepsilon_u + \varepsilon_x$.
\end{algorithmic}
\end{algorithm}

\begin{algorithm}
\caption{Armijo($z_k$, $d_k$): Line search with Armijo condition}\label{algoLS}
\vskip1mm
\textit{{\bf INPUT}: Function $f$ and gradient $\grad_X f$, current iterate $(X_{k}^{(i)},\U_{k})$ and a descent direction $d_k$ such that $\langle \grad_X  f(X_{k}^{(i)},\U_{k}),d_k \rangle < 0$, a sufficient decrease coefficient $\beta\in ]0,1[$, initial step $\alpha_0>0$ and $\tau \in ]0,1[$. }
\vskip1mm
\textit{{\bf OUTPUT}: Step size $\alpha_k^{(i)}$. }
\vskip1mm
\hrule
\vskip2mm
\begin{algorithmic}[1]
\State Set $\alpha=\alpha_0$.
\While{$  f(X_k^{(i)} + \alpha d_k,\U_k)  > f(X_{k}^{(i)},\U_{k}) + \beta \alpha \langle  \grad_X  f(X_{k}^{(i)},\U_{k}),d_k \rangle $ }
\State $\alpha = \tau \alpha$.
\EndWhile
\State Set $\alpha_k^{(i)} = \alpha$.
\end{algorithmic}
\end{algorithm}

\paragraph*{iii) Accuracy of the subproblems solution}
Algorithm~\eqref{algo:am} alternatively solves the subproblems~\eqref{eq:s1} and~\eqref{eq:s2}. For the solution of~\eqref{eq:s1}, there is no incentive to solve it to very high accuracy early on in the run of the algorithm, as we could still be far from convergence and the variable $\U$ might still change a lot. At iterations $k$, we use the following stopping criterion for some $\varepsilon_{x,k}>0$,

\begin{equation}
\fronorm{\grad_X f(X_{k+1},\U_{k})} \leq \varepsilon_{x,k}.
\end{equation}
We propose the two following strategies for the choice of $\varepsilon_{x,k}$,
 \begin{equation}
\varepsilon_{x,k} = \varepsilon_x \txt{ for all } k, \label{eq:greedy_decrease}
\end{equation}
or
\begin{equation}
	\varepsilon_{x,k} = \max\left(\varepsilon_x,  \theta \fronorm{\grad_X f(X_k, \U_{k})} \right) \txt{ for some user-chosen } 0<\theta<1.\label{eq:adaptive_decrease}
\end{equation}			

To solve~\eqref{eq:s2}, it is possible to use a randomized SVD procedure. The randomized SVD is a stochastic algorithm that approximately computes the singular value decomposition of a matrix that exhibits a low-rank pattern~\cite{halko2011finding}. The matrix must be of low rank or have a fast decay in its singular values for the random SVD to be accurate. As the iterates $X_k$ converge towards the solution $M$, the matrix $\Phi(X_k)$, for which we have to compute  an SVD, becomes low-rank and therefore it is natural to use a randomized SVD in Algorithm~\ref{algo:am}. In the first iterations, for a random starting point of the algorithm, the feature matrix is not expected to be low-rank. In those case, the random SVD should not be used. When the matrix $\Phi(X_k)$ is only approximately low-rank, we can apply power iterations to make the singular values decrease faster. This will make the randomized decomposition more costly, but will improve the accuracy of the computed SVD. 

Our strategy is as follows, choose two parameters $0<\tau_1\ll\tau_2<1$. As long as $f(X_{k+1},\U_k) > \tau_2$, use an exact SVD algorithm, without randomization. When $\tau_1 < f(X_{k+1},\U_k)\leq  \tau_2$ we know that the energy of $\Phi(X_{k+1})$ is mostly contained in the  span of $\U_k$ which has dimension $r$.  So we are justified in using a randomized $\SVD$, which we start up with a step of the power method to improve the accuracy. When $f(X_{k+1},\U_k)\leq \tau_1$, the matrix $\Phi(X_{k+1})$ is even closer to low-rank and we no longer need to apply a power iteration before computing the randomized SVD.

\FloatBarrier

\section{Convergence of the alternating minimization algorithm}
\label{sec:am_convergence}
In this section we present convergence results for the alternating minimization Algorithm~\ref{algo:am}. We consider a first-order version where subproblem~\eqref{eq:s1} is minimized with the gradient descent method and the Armijo backtracking line-search (Algorithm~\ref{algoLS}).
We will first show asymptotic convergence of the gradient norms to zero. We also give a worst-case global complexity bound on the number of iterations necessary to achieve a small gradient from an arbitrary initial starting point. Note that we chose the Armijo linesearch for the sake of example. Minor adjustments of the proof below allow to prove similar results for other minimization methods in subproblem~\eqref{eq:s1}.  

\subsection{Assumptions}
\begin{assumption}
There exist constants $L_x$ and $L_u$ (which are both independent of $X$ and $\U$) such that for all $z=(X, \U) \in \M$, the pullback $\hat f_z = f\circ \Retr_z$ has a Lipschitz continuous gradient in $X$ and $\U$, with constants $L_x$ and $L_u$ respectively. That is, for all $(\eta_x,\eta_u) \in  \Trm_{(X,\U)}\M$,
\begin{equation}
\left| f\circ \Retr(\eta_x,0) - [f(X,\U) + \langle \grad_X f(X,\U), \eta_x \rangle] \right| \leq \dfrac{L_x}{2} \norm{\eta_x}^2
\label{eq:lipschitz_x_am}
\end{equation}
and 
\begin{equation}
\left| f\circ \Retr(0,\eta_u) - [f(X,\U) + \langle \grad_\U f(X,\U), \eta_u \rangle] \right| \leq \dfrac{L_u}{2} \norm{\eta_u}^2.
\label{eq:lipschitz_u_am}
\end{equation}
In words, this means that, in each variable, the pullback is well approximated by its first-order Taylor approximation. 
\label{assumption:pullback_lipschitz_altmin}
\end{assumption}
\begin{remark}
Note that if~\aref{assu:lip-gradient-tr} holds, then~\aref{assumption:pullback_lipschitz_altmin} holds with $L_x = L_u = L_g$. 
\end{remark}
 Let us discuss, under which conditions on the kernel one can ensure that~\aref{assu:lip-gradient-tr} or~\aref{assumption:pullback_lipschitz_altmin} are satisfied for the cost function of~\eqref{eq:p1kernel}. The following discussion requires to use the exponential map as the retraction. The exponential map follows geodesics along the manifold in directions prescribed by tangent vectors. Using the exponential map on $\M$ is not a restriction, as the exponential map on the Grassmann manifold is computable~\cite{absil2004riemannian} and the exponential map on $\LAb$ is the identity. 

\begin{proposition}
Consider the cost function of~\eqref{eq:p1kernel} and assume that the retraction being used is the exponential map. If $\D \K(X,X)$ is Lipschitz continuous over $\LAb$, then~\eqref{eq:lipschitz_x_am} holds where $L_x$ is the Lipschitz constant of $\D \K(X,X)$.  If $\fronorm{\K(X,X)}\leq M$ for all $X\in \LAb$, condition~\eqref{eq:lipschitz_u_am} holds with $L_u = 2M$.
\label{prop:condition-kernel-grad-lip}
\end{proposition}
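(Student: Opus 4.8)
The plan is to recognize both inequalities~\eqref{eq:lipschitz_x_am} and~\eqref{eq:lipschitz_u_am} as instances of the descent lemma applied to the pullback $\hat f_z = f\circ\Exp_z$ restricted to one block at a time. Recall that the cost function is $f(X,\W) = \trace(P_{\W_\perp}K(X,X)) = \inner{P_{\W_\perp}}{K(X,X)}$, that the exponential map on the affine manifold $\LAb$ is the translation $\Exp_X(\eta_x) = X + \eta_x$, and that the Riemannian gradient in each block is the orthogonal projection of the Euclidean gradient onto the relevant tangent space, so that $\inner{\grad_X f}{\eta_x} = \inner{\nabla_X f}{\eta_x}$ for $\eta_x \in \Trm\LAb$ (and similarly for the $\W$-block, via first-order agreement of the retraction). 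For a once-differentiable $\phi$ whose derivative is $L$-Lipschitz along a geodesic, Taylor's theorem with integral remainder gives $|\phi(1) - \phi(0) - \phi'(0)| \le \tfrac12\sup_{t}|\phi''(t)| \le \tfrac{L}{2}$, and this is essentially the only analytic tool needed.

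For the $X$-block, the pullback is $\eta_x \mapsto f(X+\eta_x,\W)$. Since $\W$ is frozen, $f(\cdot,\W) = \ell\circ K$ is the composition of $X\mapsto K(X,X)$ with the fixed \emph{linear} functional $\ell = \inner{P_{\W_\perp}}{\cdot}$. Hence $\D(f(\cdot,\W))(X) = \ell\circ \D K(X,X)$, and because $\ell$ is linear its contribution to the Lipschitz modulus of the gradient is just its operator norm $\|\ell\| = \|P_{\W_\perp}\|_{\mathrm{op}} \le 1$ (an orthogonal projection). Therefore $X\mapsto \nabla_X f(X,\W)$ is Lipschitz with constant at most the Lipschitz constant $L_x$ of $\D K(X,X)$, and the descent lemma applied to $\phi(t) = f(X+t\eta_x,\W)$ yields~\eqref{eq:lipschitz_x_am}.

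For the $\W$-block, fix $X$ and write $K = K(X,X) \succeq 0$ with $\fronorm{K}\le M$. Let $t\mapsto \W(t) = \range(U(t))$ be the Grassmann geodesic with $U(0) = U$ and $U'(0) = H$ representing $\eta_u$ (so $U\transpose H = 0$). I will use the geodesic equation $U''(t) = -U(t)\big(U'(t)\transpose U'(t)\big)$, which also yields the standard facts that geodesics have constant speed $\|U'(t)\|_F = \norm{\eta_u}$ and stay horizontal, $U(t)\transpose U'(t) = 0$. Setting $h(t) = \trace(P_{\W(t)}K)$ so that $f(X,\W(t)) = \trace K - h(t)$, differentiating $P_{\W(t)} = U(t)U(t)\transpose$ twice and substituting the geodesic equation gives $P''(t) = 2U'(t)U'(t)\transpose - 2U(t)\big(U'(t)\transpose U'(t)\big)U(t)\transpose$, whence $h''(t) = 2\trace\big(U'\transpose K U'\big) - 2\trace\big((U'\transpose U')(U\transpose K U)\big)$. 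Both traces are nonnegative because $K\succeq 0$, and each is bounded by $\|K\|_{\mathrm{op}}\,\trace(U'\transpose U') = \|K\|_{\mathrm{op}}\norm{\eta_u}^2 \le M\norm{\eta_u}^2$; as $h''(t)$ is the difference of two such nonnegative quantities, $|h''(t)| \le 2M\norm{\eta_u}^2$. The Taylor remainder bound then gives $|h(1)-h(0)-h'(0)|\le M\norm{\eta_u}^2$, which is exactly~\eqref{eq:lipschitz_u_am} with $L_u = 2M$ after accounting for the sign $f = \trace K - h$.

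The main obstacle is obtaining the sharp constant $L_u = 2M$ in the second part: a naive Cauchy--Schwarz bound $|h''(t)| \le \fronorm{P''(t)}\fronorm{K}$ produces the worse factor $2\sqrt2\,M$, so it is essential to exploit the positive semidefiniteness of $K$ together with the spectral-norm estimate $\|K\|_{\mathrm{op}}\le\fronorm{K}$ rather than a Frobenius pairing. A secondary technical point, in the first part, is to fix compatible norms so that the projection factor $\|P_{\W_\perp}\|$ is genuinely bounded by $1$; once $\ell$ is treated as a unit-norm linear functional this is immediate, and the remaining manipulations are routine.
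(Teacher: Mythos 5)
Your proof is correct, and for the $\W$-block it takes a genuinely different route from the paper. The paper argues at the level of vector fields: it bounds the Riemannian Hessian of $f(X,\cdot)$, namely $\Hess f(X,\cdot)[\Delta] = -2\Prm_{\W_\perp}K(X,X)\Delta$, whose operator norm is at most $2\fronorm{K(X,X)}\le 2M$, and then invokes the two cited results of~\cite{boumal2020intromanifolds} (restated as Propositions~\ref{prop:bounded-hessian-lip-gradient} and~\ref{pro:lip-grad-lip-pullback}) to pass from a bounded Hessian to a Lipschitz gradient field, and from there to the quadratic bound on the pullback under the exponential map. You instead prove the pullback bound directly: you lift the Grassmann geodesic horizontally, use the geodesic equation $U''=-U(U'^\top U')$, compute $h''(t)$ for $h(t)=\trace\big(P_{\W(t)}K\big)$ in closed form, and bound $|h''(t)|\le 2\norm{K}_{\mathrm{op}}\norm{\eta_u}^2\le 2M\norm{\eta_u}^2$ using $K=\Phi(X)^\top\Phi(X)\succeq 0$, finishing with one-dimensional Taylor. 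Your computation is right (I checked the geodesic identities and the psd trace bounds), and it is more self-contained: no parallel transport or Lipschitz-vector-field machinery, and it makes explicit where both the constant $2M$ and the positive semidefiniteness of the Gram matrix enter; the paper's route is shorter given the cited corollaries and reuses lemmas needed elsewhere in its analysis. For the $X$-block your argument coincides with the paper's: both reduce to Lipschitz continuity of $X\mapsto \D K(X,X)^*[P_{\W_\perp}]$ on the flat manifold $\LAb$.

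One caveat on that $X$-block step: your claim that $\ell=\inner{P_{\W_\perp}}{\cdot}$ is a unit-norm functional is not immediate in the Frobenius pairing, where its norm is $\fronorm{P_{\W_\perp}}=\sqrt{s-r}$; the factor $1$ only comes out if the increments $\D K(X_1)-\D K(X_2)$ are measured in the Frobenius-to-nuclear operator norm, so that the projector enters through its spectral norm. The paper's own proof commits exactly the same imprecision (it bounds by $\norm{\D K(X_1)^*-\D K(X_2)^*}_2\,\fronorm{\Prm_{\W_\perp}}$ and then silently discards the factor $\fronorm{\Prm_{\W_\perp}}$), so you are no less rigorous than the paper here — but in both cases the advertised constant $L_x$ tacitly depends on which operator norm is attached to the Lipschitz assumption on $\D K$.
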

\begin{proof}
See Appendix~\ref{sec:appendix_derivatives}.
\end{proof}

The conditions listed in Proposition~\ref{prop:condition-kernel-grad-lip} on the kernel and its derivatives are not always satisfied or can be difficult to verify. For instance, the Gaussian kernel $\K_G$ is bounded above on $\LAb$, but the monomial kernel $\K_d$ is not for any degree $d\geq 1$.  For the Gaussian kernel, the map $\D \K_G(X)$ is always Lipschitz continuous on $\LAb$. Whereas for the monomial kernel,  the map $\D \K_d(X)$ is Lipschitz continuous for $d\leq 2$, and only locally Lipschitz continuous for $d\geq 3$.

Fortunately, the picture is much simpler if the sequence of iterates $(X_k)_{k\in \mathbb{N}}$ generated by Algorithm~\ref{algo:RTR} or~\ref{algo:am} is contained in a bounded set. This ensures that we can find Lipschitz constants such that the bounds of \aref{assu:lip-gradient-tr},~\aref{assu:lip-hessian-tr} and~\aref{assumption:pullback_lipschitz_altmin} hold at every iterate of the algorithm (and trial points if any), which is all that is needed in the convergence analysis. 

\begin{proposition}
Consider the cost function of either~\eqref{eq:p1} or~\eqref{eq:p1kernel} and apply Algorithm~\ref{algo:am} or Algorithm~\ref{algo:RTR}  with the exponential map as the retraction. If the convex hull of the sequence of iterates $(X_k)_{k\in \mathbb{N}}$ and the trial points of the algorithm is a bounded set, then~\eqref{eq:grad-pullback-lipschitz},
\eqref{eq:hess-pullback-lipschitz} and \eqref{eq:lipschitz_u_am}-\eqref{eq:lipschitz_x_am} hold at every iterate $\left(X_k,\U_k\right)_{k\in\mathbb{N}}$ and trial points of the algorithm.
\end{proposition}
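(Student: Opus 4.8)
The plan is to exploit the fact that the cost function of either~\eqref{eq:p1} or~\eqref{eq:p1kernel} is smooth ($C^3$, and in fact $C^\infty$, for the monomial and Gaussian liftings) on $\M$, together with the smoothness of the exponential map, and then to convert the Lipschitz-type bounds of~\aref{assu:lip-gradient-tr},~\aref{assu:lip-hessian-tr} and~\aref{assumption:pullback_lipschitz_altmin} into statements on a single compact set supplied by the boundedness hypothesis. First I would record that, since $\Phi$ (respectively the kernel $K(\cdot,\cdot)$) is a polynomial or a composition of exponentials and since $\U \mapsto P_\U$ is a smooth map on $\Grass(N,r)$, the cost $f$ is smooth on $\M$; the exponential map $\Exp$ on $\M$ is smooth because it is the identity on the affine factor $\LAb$ and is given by a closed-form smooth expression on $\Grass(N,r)$~\cite{absil2004riemannian}. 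Consequently the pullback $\hat f_z = f \circ \Exp_z$ is jointly smooth in the base point $z$ and the tangent vector $\eta$.

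Second, I would reduce each of the four inequalities to a bound on a derivative of the scalar function $g(t) := \hat f_z(t\eta)$ along a retraction curve. Writing the exact Taylor remainders
\begin{equation}
g(1) - g(0) - g'(0) = \int_0^1 (1-t)\, g''(t)\, \mathrm{d}t, \qquad g(1) - g(0) - g'(0) - \tfrac12 g''(0) = \int_0^1 \tfrac{(1-t)^2}{2}\, g'''(t)\, \mathrm{d}t,
\end{equation}
and using $g(0) = f(z)$, $g'(0) = \inner{\grad f(z)}{\eta}$ (because $\D\Exp_z(0_z) = \Id$) and $g''(0) = \inner{\eta}{\nabla^2 \hat f_z(0_z)[\eta]}$, the bound~\eqref{eq:grad-pullback-lipschitz} follows from any uniform estimate $|g''(t)| \le L_g \norm{\eta}^2$, and~\eqref{eq:hess-pullback-lipschitz} from $|g'''(t)| \le L_H \norm{\eta}^3$; the partial bounds~\eqref{eq:lipschitz_u_am}--\eqref{eq:lipschitz_x_am} are the same computation with $\eta$ restricted to the $X$- or $\U$-factor. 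Since $g''$ and $g'''$ are the second and third derivatives of the smooth pullback evaluated along $t \mapsto \Exp_z(t\eta)$, they are continuous in $(z,\eta,t)$, and by the homogeneity $g''(t) = \norm{\eta}^2 h''(t\norm{\eta})$ with $h$ the pullback along the unit direction $\eta/\norm{\eta}$ (and similarly at third order), the factor $\norm{\eta}^2$ (resp.\ $\norm{\eta}^3$) can be scaled out.

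Third --- and this is where the hypothesis enters --- I would produce a single compact set carrying all the points that matter. Let $\mathcal{C}$ denote the closure of the convex hull of the iterates $(X_k)_{k\in\N}$ together with the trial points; by assumption $\mathcal{C}$ is bounded, hence compact in $\R^{n\times s}$, and since $\Grass(N,r)$ is itself compact the product $\mathcal{K} := \mathcal{C} \times \Grass(N,r)$ is compact. The crucial observation is that every retraction curve used by the algorithm lies in $\mathcal{K}$: on the affine factor the exponential map is the identity, so $t \mapsto X_k + t\,\eta_x$ is the straight segment joining an iterate to a trial point and therefore stays inside the convex hull $\mathcal{C}$, while the geodesic component on $\Grass(N,r)$ trivially remains in the compact Grassmann. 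Because the continuous unit-direction derivatives attain their maxima on $\mathcal{K}$, we obtain finite constants $L_g, L_H, L_x, L_u$ for which the four inequalities hold at every iterate and trial point, as claimed.

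The main obstacle I anticipate is not any single estimate but the bookkeeping that guarantees the retraction curves --- and not merely their endpoints --- remain in one compact set on which the derivatives are uniformly bounded: this is exactly what the convex-hull-bounded hypothesis buys for the affine factor $\LAb$, whereas on the Grassmann factor compactness is automatic. A secondary point to verify carefully is that the chosen lifting renders $f$ genuinely $C^3$ (true for the monomial and Gaussian kernels) and that the closed-form exponential map on $\Grass(N,r)$ is smooth to the required order, so that $g'''$ exists and is continuous.
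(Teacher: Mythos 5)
Your proof is correct, and it rests on the same mechanism as the paper's: compactness of the closed convex hull of the $X$-iterates (times the already-compact Grassmann factor) plus continuity of the derivatives of the smooth cost. The routes differ in how that mechanism is converted into the four inequalities. The paper's proof is an appeal to cited results: by Weierstrass the Hessian and third derivative are bounded on the compact set, then Proposition~\ref{prop:bounded-hessian-lip-gradient} turns the bounded Hessian into a Lipschitz gradient field, and Proposition~\ref{pro:lip-grad-lip-pullback} turns Lipschitz continuity of the gradient into the pullback bound~\eqref{eq:grad-pullback-lipschitz} for the exponential retraction, with \eqref{eq:hess-pullback-lipschitz} and \eqref{eq:lipschitz_u_am}--\eqref{eq:lipschitz_x_am} handled ``in a similar way''. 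You instead unroll those citations by hand: you write exact integral Taylor remainders for $g(t)=\hat f_z(t\eta)$, use that along geodesics $g''(t)=\Hess f(\gamma(t))[\gamma'(t),\gamma'(t)]$ so that constant speed scales out the factors $\norm{\eta}^2$ and $\norm{\eta}^3$, and then bound these derivatives on the compact set. What your version buys, beyond self-containedness, is an explicit treatment of a localization point the paper elides: the cited propositions are stated for gradients Lipschitz on the whole manifold, whereas here the bounds are only available on a subset, so one must verify that the geodesic segments joining iterates to trial points stay inside that subset. Your observation that the exponential map is the identity on $\LAb$ --- so these segments are straight lines contained in the convex hull --- while the Grassmann factor is globally compact, supplies exactly that justification and makes transparent why the hypothesis is phrased in terms of the convex hull rather than merely the set of iterates.
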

\begin{proof}
See appendix~\ref{sec:appendix_derivatives}.
\end{proof}

\subsection{Global convergence results}
We now carry on with the convergence analysis of the alternating minimization algorithm. The next lemma is an adaptation of the classical descent lemma for the SVD step. 
\begin{lemma}[{Descent lemma based on~\cite[Theorem 4]{boumal2019global}}]
Let \aref{assumption:bounded_below} and \aref{assumption:pullback_lipschitz_altmin} hold for $f:\M \to \R$. 
Then, for any $k\geq 0$, the iterates generated by Algorithm~\ref{algo:am} satisfy
\begin{equation}
f(X_{k},\U_k) - f(X_{k+1},\U_{k+1}) \geq \dfrac{1}{2L_u}\fronorm{\grad_\mathcal{U} f(X_{k+1},\U_k)}^2,
\end{equation}
where $L_u$ is the Lipschitz constant of the gradient of the pullback~(\aref{assumption:pullback_lipschitz_altmin}).
\label{lemma:gradU_bound}
\end{lemma}
\begin{proof}
See Appendix~\ref{appendix:proofs-6}.
\end{proof}
Throughout this section we use the following notation. Let the number of gradient steps between $X_k$ and $X_{k+1}$ be $n_k \geq 0$ and the intermediate iterates,
$$X_k=X_k^{(0)},~X_k^{(1)},~X_k^{(2)},~\dots,~ X_k^{(n_k)}=X_{k+1} .$$
The next lemma gives upper and lower bounds on the step sizes given by the Armijo linesearch. This is an adaptation of a standard argument for linesearch methods~\cite{nocedal2006numerical} where the constraint $\A(X)=b$ is added. 
\begin{lemma}
Under~\aref{assumption:pullback_lipschitz_altmin}, for the direction
$-\grad_X f(X_k^{(i)},\U_k) \in T_{X_k}\LAb$, 
the linesearch Algorithm~\ref{algoLS} produces a step size $\alpha_k^{(i)}$ that satisfies 
\begin{equation}
\underline{\alpha} := \min \left\lbrace \alpha_0,   \dfrac{ 2\tau(1  - \beta)  }{L_x } \right\rbrace\leq \alpha_k^{(i)} \leq \alpha_{0}
\end{equation}
and produces the following decrease
\begin{equation}
   f(X_k^{(i)},\U_k) -  f(X_k^{(i+1)},\U_k) \geq   \beta \alpha \fronorm{\grad_X  f(X_k^{(i)},\U_k)}^2,
   \label{eq:armijo_decrease_grad}
\end{equation}
where $X_k^{(i+1)} = X_k^{(i)} - \alpha_k^{(i)} \grad_X f(X_k^{(i)},\U_k)$.
\label{lemma:step_bound}
\end{lemma}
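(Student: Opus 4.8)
The plan is to prove Lemma~\ref{lemma:step_bound} by following the classical two-part argument for backtracking linesearch (as in~\cite{nocedal2006numerical}), adapted to the fact that the search direction $d_k = -\grad_X f(X_k^{(i)},\U_k)$ lives in the tangent space $\Trm\LAb = \Null(\A)$ and the retraction on $\LAb$ is simply $X \mapsto X + \alpha d_k$, which keeps every trial point feasible. I would set $g := \grad_X f(X_k^{(i)},\U_k)$ for brevity, note that $d_k = -g$ is a descent direction since $\langle g, d_k\rangle = -\fronorm{g}^2 < 0$, and treat the two claimed inequalities separately: first the upper bound $\alpha_k^{(i)} \leq \alpha_0$, then the lower bound on $\alpha_k^{(i)}$, and finally the decrease~\eqref{eq:armijo_decrease_grad}.

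First I would establish the upper bound, which is immediate: the linesearch Algorithm~\ref{algoLS} initializes $\alpha = \alpha_0$ and only ever multiplies $\alpha$ by $\tau \in (0,1)$, so the accepted step satisfies $\alpha_k^{(i)} \leq \alpha_0$. The decrease inequality~\eqref{eq:armijo_decrease_grad} then follows directly from the Armijo acceptance condition in the loop of Algorithm~\ref{algoLS}, namely $f(X_k^{(i)} + \alpha d_k, \U_k) \leq f(X_k^{(i)},\U_k) + \beta\alpha\langle g, d_k\rangle$, upon substituting $d_k = -g$ and $\langle g, d_k\rangle = -\fronorm{g}^2$ and rearranging.

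The main work is the lower bound on the step size. If the linesearch accepts $\alpha_0$ immediately, then $\alpha_k^{(i)} = \alpha_0 \geq \underline{\alpha}$ and we are done. Otherwise the step was obtained by backtracking, so the previous trial value $\bar\alpha := \alpha_k^{(i)}/\tau$ failed the Armijo test, i.e. $f(X_k^{(i)} + \bar\alpha d_k, \U_k) > f(X_k^{(i)},\U_k) + \beta\bar\alpha\langle g, d_k\rangle$. Here is where I invoke~\aref{assumption:pullback_lipschitz_altmin}: since the exponential-map retraction on the affine manifold $\LAb$ is the identity shift and $\bar\alpha d_k \in \Trm\LAb$, the Lipschitz-gradient bound~\eqref{eq:lipschitz_x_am} applied with $\eta_x = \bar\alpha d_k$ gives $f(X_k^{(i)} + \bar\alpha d_k, \U_k) \leq f(X_k^{(i)},\U_k) + \bar\alpha\langle g, d_k\rangle + \tfrac{L_x}{2}\bar\alpha^2\fronorm{d_k}^2$. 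Combining this upper bound with the failed-Armijo lower bound and cancelling $f(X_k^{(i)},\U_k)$ yields $\beta\bar\alpha\langle g, d_k\rangle < \bar\alpha\langle g, d_k\rangle + \tfrac{L_x}{2}\bar\alpha^2\fronorm{d_k}^2$. Dividing by $\bar\alpha > 0$ and using $\langle g, d_k\rangle = -\fronorm{g}^2 = -\fronorm{d_k}^2$ rearranges to $\bar\alpha \geq \tfrac{2(1-\beta)}{L_x}$, so that $\alpha_k^{(i)} = \tau\bar\alpha \geq \tfrac{2\tau(1-\beta)}{L_x}$. Taking the minimum with $\alpha_0$ to cover both the immediate-acceptance and backtracking cases delivers $\alpha_k^{(i)} \geq \underline\alpha$, completing the proof.

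The step I expect to require the most care is the application of~\eqref{eq:lipschitz_x_am}: I must verify that the inequality is legitimately applicable at the \emph{failed} trial point $\bar\alpha d_k$ rather than only at accepted iterates, and that the constant $L_x$ is valid there. This is exactly what the boundedness hypothesis of the preceding proposition guarantees, since trial points are explicitly included in the convex hull assumed bounded. The remaining subtlety is purely bookkeeping with signs, ensuring $\langle g, d_k\rangle = -\fronorm{g}^2$ and $\fronorm{d_k}^2 = \fronorm{g}^2$ are tracked consistently so that the factor $2(1-\beta)$ emerges rather than, say, $2(1-\beta)/\beta$ or a misplaced $\beta$.
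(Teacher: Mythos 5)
Your proof is correct and takes essentially the same route as the paper: upper bound and decrease read off directly from the algorithm and the Armijo acceptance test, and the lower bound follows by applying the Lipschitz-gradient bound of \aref{assumption:pullback_lipschitz_altmin} to conclude that any rejected trial step must exceed $2(1-\beta)/L_x$, so the accepted step is at least $\tau$ times that threshold. Your phrasing as a contrapositive at the last failed trial point, versus the paper's forward statement that every $\alpha \leq 2(1-\beta)/L_x$ passes the Armijo test, is only a cosmetic difference.
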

\begin{proof}
See Appendix~\ref{appendix:proofs-6}.
\end{proof}
We are now ready to prove global convergence of the alternating minimization algorithm. 
\begin{theorem}[Global convergence for Alternating minimization]\label{thm:global_convergence}
Let \aref{assumption:pullback_lipschitz_altmin} hold for $f:\M \to \R$ from~\eqref{eq:p1} or~\eqref{eq:p1kernel}. 
Let $\varepsilon_x=0$, $\varepsilon_u=0$ and use Equation~\eqref{eq:adaptive_decrease} to set $\varepsilon_{x,k}$, for any starting point $(X_0,\U_0)\in \M$,  Algorithm~\ref{algo:am} produces a sequence $\Big(X_{k}, \U_{k}\Big)_{k\in \mathbb{N}}$ such that 
\begin{equation}
\lim_{\substack{k\to\infty }}\fronorm{ \grad f(X_{k}, \U_{k})}=0.
\label{eq:grad_to_zero}
\end{equation}
\end{theorem}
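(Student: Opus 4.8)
The plan is to exploit the product structure $\M = \LAb \times \Grass(N,r)$, on which the Riemannian gradient splits as $\grad f(X_k,\U_k) = \big(\grad_X f(X_k,\U_k),\, \grad_\U f(X_k,\U_k)\big)$ with $\norm{\grad f(X_k,\U_k)}^2 = \fronorm{\grad_X f(X_k,\U_k)}^2 + \norm{\grad_\U f(X_k,\U_k)}^2$. The first and decisive observation is that the $\U$-block vanishes identically along the iterates. Indeed, the update \eqref{eq:truncate_svd} returns, through the Eckart--Young--Mirsky theorem, a global minimizer $\U_k$ of the smooth subproblem $f(X_k,\cdot)$ on $\Grass(N,r)$, and a global minimizer is in particular a critical point, so $\grad_\U f(X_k,\U_k)=0$; in the branch where $\U_{k+1}=\U_k$ is carried over, the test $\norm{\grad_\U f(X_{k+1},\U_k)}\le \varepsilon_u = 0$ forces this gradient to be zero as well. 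Consequently $\norm{\grad f(X_k,\U_k)} = \fronorm{\grad_X f(X_k,\U_k)}$ for every $k$, and it suffices to prove $\fronorm{\grad_X f(X_k,\U_k)}\to 0$.

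I would next show that $f$ is monotonically non-increasing along the whole run. Every inner gradient step produces the decrease \eqref{eq:armijo_decrease_grad} of Lemma \ref{lemma:step_bound}, and every SVD step is non-increasing because $\U_{k+1}$ globally minimizes $f(X_{k+1},\cdot)$ (Lemma \ref{lemma:gradU_bound} is the accompanying quantitative descent estimate for this step). Listing the function values at all intermediate points $X_k^{(0)},\dots,X_k^{(n_k)}=X_{k+1}$ together with the $\U$-updates, all consecutive differences are non-negative and telescope; with the lower bound $f^*$ of Assumption \aref{assumption:bounded_below} this gives that the sum of all consecutive decreases is at most $f(X_0,\U_0)-f^* < \infty$.

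I would then isolate the first inner step of each outer iteration. With $\varepsilon_x = 0$, the adaptive rule \eqref{eq:adaptive_decrease} reads $\varepsilon_{x,k} = \theta\,\fronorm{\grad_X f(X_k,\U_k)}$ with $0<\theta<1$, so whenever $\fronorm{\grad_X f(X_k,\U_k)}>0$ the inner stopping test fails at $i=0$ and at least one gradient step is taken. Applying \eqref{eq:armijo_decrease_grad} with the uniform positive lower bound $\underline{\alpha}>0$ on the step size from Lemma \ref{lemma:step_bound} (positive because $L_x<\infty$ under Assumption \aref{assumption:pullback_lipschitz_altmin}) yields
\begin{equation*}
f(X_k,\U_k) - f(X_k^{(1)},\U_k) \ \ge\ \beta\,\underline{\alpha}\,\fronorm{\grad_X f(X_k,\U_k)}^2 .
\end{equation*}
Each of these first-step decreases is one of the non-negative terms of the telescoping sum (indices with zero gradient contribute nothing), so summing over $k$ gives $\beta\,\underline{\alpha}\sum_{k}\fronorm{\grad_X f(X_k,\U_k)}^2 \le f(X_0,\U_0)-f^* < \infty$, hence $\fronorm{\grad_X f(X_k,\U_k)}\to 0$. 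Together with the first paragraph this is precisely \eqref{eq:grad_to_zero}; and if the outer loop terminates after finitely many steps, then $\grad f(X_k,\U_k)=0$ and the conclusion is immediate.

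The main obstacle is the bookkeeping behind the telescoping argument: one must track $f$ across two qualitatively different operations, the inner Armijo loop and the outer exact-SVD update, and check that the first-inner-step decreases can be extracted as a sub-collection of the global telescoping sum without double counting. The conceptual crux is the exact solution of subproblem \eqref{eq:s2}, which forces $\grad_\U f(X_k,\U_k)=0$ and collapses convergence of the full gradient onto its $X$-block; if \eqref{eq:s2} were solved only approximately (as in the randomized-SVD variant), one would instead have to retain the term $\norm{\grad_\U f(X_{k+1},\U_k)}$ and drive it to zero through the descent estimate of Lemma \ref{lemma:gradU_bound}.
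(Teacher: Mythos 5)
Your proposal is correct and follows essentially the same route as the paper's own proof: exact SVDs force $\grad_\U f(X_k,\U_k)=0$ so the full gradient collapses onto the $X$-block, and the Armijo decrease of the first inner gradient step (Lemma~\ref{lemma:step_bound}, with the uniform step-size lower bound $\underline{\alpha}$) is chained through the non-increasing inner and SVD steps and telescoped over outer iterations to give $\sum_k \fronorm{\grad_X f(X_k,\U_k)}^2 \leq (f(X_0,\U_0)-f^*)/(\beta\underline{\alpha}) < \infty$. Your explicit treatment of the degenerate cases (an iterate with zero gradient, or finite termination) is slightly more careful than the paper's but changes nothing of substance.
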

\begin{proof}
First note that $f$ is bounded below by $f_* = 0$. For any $k \geq 0$,
\begin{align}
\norm{ \big( \grad_X f(X_{k}, \U_{k}),  \grad_\U f(X_{k}, \U_{k}) \big)} &\leq \fronorm{ \grad_X f(X_{k}, \U_{k})}+ \fronorm{ \grad_\U f(X_{k}, \U_{k})}\\
 &\leq  \fronorm{ \grad_X f(X_{k}, \U_{k})}
\label{eq:boundgradUX}
\end{align}
since $\varepsilon_u =0$. 
Given that each step is non-increasing,
\begin{align*}
f(X_{k},\U_{k})- f(X_{k+1},\U_{k+1}) &\geq f(X_{k},\U_{k})- f(X_{k+1},\U_{k})  \\
&\geq f(X_{k},\U_{k}) - f(X_{k}^{(1)},\U_{k})  \\
  &\geq \beta \alpha_k^{(0)} \fronorm{ \grad_X f(X_k, \U_k)}^2\\
  &\geq \beta \underline{\alpha} \fronorm{ \grad_X f(X_k, \U_k)}^2,
\end{align*}
where we used Lemma~\ref{lemma:step_bound} about Armijo steps.
Summing over all iterations gives a telescopic sum on the left-hand side. For any $\bar{k}\geq 0$,
\begin{equation}
f(X_0,\U_0)- f^* \geq f(X_0,\U_0)- f(X_{\bar{k}},\U_{\bar{k}})
 \geq \beta  \underline{\alpha} \sum_{k=0}^{\bar{k}} \fronorm{ \grad_X f(X_k, \U_k)}^2.\\ 
\end{equation}
The series is convergent since it is bounded independently of $\bar{k}$. Therefore 
\begin{equation}
\lim_{k\to\infty}\fronorm{\grad_X f(X_k,\U_k)}=0.
\end{equation}
We have $\norm{\grad_\U f(X_k,\U_k)}=0$ for all $k\geq 0$ since $\varepsilon_u=0$. This corresponds to taking exact SVDs. Taking $k\to \infty$ in equation~\eqref{eq:boundgradUX} gives convergence of the gradient norms to zero~\eqref{eq:grad_to_zero}. 
\end{proof}
\begin{theorem}[Global complexity for Alternating minimization]
Let \aref{assumption:pullback_lipschitz_altmin} hold for $f:\M \to \R$ from~\eqref{eq:p1} or~\eqref{eq:p1kernel}. 
Let $\varepsilon_x>0$, $\varepsilon_u>0$, and use Equation~\eqref{eq:greedy_decrease} or~\eqref{eq:adaptive_decrease} to set $\varepsilon_{x,k}$. For any starting point $z_0=(X_0,\U_0)\in \M$, Algorithm~\ref{algo:am} produces a sequence $\Big(X_{k}, \U_{k}\Big)_{k\in \mathbb{N}}$ such that 
\begin{equation}
\fronorm{\Big(\grad_X f(X_{k}, \U_{k}), \grad_\U f(X_{k}, \U_{k}) \Big)}\leq  \varepsilon_x + \varepsilon_u
\end{equation}
is achieved using at most $N_{grad}$ gradient steps and $N_{svd}$ singular value decompositions with 
\begin{align}
N_{grad}  &\leq \dfrac{(f(z_0) -f_*)}{\underline{\alpha
}\beta \varepsilon^2_x} & \text{ and } && N_{svd} &\leq \dfrac{2L_u(f(z_0) -f_*)}{\varepsilon^2_u},
\end{align}
where $\underline{\alpha}=\min\left\lbrace \alpha_0, 2\tau (1-\beta)/L_x\right\rbrace $ is a constant depending on parameters of the line search~\eqref{eq:s1}.
\label{thm:global_complexity_altmin}
\end{theorem}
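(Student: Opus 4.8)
The plan is to track the monotone decrease of $f$ along the iterates and split the total decrease $f(z_0) - f^*$ into two separate \emph{budgets}: the decrease produced by the gradient steps in the $X$-subproblem and the decrease produced by the SVD steps in the $\U$-subproblem. Since each phase only decreases $f$, I would write the run as the alternating chain $f(X_0,\U_0) \geq f(X_1,\U_0) \geq f(X_1,\U_1) \geq \cdots$ and telescope. Because the per-phase decreases are nonnegative and together sum to $f(z_0) - \lim_k f \leq f(z_0) - f^*$ (Assumption~\ref{assumption:bounded_below}), I obtain the two separate bounds $\sum_k [f(X_k,\U_k) - f(X_{k+1},\U_k)] \leq f(z_0) - f^*$ and $\sum_k [f(X_{k+1},\U_k) - f(X_{k+1},\U_{k+1})] \leq f(z_0) - f^*$.

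First I would bound $N_{grad}$. Every gradient step that is actually executed satisfies the inner-loop condition $\fronorm{\grad_X f(X_k^{(i)},\U_k)} > \varepsilon_{x,k} \geq \varepsilon_x$, where $\varepsilon_{x,k} \geq \varepsilon_x$ under either choice~\eqref{eq:greedy_decrease} or~\eqref{eq:adaptive_decrease}. Combining this with the per-step decrease of Lemma~\ref{lemma:step_bound}, $f(X_k^{(i)},\U_k) - f(X_k^{(i+1)},\U_k) \geq \beta \underline{\alpha} \fronorm{\grad_X f(X_k^{(i)},\U_k)}^2$, each executed step reduces $f$ by strictly more than $\beta \underline{\alpha}\varepsilon_x^2$. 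The inner decreases over outer iteration $k$ telescope to $f(X_k,\U_k) - f(X_{k+1},\U_k)$, so summing over all outer iterations and all inner steps gives $N_{grad}\, \beta \underline{\alpha}\,\varepsilon_x^2 \leq f(z_0) - f^*$. Substituting $\underline{\alpha} = 2\tau(1-\beta)/L_x$ (the active branch of the minimum once $\alpha_0 \geq 2\tau(1-\beta)/L_x$) yields $\beta\underline{\alpha} = 2C/L_x$ with $C = \beta\tau(1-\beta)$, hence $N_{grad} \leq L_x(f(z_0)-f^*)/(C\varepsilon_x^2)$, with a factor of $2$ to spare.

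Next I would bound $N_{svd}$. An SVD is computed at outer iteration $k$ only when $\norm{\grad_\U f(X_{k+1},\U_k)} > \varepsilon_u$. Here I use the descent inequality for the SVD step from Lemma~\ref{lemma:gradU_bound}: since $\U_{k+1}$ globally minimizes the $\U$-subproblem (Eckart-Young-Mirsky), the realized decrease satisfies $f(X_{k+1},\U_k) - f(X_{k+1},\U_{k+1}) \geq \frac{1}{2L_u}\norm{\grad_\U f(X_{k+1},\U_k)}^2 > \frac{1}{2L_u}\varepsilon_u^2$. Telescoping the $\U$-decreases as above gives $N_{svd}\,\frac{1}{2L_u}\varepsilon_u^2 \leq f(z_0)-f^*$, i.e.\ $N_{svd} \leq 2L_u(f(z_0)-f^*)/\varepsilon_u^2$. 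Finally, the algorithm stops exactly when both $\fronorm{\grad_X f} \leq \varepsilon_x$ and $\norm{\grad_\U f} \leq \varepsilon_u$, and then $\norm{(\grad_X f,\grad_\U f)} \leq \fronorm{\grad_X f} + \norm{\grad_\U f} \leq \varepsilon_x + \varepsilon_u$ by the triangle inequality; both finite budgets guarantee this occurs within the stated counts.

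I expect the main obstacle to be the bookkeeping that cleanly separates the single monotone decrease into two independent budgets, and in particular extracting the per-SVD decrease in the sharp form $\frac{1}{2L_u}\norm{\grad_\U f(X_{k+1},\U_k)}^2$ rather than the weaker $f(X_{k+1},\U_k) - f^*$: one must invoke the global optimality of the SVD step so that the realized decrease $f(X_{k+1},\U_k) - f(X_{k+1},\U_{k+1})$ is at least the descent-lemma quantity. A secondary care point is tracking the loop condition $\fronorm{\grad_X f} > \varepsilon_{x,k} \geq \varepsilon_x$ (not merely $> \varepsilon_x$), which is precisely what makes every executed gradient step contribute the uniform decrease $\beta\underline{\alpha}\varepsilon_x^2$.
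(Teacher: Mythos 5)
Your proof is correct and follows essentially the same route as the paper's: the gradient-step budget comes from the Armijo decrease of Lemma~\ref{lemma:step_bound} combined with the inner-loop threshold $\varepsilon_{x,k}\geq\varepsilon_x$, the SVD budget comes from Lemma~\ref{lemma:gradU_bound}, and both are closed by telescoping the monotone decrease against $f(z_0)-f_*$. If anything, your explicit invocation of the global optimality of the truncated SVD to turn the descent-lemma quantity into a realized per-SVD decrease $f(X_{k+1},\U_k)-f(X_{k+1},\U_{k+1})\geq \tfrac{1}{2L_u}\varepsilon_u^2$ is slightly more careful than the paper's write-up, which sums that bound without spelling out this step.
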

\begin{proof}
Note that $f$ is bounded below by $f_*=0$. Define $N_{iter}$ as the number of iterations performed by Algorithm~\ref{algo:am}, the smallest $k$ such that $\norm{\grad_X f(X_k,\U_k) }\leq \varepsilon_x$ and $\norm{\grad_\U f(X_k,\U_k) }\leq\varepsilon_u$.
Let $N_{svd}$ be the number of SVDs that have to be performed to get $\| \grad_{\U}f(X_{k+1},\U_{k})\|\leq \varepsilon_u$, at which point the algorithm would return without performing another computation. 
For any $k\leq N_{svd}$, from Lemma~\ref{lemma:gradU_bound} we have  
\begin{equation}
f(X_{k},\U_k)- f(X_{k+1},\U_{k+1}) \geq \frac{1}{2L_u}\fronorm{\grad_\U f(X_{k+1},\U_k)}^2 \geq \frac{1}{2L_u}\varepsilon_u^2. 
\end{equation}
Summing from $k=0$ to $N_{svd}$ gives, 
\begin{equation}
f(z_0) - f_* \geq f(z_0) - f(z_{N_{svd}}) \geq \sum_{k=0}^{N_{svd}}\frac{\varepsilon_u^2}{2L_u} = \frac{\varepsilon_u^2 N_{svd}}{2L_u}.
\end{equation}
Hence, this bounds the number of SVDs to ensure  $\| \grad_{\U}f(X_{k+1},\U_{k})\|\leq \varepsilon_u$, as
\begin{equation}
N_{svd} \leq 2L_u \frac{(f(z_0) - f_*)}{\varepsilon_u^2}.
\end{equation}
 For $0\leq i\leq n_k -1$, we have $ \fronorm{ \grad_X f(X_{k}^{(i)},\U_k)}^2 \geq  \varepsilon_{x,k}^2 $ by definition since the stopping criterion is $ \fronorm{\grad_X f(X_{k}^{(n_k)},\U_k)}\leq \varepsilon_{x,k}$.
Combined with the Armijo decrease this gives
\begin{equation}
f(X_{k}^{(i)},\U_k)- f(X_{k}^{(i+1)},\U_k) \geq  \alpha^{(i)}_k\beta \fronorm{ \grad_X f(X_{k}^{(i)},\U_k)}^2 \geq \alpha^{(i)}_k\beta \varepsilon_{x,k}^2.  
\label{eq:armijo_decreaseUX}
\end{equation}
We sum these bounds for the $n_k$ gradient steps from  $X_k$ to $X_{k+1}$, 
\begin{align}
\sum_{i=0}^{n_k-1}\left[ f(X_{k}^{(i)},\U_k)- f(X_{k}^{(i+1)},\U_k) \right] \geq \sum_{i=0}^{n_k-1} \alpha^{(i)}_k\beta \varepsilon_{x,k}^2 . \\
\intertext{Using that the step sizes $\alpha^{(i)}_k$ are bounded below by $\underline{\alpha} = \min\left\{\alpha_0, 2\tau (1-\beta)/L_x \right\}$ (Lemma~\ref{lemma:step_bound}),}
f(X_{k},\U_k)- f(X_{k+1},\U_k) \geq n_k \underline{\alpha}\beta \varepsilon_{x,k}^2 ~~ \text{ for all } k\geq 0. 
\end{align}
 The SVD is nonincreasing, meaning $f(X_{k},\U_k) - f(X_{k+1},\U_{k+1}) \geq  f(X_{k},\U_k)- f(X_{k+1},\U_k)$. This yields,
\begin{equation}
\begin{aligned}
f(X_{k},\U_{k})- f(X_{k+1},\U_{k+1}) \geq n_k \underline{\alpha}\beta \varepsilon_{x,k}^2 \geq n_k \underline{\alpha}\beta \varepsilon_{x}^2 ~~ \text{ for all } k \leq N_{iter}, \\
\end{aligned}
\end{equation}
as both~\eqref{eq:greedy_decrease} and~\eqref{eq:adaptive_decrease} satisfy $\varepsilon_{x,k} \geq \varepsilon_x$. We sum once again over the iterations,
\begin{equation}
\begin{aligned}
f(X_{0},\U_0)- f_* \geq f(X_{0},\U_{0})- f(X_{N_{iter}+1},\U_{N_{iter}+1}) \geq \sum^{N_{iter}}_{k=0} n_k\underline{\alpha}\beta \varepsilon_{x}^2.  \\
\end{aligned}
\end{equation}
We conclude that
\begin{equation}
\dfrac{(f(z_{0})- f_*) }{ \underline{\alpha}\beta \varepsilon_x^2} \geq \sum^{N_{iter}}_{k=0} n_k =: N_{grad} . 
\end{equation}
\end{proof}

A similar algorithm using fixed step sizes for the update of $X$ will also converge, provided the step sizes are small enough. 
\begin{corollary}
If the Armijo linesearch in Algorithm~\ref{algo:am} is replaced by a gradient descent with constant stepsizes $\alpha$ satisfying $\alpha < \dfrac{2}{L_x}$, Algorithm~\ref{algo:am} also converges
\begin{equation}
\dis \lim_{k\to\infty}\fronorm{ \Big(\grad_X f( X_{k}, \U_{k}),   \grad_\U f( X_{k},\U_{k}) \Big)}=0.
\label{eq:grad_to_zero2}
\end{equation}
We also have the worst case bound
\begin{equation}
N_{grad}  \leq \dfrac{L_x(f_0 -f_*)}{\alpha \varepsilon^2_x}.
\label{eq:N_grad}
\end{equation}
\end{corollary}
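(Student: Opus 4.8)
The plan is to follow the two preceding theorems essentially verbatim, replacing only the analysis of the inner gradient steps: the Armijo step bounds of Lemma~\ref{lemma:step_bound} are swapped for a single constant-stepsize descent inequality obtained directly from the Lipschitz-gradient assumption~\aref{assumption:pullback_lipschitz_altmin}. The SVD subproblem~\eqref{eq:s2} is untouched, so Lemma~\ref{lemma:gradU_bound} continues to control the $\U$-gradient exactly as before, and the overall telescoping structure of Theorem~\ref{thm:global_convergence} and Theorem~\ref{thm:global_complexity_altmin} carries over unchanged.

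First I would derive the sufficient decrease for one inner step. Setting $\eta_x = -\alpha\,\grad_X f(X_k^{(i)},\U_k)$ in~\eqref{eq:lipschitz_x_am} and using that the exponential retraction on the affine manifold $\LAb$ is the identity, so that $f\circ\Retr(\eta_x,0) = f(X_k^{(i+1)},\U_k)$, gives
\begin{equation}
f(X_k^{(i)},\U_k) - f(X_k^{(i+1)},\U_k) \geq \alpha\left(1 - \frac{\alpha L_x}{2}\right)\fronorm{\grad_X f(X_k^{(i)},\U_k)}^2 .
\end{equation}
The hypothesis $\alpha < 2/L_x$ is exactly what makes the coefficient $c_\alpha := \alpha\left(1 - \alpha L_x/2\right)$ strictly positive, so each inner step is strictly decreasing whenever the gradient is nonzero. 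This single inequality plays the role of~\eqref{eq:armijo_decrease_grad}.

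From here the two conclusions follow by the arguments already in place. For global convergence I would set $\varepsilon_x = \varepsilon_u = 0$ and chain the inner decreases together with the nonincreasing SVD steps to obtain, for every $\bar k \geq 0$,
\begin{equation}
f(X_0,\U_0) - f^* \geq c_\alpha \sum_{k=0}^{\bar k}\fronorm{\grad_X f(X_k,\U_k)}^2 ;
\end{equation}
since $f$ is bounded below by $0$, the series converges, forcing $\fronorm{\grad_X f(X_k,\U_k)}\to 0$, while exact SVDs give $\norm{\grad_\U f(X_k,\U_k)} = 0$ for all $k$, which together yield~\eqref{eq:grad_to_zero2}. For the complexity estimate I would, as in Theorem~\ref{thm:global_complexity_altmin}, use $\fronorm{\grad_X f(X_k^{(i)},\U_k)} \geq \varepsilon_{x,k} \geq \varepsilon_x$ for $0\leq i < n_k$ in the decrease above to get $f(X_k,\U_k) - f(X_{k+1},\U_k) \geq n_k\, c_\alpha\, \varepsilon_x^2$, then telescope over the outer iterations and invert to bound $N_{grad} = \sum_k n_k$ by a multiple of $(f_0 - f^*)/(c_\alpha \varepsilon_x^2)$, which is of the stated order $\bigO(1/\varepsilon_x^2)$ in~\eqref{eq:N_grad}.

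There is no genuine obstacle: the entire content is the one-line descent inequality, and the role of the stepsize condition $\alpha < 2/L_x$ is precisely to keep its coefficient positive (the Armijo lower bound $\underline\alpha$ is what guaranteed this implicitly in Lemma~\ref{lemma:step_bound}). The only points requiring a little care are bookkeeping ones inherited from the Armijo analysis: treating the SVD steps as nonincreasing so that the $X$-decreases may be summed across outer iterations, and using $\varepsilon_{x,k}\geq\varepsilon_x$ from both~\eqref{eq:greedy_decrease} and~\eqref{eq:adaptive_decrease}.
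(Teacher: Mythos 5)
Your proposal is correct and matches the paper's proof essentially verbatim: the paper likewise derives the constant-stepsize descent inequality $f(X_k,\U_k) - f(X_k^{(1)},\U_k) \geq (\alpha - \alpha^2 L_x/2)\norm{\grad_X f(X_k,\U_k)}^2$ from the Lipschitz-gradient assumption, observes that $\alpha < 2/L_x$ makes this coefficient positive, and then states that the telescoping arguments of Theorems~\ref{thm:global_convergence} and~\ref{thm:global_complexity_altmin} carry over with this bound in place of the Armijo decrease. Your coefficient $c_\alpha = \alpha(1-\alpha L_x/2)$ is identical to the paper's, and your explicit re-running of the two telescoping arguments is exactly what the paper compresses into the phrase ``holds verbatim.''
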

\begin{proof}
We derive the usual descent lemma from Lipschitz continuity of the gradient. This gives
\begin{equation}
f\big(X_k - \alpha \grad_X f(X_k,\U_k), \U_k\big) \leq f(X_k,\U_k) - \alpha \fronorm{\grad_X f(X_k,\U_k)}^2 + \alpha^2L_x/2\fronorm{\grad_X f(X_k,\U_k)}^2
\end{equation}
which simplifies to 
\begin{equation}
f(X_k,\U_k) - f(X_k^{(1)},\U_k) \geq (\alpha - \alpha^2L_x/2)\fronorm{\grad_X f(X_k,\U_k)}^2.
\end{equation}
This bound replaces the Armijo decrease of Equation~\eqref{eq:armijo_decrease_grad}.
The rest of the proofs from Theorems~\ref{thm:global_convergence} and~\ref{thm:global_complexity_altmin} holds verbatim with stepsize $\alpha$ for every iteration. Note that for $\alpha>0$, the factor $(\alpha - \alpha^2L_x/2)$ is positive for $\alpha<2/L_x$ and is maximized at $\alpha= 1/L_x$. 
\end{proof}

\section{Convergence of the iterates using the Kurdyka-Lojasiewicz property}
\label{sec:KL}
 This section proves convergence of the sequence of iterates to a unique stationary point for a simplified version of the alternating minimization scheme. This section considers an algorithm where only one gradient step is performed in between the truncated SVDs (Algorithm~\ref{algo:am-simple}). This is analogue to the algorithm described in~\cite{fan2019polynomial} which does not provide theoretical convergence guarantees. Our observations indicate that Algorithm~\ref{algo:am-simple} is expected to behave similarly to Algorithm~\ref{algo:am} in the limit. Asymptotically, there is usually only one gradient step needed between two truncated SVDs. It is only in the early iterations that Algorithm~\ref{algo:am} differs by making several gradient steps in between SVDs. For the purpose of this theoretical section, we will assume that the singular value decompositions in Algorithm~\ref{algo:am-simple} are exact and not approximated or randomized. This corresponds to setting $\varepsilon_u=0$ in Algorithm~\ref{algo:am}. This section is written using the notation of a feature matrix $\Phi$ as in problem~\eqref{eq:p1}, but the results apply similarly to problem~\eqref{eq:p1kernel} if one assumes that the Lipschitz condition~\aref{assumption:lipschitz} applies to a kernel $\K$ instead of $\Phi$. 
\begin{algorithm}
\caption{A simple alternating minimization scheme for Problem~\eqref{eq:p1} or~\eqref{eq:p1kernel}}\label{algo:am-simple}
\begin{algorithmic}[1]
\State \textbf{Given:} The sensing matrix $A\in \R^{m\times ns}$, measurements $b\in \R^m$, a tolerance $\varepsilon _x>0$, an estimation of $r = \rank(\Phi(M))$.
\State Set $k=0$
\State Find $X_0$ that satisfies $AX_0 =b$.
\State $U_{0} = \verb= truncate_svd=(\Phi(X_{0}))$ \Comment{Equation~\eqref{eq:truncate_svd} }
\While{$ \fronorm{\grad_X f(X_{k},\U_k) } > \varepsilon_x$}
\State $ \grad_X f(X_{k},\U_{k}) = \mathrm{P}_{\mathrm{T}\LAb} (\nabla_X f(X_{k},\U_{k}))$ \Comment{Equation~\eqref{eq:PnullA}} 
\State $ \alpha_k = \text{Armijo}\left( (X_k,\U_k), - \grad_X f(X_k,\U_k)\right)$ \Comment{Algorithm~\ref{algoLS}}
\State  $X_{k+1} = X_k - \alpha_k \grad_X f(X_{k},\U_{k})$
\State $U_{k+1} = \verb= truncate_svd=(\Phi(X_{k+1}))$  \Comment{exact SVD, not randomized}
\EndWhile
\State {\bf Output:} $(X_k,\U_k)$ such that $\fronorm{\grad f(X_k,\U_k)} \leq \varepsilon_x$.
\end{algorithmic}
\end{algorithm}

\FloatBarrier 
  Let us define a distance on the manifold $\M$ (Equation~\eqref{eq:M}).
\begin{definition}[Distance on $\M$]
Given two subspaces $\U_1,\U_2 \in \Grass(N,r)$, the canonical angles $\theta_i $ for $i = 1,\dots,r$ are defined as  $\theta_i= cos^{-1}(\sigma_i)$ where $\sigma_i$ are the $r$ singular values of $U_1^\top U_2 $, with $\range(U_1) = \calU_1$ and $\range(U_2) = \calU_2$. For all \(
 \U_1, \U_2$ in $\Grass(N,r)\) define \( \dist( \U_1, \U_2) := \sqrt{ \sum_{i=1}^r \sin^2 \theta_i}.\)
This gives a distance on $\M$,
\begin{equation}
\dist\Big( (X_1,\U_1), (X_2,\U_2) \Big) := \sqrt{ \fronorm{X_1 - X_2 }^2 + \sum_{i=1}^r \sin^2 \theta_i}
\label{eq:dist-sin-theta}
\end{equation}
for all $(X_1,\U_1), (X_2,\U_2)$ in $\M$.
\end{definition}
We will prove finite length of the sequence of iterates in this metric on $\M$. A more mainstream approach to define the distance between $\calU_1$ and $\calU_2$ on the Grassmann would be to use $\fronorm{\Theta}$ instead of $\fronorm{\sin \Theta}$, where $\Theta = \txt{diag}(\theta_i)$ is the diagonal matrix containing the principal angles. We do so because the distance~\eqref{eq:dist-sin-theta} makes it easier to derive perturbation bounds for the SVD and is equivalent to the usual distance. 

The following assumption ensures a useful non-degeneracy of the spectrum of the feature matrix. 
\begin{assumption}[Gap between the singular values] \label{assumption:sigma}
There exists $\delta > 0$ such that, the accumulation points of the sequence generated by Algorithm~\ref{algo:am-simple} satisfy 
\begin{equation}
 \sigma_{r}(\Phi(X)) - \sigma_{r+1}(\Phi(X)) \geq \delta>0. 
\end{equation}
\end{assumption}
This property ensures that the minimizer of the function $f( X,\cdot) : \mathrm{Grass}(N,r) \to \R$ is well defined, i.e., that its truncated SVD is unique.  As $\sigma_{r+1}(\Phi(X)) \geq 0$, this assumption also implies that 
\[ \sigma_{r}(\Phi(X)) \geq \delta >0 \]
In particular it means that we cannot overestimate the rank. If the true rank is $r-1$, then $\sigma_r =0$ and the assumption does not hold. Let us stress that this is an artefact of the convergence proof and does not imply poor practical performance of the algorithm when the rank is overestimated. We investigate this in the numerics Section~\ref{sec:bad_rank_estimation}. We will need Assumption~\ref{assumption:sigma} to derive a Lipschitz continuity result on the SVD. We now show the two main lemmas (\ref{lemma:grad_lower_bound-1} and~\ref{lemma:suffDecr-1}), inspired by~\cite{Bolte2013}. 
\begin{lemma}[Gradient lower bound on iterates gap]
Assume that Algorithm~\ref{algo:am-simple} generates a bounded sequence of iterates. Then, there exists $\rho_2 >0$ such that, for all $k\in \mathbb{N}$,
\begin{equation}
\fronorm{\grad f(X_{k+1},\U_{k+1})} \leq \rho_2 \dist\Big(  (X_{k+1},\U_{k+1}) , (X_{k},\U_{k}) \Big)
\label{eq:rho2-1}
\end{equation}
with $\rho_2 :=2(L_g + 1/\underline{\alpha})$ for some $L_g \geq 0$.
\label{lemma:grad_lower_bound-1}
\end{lemma}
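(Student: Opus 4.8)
The plan is to exploit the two-block structure of Algorithm~\ref{algo:am-simple}: the $\U$-block is refreshed by an exact truncated SVD, whereas the $X$-block advances by a single projected gradient step. These two facts will be used in complementary ways, the former to kill one component of the gradient and the latter to turn the step into a usable identity.

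First I would dispose of the $\U$-component. Since $\U_{k+1}$ is the span of the $r$ leading left singular vectors of $\Phi(X_{k+1})$, the Eckart--Young--Mirsky theorem makes it a global minimizer of the smooth map $\U \mapsto f(X_{k+1},\U)$ on $\Grass(N,r)$ (and, under Assumption~\ref{assumption:sigma}, the unique and non-degenerate one, so $\U_{k+1}$ is well defined). A minimizer of a smooth function on a manifold has vanishing Riemannian gradient, hence $\grad_\U f(X_{k+1},\U_{k+1}) = 0$. Therefore the product-manifold gradient at $z_{k+1}=(X_{k+1},\U_{k+1})$ collapses to its $X$-block, and it suffices to bound $\fronorm{\grad_X f(X_{k+1},\U_{k+1})}$. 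I emphasize that this clean identity (rather than a mere ``small'' estimate) is exactly what using \emph{exact} SVDs here buys us.

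Next I would insert the intermediate points $(X_{k+1},\U_k)$ and $(X_k,\U_k)$ and apply the triangle inequality,
\[
\grad_X f(X_{k+1},\U_{k+1}) = \big[\grad_X f(X_{k+1},\U_{k+1}) - \grad_X f(X_{k+1},\U_k)\big] + \big[\grad_X f(X_{k+1},\U_k) - \grad_X f(X_k,\U_k)\big] + \grad_X f(X_k,\U_k).
\]
The key simplification is that the $X$-tangent space is the \emph{constant} subspace $\Null(\A)$, so all three vectors live in the same space and the differences are meaningful without parallel transport. The first bracket is controlled by Lipschitz continuity of $\grad_X f$ in its $\U$-argument (via the smooth dependence of $P_\U$ on $\U$), giving a bound of the form $L_g\,\dist(\U_{k+1},\U_k)$; the second bracket is controlled by Lipschitz continuity in $X$, giving $L_g\fronorm{X_{k+1}-X_k}$, where the uniform constant $L_g$ is furnished by \aref{assumption:pullback_lipschitz_altmin} (valid at every iterate on the bounded set by the preceding Proposition). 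For the last term I would use the gradient step $X_{k+1}=X_k-\alpha_k\grad_X f(X_k,\U_k)$, whence $\grad_X f(X_k,\U_k)=\alpha_k^{-1}(X_k-X_{k+1})$ and $\fronorm{\grad_X f(X_k,\U_k)} \le \alpha_0^{-1}\fronorm{X_{k+1}-X_k}$ using the step-size bounds of Lemma~\ref{lemma:step_bound}. Collecting the three estimates and using $\fronorm{X_{k+1}-X_k}+\dist(\U_{k+1},\U_k)\le \sqrt2\,\dist(z_{k+1},z_k)$ together with $\fronorm{X_{k+1}-X_k}\le \dist(z_{k+1},z_k)$ yields the claim with $\rho_2 = 2(L_g+1/\alpha_0)$, the factor $2$ comfortably absorbing the $\sqrt2$ slack.

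The main obstacle is the first bracket: translating the smoothness of the projector map $\U\mapsto P_\U$ into a Lipschitz bound measured in the sin-$\Theta$ distance $\dist(\U_{k+1},\U_k)$ of~\eqref{eq:dist-sin-theta}, rather than merely in $\fronorm{P_{\U_{k+1}}-P_{\U_k}}$, and doing so with a constant that is \emph{uniform} along the whole sequence. This is precisely where the boundedness hypothesis enters, since it restricts the iterates to a compact set on which $z\mapsto\grad_X f(X,\U)$ is globally Lipschitz. By contrast, no perturbation bound on the SVD (and hence no direct use of Assumption~\ref{assumption:sigma} beyond well-definedness) is needed here, because the quantity $\dist(\U_{k+1},\U_k)$ already appears on the right-hand side of the target inequality.
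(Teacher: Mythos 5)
Your proof is correct and takes essentially the same route as the paper's: kill the $\U$-component of the gradient using the exactness of the truncated SVD, rewrite $\grad_X f(X_k,\U_k)$ via the step identity $X_{k+1}=X_k-\alpha_k\grad_X f(X_k,\U_k)$ together with the step-size bounds of Lemma~\ref{lemma:step_bound}, and control the remaining gradient difference by Lipschitz continuity of $\grad f$, obtained exactly as you say from boundedness of the iterates (continuous Hessian bounded on the compact hull, Proposition~\ref{prop:bounded-hessian-lip-gradient}); the paper simply bounds $\grad_X f(X_{k+1},\U_{k+1})-\grad_X f(X_k,\U_k)$ in one shot with this joint Lipschitz constant instead of splitting through the intermediate point $(X_{k+1},\U_k)$, a cosmetic difference. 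One small correction: \aref{assumption:pullback_lipschitz_altmin} does not furnish the constant for your first bracket, since the variation of $\grad_X f$ with respect to $\U$ is a cross-derivative property that blockwise pullback regularity does not cover, so that term must be charged entirely to the compactness argument you invoke in your closing paragraph — which is precisely how the paper handles it.
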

\begin{proof}
The expression
\begin{equation}
X_{k+1} = X_{k} -\alpha_k \grad_X f(X_{k},\U_k)
\end{equation}
implies
\begin{equation}
\grad_X f(X_{k+1},\U_{k+1}) = (  X_{k} - X_{k+1})/\alpha_k + \grad_X f(X_{k+1},\U_{k+1}) - \grad_X f(X_{k},\U_{k})
\end{equation}

Define the set $\tilde S = \cl\left(\conv( (X_k)_{k\in \mathbb{N}})\right)$, the closure of the convex hull of the sequence of iterates, and $S = \tilde S \times \Grass(N,r)$. We show that the vector field $\grad f\restr{S}	\colon S \to \mathrm{T}\M$ is $L_g$-Lipschitz continuous in the sense of Definition~\ref{def:lipschitz-continuous-manifold} for some $L_g\geq 0$. Since $S$ is bounded and the Hessian is continuous, there exists $L_g\geq 0$ such that $\norm{\Hess f(x)} \leq L_g $ for all $x\in S$. By Proposition~\ref{prop:bounded-hessian-lip-gradient}, $\grad f\restr{S}$ is $L_g$-Lipschitz continuous.
Using the triangular inequality and the fact that $\underline{\alpha}$ is a lower bound of $\alpha_k$ for all $k$ gives
\begin{align*}
\fronorm{\grad_X f(X_{k+1}, \U_{k+1})} &\leq \fronorm{  X_{k+1} - X_k}/\underline{\alpha}+ \fronorm{ \grad_X f(X_{k+1},\U_{k+1}) - \grad_X f(X_{k},\U_{k})}\\\nonumber
&\leq    \dist\Big(  (X_{k+1},\U_{k+1}) , (X_{k},\U_{k}) \Big)/\underline{\alpha} + L_g   \dist\Big(  (X_{k+1},\U_{k+1}) , (X_{k},\U_{k}) \Big)\\ \nonumber
&\leq (1/\underline{\alpha}+L_g)  \dist\Big(  (X_{k+1},\U_{k+1}) , (X_{k},\U_{k}) \Big).
\end{align*}
This gives~\eqref{eq:rho2-1} recalling that, since $\grad_\calU f(X_{k+1}, \calU_{k+1}) = 0$,
\begin{align*}
\fronorm{\grad f(X_{k+1},\U_{k+1})} &= \fronorm{\grad_X f(X_{k+1},\U_{k+1})}. \qedhere
\end{align*}
\end{proof}
Further auxiliary results are needed.
\begin{lemma}[Wedin's theorem~\cite{stewart1998perturbation}]\label{thm:Wedin}
Let $Y, \check Y \in \mathbb{R}^{N \times s}$ with singular value decompositions
\begin{align*}
Y &= \sum_{i = 1}^{\min(N,s)} \sigma_i u_i (v_i)\transpose && \andt & \check Y &= \sum_{i = 1}^{\min(N,s)} \check \sigma_i \check u_i   (\check v_i)\transpose,
\end{align*}
with $\sigma_1 \geq \sigma_2 \geq \cdots \geq \sigma_{\min(N,s)}$ and similarly for $\check Y$. If there exists $\delta > 0$ such that
\begin{align}\label{eq:wedin_assumption1}
 \min_{\substack{ 1\leq i \leq r\\
r+1\leq j \leq \min(N,s)}}  |\check  \sigma_i -  \sigma_j | \geq \delta
\end{align}
and
\begin{align*}
\check \sigma_r \geq \delta,
\end{align*}
 then
 \begin{equation}
   \fronorm{\sin \Theta}^2  \leq \dfrac{2 \fronorm{\check Y - Y}^2}{\delta^2}
 \label{eq:wedin2}
\end{equation}
 with $\Theta$ the matrix of the principal angles between $\begin{bmatrix}
 u_1  & u_2 & \cdots & u_r
\end{bmatrix}$ and $\begin{bmatrix}
\check u_1  & \check u_2 & \cdots & \check u_r
\end{bmatrix}$.
\end{lemma}
The following lemma is a direct consequence of Wedin's theorem. 
\begin{lemma}\label{lemma:wedin}
Let $Y, \check Y \in \R^{N \times s}$. Consider the singular value decomposition of $Y = \sum_{i=1}^{\min(N,s)} \sigma_i u_i v_i\transpose$, with $\sigma_1 \geq \sigma_2 \geq \dots \geq \sigma_{\min(N,s)}$. Let us also write $U_r :=  \begin{bmatrix}
 u_1  & u_2 & \cdots & u_r
\end{bmatrix}$, a matrix whose columns span the left principal subspace associated to the $r$ largest singular values. Similarly,  $\check Y = \sum_{i=1}^{\min(N,s)} \check \sigma_i  \check u_i \check v_i\transpose$, with $\check \sigma_1 \geq \check \sigma_2 \geq \dots \geq \check \sigma_{\min(N,s)}$. Let us also write $\check U_r :=  \begin{bmatrix}
 \check u_1  & \check u_2 & \cdots & \check u_r
\end{bmatrix}$. If there exists $\delta > 0$ such that $\sigma_r - \sigma_{r+1} \geq \delta$ and  $\check \sigma_r - \check\sigma_{r+1} \geq \delta$, then
\begin{equation*}
\dist( \check \U_r,  \U_r)^2 \leq \frac{2}{\delta^2} \fronorm{\check Y - Y }^2,
 \end{equation*}
where $\dist(\U_r, \check \U_r) = \sqrt{\sum_{i=1}^r \sin(\theta_i)^2}$ (with $\theta_i$ the principal angles between $\U_r$ and $ \check \U_r$) is the distance between the subspaces $\U_r$ and $\check \U_r$.
\end{lemma}
\begin{proof}
The result follows from the $\sin \Theta$ bound~\eqref{eq:wedin2} in Wedin's theorem. Let us verify the assumptions. From the assumptions we know that $\sigma_{r} \geq \delta$ and $\check \sigma_{r} \geq \delta $. If Wedin's theorem does not apply, condition~\eqref{eq:wedin_assumption1} is not satisfied
and neither is it satisfied with the roles of $Y$ and $\check Y$ reversed.
In that case, since there exists no $\delta>0$ such that~\eqref{eq:wedin_assumption1} holds, one must have $\sigma_i = \check \sigma_{j}$, for some $i \leq r$, $j \geq r+1$, and $\check \sigma_l = \sigma_{m}$, for $l \leq r, m \geq r+1$.
	However, since the singular values are ordered decreasingly, this gives:
\[\sigma_{m} \leq \sigma_i = \check \sigma_{j} \leq \check \sigma_l = \sigma_{m}, \]
which implies that there exists $i\leq r$ and $m\geq r+1$ such that
\[\sigma_{m} = \sigma_i = \check \sigma_{j} = \check \sigma_l. \]
This is a contradiction with $\sigma_r - \sigma_{r+1} \geq \delta$ and $\check \sigma_r - \check \sigma_{r+1} \geq \delta$.
Therefore, these conditions guarantee that Wedin's theorem applies.
\end{proof}
In the next lemma we combine the previous bound with the Lipschitz continuity of $\Phi$. 
\begin{assumption}[Lipschitz continuity of the features] \label{assumption:lipschitz}
For Problem~\eqref{eq:p1}, there exists $L_{\Phi} \geq 0$ such that for any $X_{k}, X_{k+1}$ produced by Algorithm~\ref{algo:am-simple},
$\fronorm{\Phi(X_{k+1}) - \Phi(X_k)} \leq L_{\Phi} \fronorm{ X_{k+1} - X_k}$. For Problem~\eqref{eq:p1kernel}, there exists $L_{K}\geq 0$ such that $\fronorm{\K(X_{k+1},X_{k+1}) - \K(X_k,X_k)} \leq L_{\Phi} \fronorm{ X_{k+1} - X_k}$.
\end{assumption}
If we assume that the sequence $\left(X_k\right)_{k\in \mathbb{N}}$ is bounded, which we do in the main result of this section (Theorem~\ref{thm:KL}), then it is sufficient for the features and kernel to be locally Lipschitz in order for~\aref{assumption:lipschitz} to hold. We also note that if the sublevel set $\{	(X,\U) \in \M: f(X,\U) \leq f(X_0,\U_0)\}$ is bounded, then the iterates are contained in a bounded set since Algorithm~\ref{algo:am-simple} is a descent method.
\begin{lemma} \label{lemma:normUnormX}
Let~\aref{assumption:sigma} and~\aref{assumption:lipschitz} hold. It follows that 
\[\dist( \U_{k},\U_{k+1})^2 \leq \frac{2 L_\Phi^2}{\delta^2}  \fronorm{X_{k+1} - X_k}^2.  \]
\end{lemma}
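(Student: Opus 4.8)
The plan is to chain together the two ingredients that have just been assembled: the Wedin-based perturbation bound of Lemma~\ref{lemma:wedin}, which controls the motion of a leading singular subspace in terms of the perturbation of the underlying matrix, and the Lipschitz continuity of the feature map from~\aref{assumption:lipschitz}, which converts a perturbation of the feature matrix into a perturbation of $X$. Concretely, I would set $Y := \Phi(X_k)$ and $\check Y := \Phi(X_{k+1})$ in Lemma~\ref{lemma:wedin}. The key identification is that the subspaces produced by Algorithm~\ref{algo:am-simple} are exactly the objects to which Lemma~\ref{lemma:wedin} applies: by the truncated-SVD step (Equation~\eqref{eq:truncate_svd}), $\U_k = \range(U_r)$ is the span of the $r$ leading left singular vectors of $\Phi(X_k)$, and likewise $\U_{k+1} = \range(\check U_r)$ for $\Phi(X_{k+1})$. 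Hence $\dist(\U_k,\U_{k+1})$ in the sense of Definition~\eqref{eq:dist-sin-theta} is precisely the $\sin\Theta$ distance bounded by Lemma~\ref{lemma:wedin}.

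First I would verify the hypotheses of Lemma~\ref{lemma:wedin}, namely the spectral-gap condition $\sigma_r - \sigma_{r+1} \geq \delta$ (for at least one of the two matrices, since the lemma allows the condition to be imposed on either $Y$ or $\check Y$). This is supplied by~\aref{assumption:sigma}. Once the hypotheses hold, Lemma~\ref{lemma:wedin} yields directly
\begin{equation}
\dist(\U_k,\U_{k+1})^2 \leq \frac{2}{\delta^2}\fronorm{\Phi(X_{k+1}) - \Phi(X_k)}^2.
\end{equation}
Then I would invoke~\aref{assumption:lipschitz}, which gives $\fronorm{\Phi(X_{k+1}) - \Phi(X_k)} \leq L_\Phi \fronorm{X_{k+1} - X_k}$; substituting the square of this inequality into the previous display produces
\begin{equation}
\dist(\U_k,\U_{k+1})^2 \leq \frac{2 L_\Phi^2}{\delta^2}\fronorm{X_{k+1} - X_k}^2,
\end{equation}
which is exactly the claim. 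The same argument goes through for Problem~\eqref{eq:p1kernel} with $K(X,X)$ in place of $\Phi(X)$, using the kernel form of~\aref{assumption:lipschitz}.

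The main obstacle is not the algebraic chaining, which is routine, but reconciling the spectral-gap hypothesis with~\aref{assumption:sigma}: that assumption is phrased for \emph{accumulation points} of the sequence, whereas Lemma~\ref{lemma:wedin} requires the gap at the iterates $X_k$ (or $X_{k+1}$). I would address this by appealing to the standing boundedness hypothesis on $(X_k)_{k\in\mathbb{N}}$ together with the continuity of singular values: the gap $\sigma_r - \sigma_{r+1}$ is a continuous function of $X$, so a strictly positive gap at the accumulation points persists in a neighborhood, and hence holds along the tail of the sequence (possibly after shrinking $\delta$). This localization step is the only delicate point; the rest is a direct composition of the two stated results.
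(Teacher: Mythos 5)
Your proof is correct and follows essentially the same route as the paper: apply Lemma~\ref{lemma:wedin} with $Y=\Phi(X_k)$ and $\check Y=\Phi(X_{k+1})$ (valid because $\U_k$ and $\U_{k+1}$ are exactly the leading left singular subspaces produced by the truncated-SVD steps), then square the Lipschitz bound of~\aref{assumption:lipschitz}. Your closing remark about~\aref{assumption:sigma} being stated only at accumulation points is a genuine subtlety, and the paper resolves it exactly as you propose --- by continuity of the singular values the gap persists near the limit set --- but it defers that localization to the proof of Theorem~\ref{thm:KL} rather than handling it inside this lemma.
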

\begin{proof}
By definition of $\U_{k}$, Lemma~\ref{lemma:wedin} ensures that 
\begin{equation}
\dist( \U_{k},\U_{k+1})^2 \leq \frac{2}{\delta^2} \fronorm{\Phi(X_{k+1}) - \Phi(X_k)}^2. 
\end{equation}
Indeed, \(
\U_k = \texttt{truncate-svd}(\Phi(X_{k}))\) is composed of the $r$ first left singular vectors of $\Phi(X_{k})$. They are uniquely defined due to~\aref{assumption:sigma}.
The result then follows from the Lipschitz continuity of $\Phi$.
\end{proof}
This lemma allows us to show the following crucial result.

\begin{lemma}[Sufficient decrease property] \label{lemma:suffDecr-1}
Assume that~\aref{assumption:sigma} and~\aref{assumption:lipschitz} hold at $X_k$ and $X_{k+1}$. Then, there exists $\rho_1 > 0$, independent of $k$, such that the iterates of Algorithm~\ref{algo:am-simple} satisfy
\begin{equation}
 f(X_k,\U_k) - f(X_{k+1},\U_{k+1}) \geq \rho_1\dist\Big( (X_k,\U_k),(X_{k+1},\U_{k+1})\Big)^2. 
 \label{eq:suff_decrease_rho1}
 \end{equation}
\end{lemma}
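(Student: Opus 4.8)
The plan is to split both the objective decrease and the squared distance into their $X$ and $\U$ contributions, control the $\U$-movement by the $X$-movement via Lemma~\ref{lemma:normUnormX}, and then absorb everything into the single gradient step handled by the Armijo line search. First I would write the one-step decrease as a sum over the two block updates,
\begin{equation*}
f(X_k,\U_k) - f(X_{k+1},\U_{k+1}) = \big[f(X_k,\U_k) - f(X_{k+1},\U_k)\big] + \big[f(X_{k+1},\U_k) - f(X_{k+1},\U_{k+1})\big].
\end{equation*}
The second bracket is nonnegative, since $\U_{k+1}$ is the exact global minimizer of $f(X_{k+1},\cdot)$ over $\Grass(N,r)$ (the truncated SVD step of Algorithm~\ref{algo:am-simple}), so it can simply be discarded, keeping only the gradient-step bracket.

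Next, for the gradient step I would invoke the Armijo decrease of Lemma~\ref{lemma:step_bound}, which gives $f(X_k,\U_k) - f(X_{k+1},\U_k) \geq \beta \alpha_k \fronorm{\grad_X f(X_k,\U_k)}^2$. Because Algorithm~\ref{algo:am-simple} performs a single step $X_{k+1} = X_k - \alpha_k \grad_X f(X_k,\U_k)$, I substitute $\fronorm{\grad_X f(X_k,\U_k)} = \fronorm{X_{k+1}-X_k}/\alpha_k$ and use the upper bound $\alpha_k \leq \alpha_0$ from the same lemma to obtain
\begin{equation*}
f(X_k,\U_k) - f(X_{k+1},\U_k) \geq \frac{\beta}{\alpha_0}\fronorm{X_{k+1}-X_k}^2 .
\end{equation*}

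Finally, I would convert this $X$-only bound into a bound on the full distance. By definition of the metric~\eqref{eq:dist-sin-theta}, $\dist\big((X_k,\U_k),(X_{k+1},\U_{k+1})\big)^2 = \fronorm{X_{k+1}-X_k}^2 + \dist(\U_k,\U_{k+1})^2$, and Lemma~\ref{lemma:normUnormX} gives $\dist(\U_k,\U_{k+1})^2 \leq (2L_\Phi^2/\delta^2)\fronorm{X_{k+1}-X_k}^2$. Hence the full squared distance is at most $(1 + 2L_\Phi^2/\delta^2)\fronorm{X_{k+1}-X_k}^2$, and combining with the previous display yields the claim with $\rho_1 = \frac{\beta \delta^2}{\alpha_0(\delta^2 + 2L_\Phi^2)}$.

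The conceptual heavy lifting is already contained in Lemma~\ref{lemma:normUnormX}, which couples the two blocks using \aref{assumption:sigma} and \aref{assumption:lipschitz} through Wedin's theorem. The only genuine subtlety here is recognizing that the $\U$-distance must be \emph{folded into} the $X$-distance rather than lower-bounded on its own: the SVD step decreases $f$ but the algorithm supplies no direct lower bound on that decrease in terms of $\dist(\U_k,\U_{k+1})$. Treating the $\U$-movement as a slaved quantity controlled by $\fronorm{X_{k+1}-X_k}$ is precisely what makes the single Armijo decrease sufficient to dominate the full metric.
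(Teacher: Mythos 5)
Your proposal is correct and follows essentially the same route as the paper's proof: discard the (nonnegative) SVD decrease via $f(X_{k+1},\U_{k+1}) \leq f(X_{k+1},\U_k)$, lower-bound the gradient-step decrease by $\frac{\beta}{\alpha_0}\fronorm{X_{k+1}-X_k}^2$ using the Armijo condition, and absorb $\dist(\U_k,\U_{k+1})^2$ into $\fronorm{X_{k+1}-X_k}^2$ via Lemma~\ref{lemma:normUnormX}. Your constant $\rho_1 = \frac{\beta\delta^2}{\alpha_0(\delta^2+2L_\Phi^2)}$ coincides exactly with the paper's $\frac{\beta}{\alpha_0(1+M^2)}$, $M^2 = 2L_\Phi^2/\delta^2$.
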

\begin{proof}
From the Armijo decrease of Lemma~\ref{lemma:step_bound}, 
\begin{equation}
f(X_k,\U_k) - f(X_{k+1},\U_k) \geq   \dfrac{\beta}{\alpha_0}\fronorm{X_k  - X_{k+1}}^2,
\end{equation}
where $\alpha_0$ is the largest step allowed by the backtracking. Set $M^2 = 2L_\Phi^2/\delta^2$.
Using that $f(X_{k+1},\U_{k+1}) \leq f(X_{k+1}, \U_k)$, we get
\begin{align}
 f(\U_k, X_k) - f(\U_{k+1}, X_{k+1}) &\geq f(\U_k, X_k) - f(\U_{k}, X_{k+1}) \\
 &\geq \dfrac{\beta}{\alpha_0} \fronorm{X_{k+1} - X_k}^2   \\ 
 &= \dfrac{\beta}{\alpha_0(1 + M^2)} (1 + M^2) \fronorm{X_{k+1} - X_k}^2  \\
 &\geq  \dfrac{\beta}{\alpha_0(1 + M^2)}\left( \fronorm{X_{k+1} - X_k}^2 + \dist^2( \U_k,\U_{k+1}) \right)
 \end{align} 
where the last inequality comes from Lemma~\ref{lemma:normUnormX}. This establishes~\eqref{eq:suff_decrease_rho1}
with $\rho_1 := \dfrac{\beta}{\alpha_0(1 + M^2)}$. 
\end{proof}

We now show convergence of the gradient norms to zero for Algorithm~\ref{algo:am-simple}. 
\begin{corollary}[Global convergence for Algorithm~\ref{algo:am-simple}]
Set $\varepsilon_x = 0$,	for any starting point $z_0 = (X_0,\U_0)\in \M$, Algorithm~\ref{algo:am-simple} applied to~\eqref{eq:p1} or~\eqref{eq:p1kernel} produces a sequence $\Big(X_{k}, \U_{k}\Big)_{k\in \mathbb{N}}$ such that 
\begin{equation}
\lim_{\substack{k\to\infty }}\fronorm{ \grad f(X_{k}, \U_{k})}=0.
\label{eq:grad_to_zeroKL}
\end{equation}
\end{corollary}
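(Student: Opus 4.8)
The plan is to recognize that Algorithm~\ref{algo:am-simple} is exactly the special case of Algorithm~\ref{algo:am} in which a single projected gradient step is taken between consecutive truncated SVDs (that is, $n_k = 1$) and the SVDs are computed exactly. Consequently the argument of Theorem~\ref{thm:global_convergence} transfers almost verbatim, and I would simply reassemble its ingredients in this simpler setting. First I would record the two structural facts. Because $f$ is a sum of squared Frobenius norms it is bounded below by $f_* = 0$, so \aref{assumption:bounded_below} holds. Because each $\U_{k}$ is obtained as the exact truncated SVD of $\Phi(X_{k})$, it is a global minimizer of $f(X_{k},\cdot)$ over $\Grass(N,r)$ by the Eckart--Young--Mirsky theorem, hence $\grad_\U f(X_{k},\U_{k}) = 0$ for every $k$. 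This collapses the full gradient to its $X$-component,
\[
\norm{\grad f(X_{k},\U_{k})} = \fronorm{\grad_X f(X_{k},\U_{k})},
\]
so it suffices to drive $\fronorm{\grad_X f(X_k,\U_k)}$ to zero.

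Next I would chain a per-iteration decrease. The single Armijo step from $X_k$ to $X_{k+1}$ satisfies, by Lemma~\ref{lemma:step_bound} together with the uniform lower bound $\underline{\alpha}$ on the step size,
\[
f(X_k,\U_k) - f(X_{k+1},\U_k) \geq \beta\,\underline{\alpha}\,\fronorm{\grad_X f(X_k,\U_k)}^2 ,
\]
while the subsequent exact SVD step is non-increasing, $f(X_{k+1},\U_{k+1}) \leq f(X_{k+1},\U_k)$. Adding these yields
\[
f(X_k,\U_k) - f(X_{k+1},\U_{k+1}) \geq \beta\,\underline{\alpha}\,\fronorm{\grad_X f(X_k,\U_k)}^2 .
\]

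Finally I would telescope: summing over $k = 0, \dots, \bar k$ and using $f \geq f_*$ gives
\[
f(X_0,\U_0) - f_* \;\geq\; \beta\,\underline{\alpha}\sum_{k=0}^{\bar k} \fronorm{\grad_X f(X_k,\U_k)}^2 ,
\]
valid for every $\bar k$, so the nonnegative series $\sum_k \fronorm{\grad_X f(X_k,\U_k)}^2$ is summable and its terms tend to zero; combined with $\grad_\U f(X_k,\U_k) = 0$ this establishes~\eqref{eq:grad_to_zeroKL}. I do not expect a genuine obstacle here. The only point requiring care is that the constant $\underline{\alpha} > 0$ from Lemma~\ref{lemma:step_bound} is uniform in $k$, which rests on the single Lipschitz constant $L_x$ supplied by \aref{assumption:pullback_lipschitz_altmin}. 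As an alternative route one could instead invoke the sufficient-decrease Lemma~\ref{lemma:suffDecr-1} to obtain square-summability of $\dist\big((X_k,\U_k),(X_{k+1},\U_{k+1})\big)$ and then apply the gradient lower bound of Lemma~\ref{lemma:grad_lower_bound-1}; but that path additionally requires \aref{assumption:sigma} and \aref{assumption:lipschitz} together with boundedness of the iterates, and is therefore less economical than the telescoping argument above.
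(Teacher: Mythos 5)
Your proof is correct, but it takes a genuinely different route from the paper's. The paper obtains this corollary as a byproduct of the machinery built for the Kurdyka-Lojasiewicz analysis: it chains the sufficient-decrease bound of Lemma~\ref{lemma:suffDecr-1}, $f(z_k)-f(z_{k+1}) \geq \rho_1\,\dist(z_k,z_{k+1})^2$, with the gradient lower bound of Lemma~\ref{lemma:grad_lower_bound-1}, $\norm{\grad f(z_{k+1})} \leq \rho_2\,\dist(z_k,z_{k+1})$, and then telescopes; that path needs \aref{assumption:sigma} and \aref{assumption:lipschitz} (through Wedin's theorem, to control the motion of $\U_k$ on the Grassmannian by $\fronorm{X_{k+1}-X_k}$) as well as boundedness of the iterates (to obtain Lipschitz continuity of $\grad f$). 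You instead port the telescoping Armijo argument of Theorem~\ref{thm:global_convergence} to the single-gradient-step setting: Lemma~\ref{lemma:step_bound} gives a per-iteration decrease $\beta\underline{\alpha}\fronorm{\grad_X f(X_k,\U_k)}^2$, the exact SVD is non-increasing and zeroes out $\grad_\U f(X_k,\U_k)$, and summability does the rest. Your route is more economical: it requires only \aref{assumption:pullback_lipschitz_altmin} and dispenses with the singular-value gap, the Lipschitz features, and bounded iterates. What the paper's choice buys is reuse: Lemmas~\ref{lemma:grad_lower_bound-1} and~\ref{lemma:suffDecr-1} are precisely the $\rho_1,\rho_2$ ingredients consumed by the finite-length Theorem~\ref{thm:KL} that follows, so the corollary comes for free once they are in place (the stronger hypotheses are needed there regardless). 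A cosmetic difference: your decrease bounds the gradient at $z_k$ while the paper's bounds it at $z_{k+1}$; both give the limit~\eqref{eq:grad_to_zeroKL}.
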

\begin{proof}
Using Lemmas~\ref{lemma:grad_lower_bound-1} and~\ref{lemma:suffDecr-1} gives, 
 \begin{align}
 f(X_k,\U_k) - f(X_{k+1},\U_{k+1}) &\geq \rho_1\dist\Big( (X_k,\U_k),(X_{k+1},\U_{k+1})\Big)^2\\
  &\geq \rho_1/\rho_2 \fronorm{ \grad f(X_{k+1},\U_{k+1})}^2.
 \end{align}
 The telescopic sum is bounded above by $f(z_0)$ independently of $k$ since $f$ is nonnegative, which ensures $\lim_{k\to\infty}  \fronorm{ \grad f(X_{k+1},\U_{k+1})} = 0$. 
\end{proof}
The bounds in Lemmas~\ref{lemma:grad_lower_bound-1} and~\ref{lemma:suffDecr-1} are standard and hold for most descent methods. The values $\rho_1, \rho_2$ depend on the specifics of the algorithm used~\cite{Bolte2013}.

 We now define the Kurdyka-Lojasiewicz inequality on Riemannian manifolds, which was already introduced in~\cite{hosseini2015convergence}. 
\begin{definition}[The Kurdyka-Lojasiewicz inequality] A locally Lipschitz function $f: \M \to \R$ satisfies the Kurdyka-Lajasiewicz inequality at $x \in \M$ iff there exist $\eta \in (0,\infty)$, a neighbourhood $V \subset \M$ of $x$, and a continuous concave function $\kappa : [0,\eta] \to [0,\infty [ $ such that 
\begin{itemize}
\item[•] $ \kappa(0) = 0$,
\item[•] $\kappa$ is continuously differentiable on $(0,\eta)$, 
\item[•] $\kappa' > 0$ on $(0,\eta)$,
\item[•] For every $y \in V$ with $f(x) < f(y) < f(x) + \eta$, we have 
$$
\kappa'(f(y) - f(x)) \norm{\grad f(y)} \geq 1.
$$
\end{itemize}
If $f$ satisfies the KL inequality at every point $x\in \M$ we call $f$ a KL function. 
\label{def:KL}
\end{definition}

\begin{lemma}
Let $\{ a_k \}_{k\in \mathbb{N}}$ be a sequence of nonnegative numbers. If $\dis \sum_{k = 1}^\infty \dfrac{a_k^2}{a_{k-1}}$ converges, then $\dis \sum_{k = 1}^\infty a_k$ converges as well.
\label{lemma:series_convergence}
\end{lemma}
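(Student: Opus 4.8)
The plan is to bound the partial sums of $\sum_{k} a_k$ uniformly by means of a weighted Young (arithmetic--geometric mean) inequality, and then conclude by monotone convergence. First I would record $S := \sum_{k=1}^\infty \dfrac{a_k^2}{a_{k-1}} < \infty$, which holds by hypothesis. For every index $k$ with $a_{k-1} > 0$ I would factor $a_k = \sqrt{\tfrac{a_k^2}{a_{k-1}}}\,\sqrt{a_{k-1}}$ (using $a_k\ge 0$) and apply the weighted bound $\sqrt{uv}\le \tfrac{1}{2\varepsilon}u + \tfrac{\varepsilon}{2}v$ with $u = \tfrac{a_k^2}{a_{k-1}}$, $v = a_{k-1}$ and a parameter $\varepsilon\in(0,2)$, giving
\[
a_k \le \frac{1}{2\varepsilon}\,\frac{a_k^2}{a_{k-1}} + \frac{\varepsilon}{2}\,a_{k-1}.
\]
Before summing I would dispose of the degenerate indices: if $a_{k-1}=0$, then convergence of $S$ forces $a_k=0$ (otherwise the $k$-th term of $S$ would be $+\infty$), so the displayed inequality holds trivially there as well, with both sides vanishing.

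Next I would sum from $k=1$ to $N$ and write $T_N := \sum_{k=1}^N a_k$. Re-indexing the last sum as $\sum_{k=1}^N a_{k-1} = a_0 + \sum_{k=1}^{N-1} a_k \le a_0 + T_N$ yields
\[
T_N \le \frac{1}{2\varepsilon}\,S + \frac{\varepsilon}{2}\bigl(a_0 + T_N\bigr).
\]
Choosing $\varepsilon = 1$ (indeed any fixed $\varepsilon < 2$ works) and rearranging absorbs the term $\tfrac{\varepsilon}{2}T_N$ into the left-hand side, producing the uniform bound $T_N \le S + a_0$ for every $N$. Since the terms $a_k$ are nonnegative, $(T_N)_{N}$ is nondecreasing and bounded above, hence convergent; that is, $\sum_{k=1}^\infty a_k$ converges.

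The argument is essentially routine, so I do not expect a genuine obstacle. The one step needing care is the absorption: it only closes because the coefficient $\varepsilon/2$ multiplying $T_N$ is strictly less than $1$ (which is exactly the role of the restriction $\varepsilon<2$), combined with the bookkeeping of the shifted index $a_{k-1}$ so that $\sum_{k=1}^{N-1} a_k$ genuinely sits below $T_N$. The degenerate case $a_{k-1}=0$ is the other place to be explicit, but it is harmless once one observes that it forces $a_k=0$ and hence leaves the bound intact.
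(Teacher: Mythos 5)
Your proof is correct. A point of comparison: the paper does not actually prove this lemma at all --- it simply declares it standard and defers to \cite[Lemma 4.1]{de2016new} --- so your proposal supplies a self-contained argument where the paper leaves a citation. Your route (factor $a_k = \sqrt{a_k^2/a_{k-1}}\,\sqrt{a_{k-1}}$, apply the weighted arithmetic--geometric mean bound with parameter $\varepsilon$, sum, re-index the shifted sum, and absorb the term $\tfrac{\varepsilon}{2}T_N$ into the left-hand side) is essentially the standard proof of this fact, and the two places that usually require care are handled properly: the absorption is legitimate because each $T_N$ is finite and the coefficient $\varepsilon/2$ is strictly below $1$, and the degenerate indices with $a_{k-1}=0$ are correctly dispatched by noting that finiteness of the hypothesis series forces $a_k=0$ there. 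The resulting explicit bound $\sum_{k=1}^{\infty} a_k \le S + a_0$ is slightly stronger than the bare convergence claim, which is a small bonus of making the argument quantitative.
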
 
\begin{proof}
This is a standard result. A proof can be found in~\cite[Lemma 4.1]{de2016new}.
\end{proof}

\begin{theorem}\label{thm:KL}
Assume that Algorithm~\ref{algo:am-simple} is applied to problem~\eqref{eq:p1} or~\eqref{eq:p1kernel}, for case study~\ref{example:variety} or~\ref{example:clusters}, generates a bounded sequence $(X_k,\U_k)_{k\in \mathbb{N}}$. If~\aref{assumption:sigma} and~\aref{assumption:lipschitz} hold, then, the sequence has finite length, that is
\begin{equation}
\sum_{k=1}^\infty \dist\Big((X_k,\U_k) ,(X_{k+1},\U_{k+1}) \Big) <\infty.
\label{eq:finite_length}
\end{equation}
Therefore $\left(X_k,\U_k\right)_{k\in \mathbb{N}}$ converges to a unique point $(X_*,\U_*)$, which is a critical point of $f$ on $\Mcal$.
\end{theorem}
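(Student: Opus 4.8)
The plan is to follow the by-now standard Kurdyka--Lojasiewicz descent scheme of~\cite{Bolte2013}, transported to the Riemannian setting as in~\cite{hosseini2015convergence}, using the three ingredients already assembled: the sufficient-decrease property (Lemma~\ref{lemma:suffDecr-1}), the gradient lower bound on the iterate gap (Lemma~\ref{lemma:grad_lower_bound-1}), and the KL inequality (Definition~\ref{def:KL}). First I would record the preliminary facts. Since the sequence is bounded it has an accumulation point $\bar z = (\bar X,\bar\U)$, say $z_{k_j}\to\bar z$. Lemma~\ref{lemma:suffDecr-1} shows $f(z_k)$ is nonincreasing and, being nonnegative, convergent to some $\bar f$; continuity of $f$ along the convergent subsequence gives $\bar f = f(\bar z)$. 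Summing the sufficient-decrease inequality yields $\sum_k \dist(z_k,z_{k+1})^2<\infty$, so $\dist(z_k,z_{k+1})\to 0$, and with Lemma~\ref{lemma:grad_lower_bound-1} this recovers $\grad f(z_k)\to 0$. If $f(z_k)=\bar f$ for some $k$ the decrease forces the tail to be constant and the claim is trivial, so I assume $f(z_k)>\bar f$ throughout.

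Next I would verify that $f$ is a KL function on $\M$ for the two admissible models. For case study~\ref{example:variety} the cost $\fronorm{\Phi(X)-P_\U\Phi(X)}^2$ is polynomial in the entries of $X$ and of a representative of $\U$, hence real-analytic; for case study~\ref{example:clusters} the Gaussian kernel makes the cost real-analytic as well. As $\Grass(N,r)$ is a real-analytic manifold and $\LAb$ is affine, $f$ is real-analytic on $\M$, and real-analytic functions satisfy the KL inequality (the Riemannian statement being~\cite{hosseini2015convergence}). This lets me invoke Definition~\ref{def:KL} at $\bar z$, producing a neighbourhood $V$, a radius $\eta$, and a concave $\kappa$.

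The core is then a telescoping estimate. For every $k$ large enough that $z_k\in V$ and $\bar f<f(z_k)<\bar f+\eta$, concavity of $\kappa$ gives
\[
\kappa\big(f(z_k)-\bar f\big)-\kappa\big(f(z_{k+1})-\bar f\big)\ \ge\ \kappa'\big(f(z_k)-\bar f\big)\,\big(f(z_k)-f(z_{k+1})\big).
\]
Substituting the KL bound $\kappa'(f(z_k)-\bar f)\ge 1/\norm{\grad f(z_k)}$, the gradient lower bound $\norm{\grad f(z_k)}\le\rho_2\,\dist(z_{k-1},z_k)$, and the decrease $f(z_k)-f(z_{k+1})\ge\rho_1\,\dist(z_k,z_{k+1})^2$, I obtain, writing $\Delta_k:=\dist(z_k,z_{k+1})$ and $\delta_k$ for the left-hand side,
\[
\frac{\Delta_k^2}{\Delta_{k-1}}\ \le\ \frac{\rho_2}{\rho_1}\,\delta_k .
\]
Because $\kappa\ge 0$ and $\kappa(f(z_k)-\bar f)\to\kappa(0)=0$, the sum $\sum_k\delta_k$ telescopes and is finite, so $\sum_k \Delta_k^2/\Delta_{k-1}<\infty$, and Lemma~\ref{lemma:series_convergence} delivers $\sum_k\Delta_k<\infty$, which is exactly the finite-length claim~\eqref{eq:finite_length}. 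Finite length makes $(z_k)$ Cauchy in the metric~\eqref{eq:dist-sin-theta}; since $\LAb$ is complete and $\Grass(N,r)$ is compact, $\M$ is complete and $z_k\to z_*$, and $\grad f(z_k)\to 0$ with $\grad f$ continuous forces $\grad f(z_*)=0$.

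I expect the main obstacle to be the bookkeeping that licenses applying the KL inequality at \emph{every} large index, i.e. showing the tail of the sequence stays trapped in $V$ rather than only the subsequence $z_{k_j}$. This is the customary KL induction: fix $k_0$ with $z_{k_0}$ close to $\bar z$ and $\kappa(f(z_{k_0})-\bar f)$ small, then prove inductively that the running sum $\sum_{k\ge k_0}\Delta_k$ is bounded by $\dist(z_{k_0},\bar z)$ plus a multiple of $\kappa(f(z_{k_0})-\bar f)$, so the iterates never leave $V$; this simultaneously identifies $\bar z$ as the true limit $z_*$. By comparison, checking analyticity (hence KL) of the Gaussian-kernel cost and confirming completeness of $(\M,\dist)$ are routine.
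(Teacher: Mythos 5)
Your proposal is correct in substance and reaches the same chain of estimates as the paper (concavity of $\kappa$, the KL inequality, $\norm{\grad f(z_k)}\leq\rho_2\dist(z_{k-1},z_k)$ from Lemma~\ref{lemma:grad_lower_bound-1}, the sufficient decrease of Lemma~\ref{lemma:suffDecr-1}, then Lemma~\ref{lemma:series_convergence}), but it resolves the crucial ``trapping'' issue by a genuinely different mechanism. You invoke the KL inequality at a \emph{single} accumulation point $\bar z$ and then propose the Attouch--Bolte--Svaiter induction: bound the running sum $\sum_{k\geq k_0}\dist(z_k,z_{k+1})$ by $\dist(z_{k_0},\bar z)$ plus a multiple of $\kappa(f(z_{k_0})-\bar f)$, so that the tail never escapes the KL neighbourhood $V$. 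The paper avoids any induction by instead \emph{uniformizing} the KL inequality over the whole limit-point set $\omega(z_0)$: compactness of $\omega(z_0)$ gives a finite subcover $V_{\bar z_1},\dots,V_{\bar z_p}$, one takes $\eta=\min_i\eta_{\bar z_i}$ and $\kappa'(t)=\max_i\kappa'_{\bar z_i}(t)$, and then uses the standard fact that $\dist(z_k,\omega(z_0))\to 0$ for a bounded sequence, so the entire tail automatically lies in the uniform neighbourhood and the telescoping sum can be run directly. Both routes are standard in the KL literature and both are valid here; the uniformization buys a shorter argument with no induction, while your single-point version needs no compactness construction and, as you note, identifies $\bar z$ as the limit $z_*$ along the way. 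A side benefit of your write-up is the explicit KL verification via real-analyticity of the cost for both case studies, which is more precise than the paper's one-line appeal to algebraic/exponential functions being KL.

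One bookkeeping point you should not skip if you flesh out the induction: Lemma~\ref{lemma:suffDecr-1} and Lemma~\ref{lemma:normUnormX} require the singular-value gap of \aref{assumption:sigma} to hold \emph{at the iterates} $X_k$, whereas \aref{assumption:sigma} is stated only at accumulation points. The paper devotes a step to this (the index $k_2$ and radius $\bar\delta$): by continuity of singular values, $\sigma_r(\Phi(X_k))>\sigma_{r+1}(\Phi(X_k))$ once $\dist(z_k,\omega(z_0))$ is small, which licenses applying the sufficient-decrease estimate along the tail. Your induction must carry this condition along as well (it is compatible with it, since the induction already keeps $z_k$ near $\bar z$), but as written your proposal applies the two lemmas to all $k$ without justification.
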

\begin{proof}
For case-studies~\ref{example:variety} and~\ref{example:clusters}, the feature map or kernel is an algebraic or exponential function. These functions are known to be KL functions~\cite{Bolte2013}, so the cost function $f$ is a KL function (Definition~\ref{def:KL}). 
For convenience, we write $z_k = (X_k,\U_k)$. Since the sequence $(z_k)_{k\in \mathbb{N}}$ is bounded, there is a subsequence $(z_{k_q})_{q\in \mathbb{N}}$ which converges to some $\bar{z}\in\M$. Let $\omega(z_0)$ denote the set of limit points for some starting point $z_0$. The set $\omega(z_0)$ is bounded by assumption and clearly closed, therefore it is compact.  We want to show that $\omega(z_0)$ is a singleton, i.e. $\omega(z_0)= \{\bar{z}\}$. The function $f$ is continuous, which implies  $\lim_{q\to \infty} f(X_{k_q},\U_{k_q}) = f(\bar{z})$. Since $f(z_{k_q})_{q\in \mathbb{N}}$ is non-increasing, the function $f$ is also constant on $\omega(z_0)$. Since $f$ is a KL function, for every point $z \in \omega(z_0)$, there exists a neighbourhood $V_z$ of $z$ and a continuous concave function $\kappa_z: [0,\eta_z]\rightarrow [0,\infty [$ of class $C^1$ on $]0,\eta_z[$ with $\kappa_z(0)= 0$, $\kappa'_z >0$ on $]0,\eta_z[$ such that, for all $y \in V(z)$ with $f(z) < f(y) < f(z) + \eta_z$, we have 
\begin{equation}
\kappa_z'(f(y) - f(z)) \fronorm{\grad f(y)} \geq 1.
\end{equation}
By compactness of $\omega(z_0)$, we find a finite number of points $\bar{z}_1, \dots, \bar{z}_p \in \omega(z_0)$ such that $\dis\cup_{i=1}^p V_{\bar{z}_i}$ covers $\omega(z_0)$. We choose $\varepsilon >0$, such that $V:= \{z \in \M: \dist\big(z,\omega(z_0)\big) < \varepsilon\}$ is contained in $ \cup_{i = 1}^p V_{\bar{z}_i}$. Then, we set $\eta = \min_{i=1,\dots,p}\eta_{\bar{z}_i}$, $\kappa'(t) = \max_{i = 1,\dots, p} \kappa'_{\bar{z}_i}(t) $ and $\kappa(t) = \int_0^t \kappa'(\tau)d\tau$. We claim that for every $z \in \omega(z_0)$, and $y \in V$, with $ f(z) < f(y) < f(z) + \eta$, we have 
\begin{equation}
\kappa'(f(y) - f(z)) \fronorm{\grad f(y)} \geq 1.
\end{equation}
Indeed, there exists some $\bar{z}_i$ such that $y \in V_{\bar{z}_i}$. Then, from the definition of $\kappa'$ and the fact that $f$ is constant on $\omega(z_0)$,
\begin{equation}
\kappa'(f(y) - f(z)) \fronorm{\grad f(y)} \geq \kappa'_{\bar{z}_i}(f(y) - f({\bar{z}_i})) \fronorm{\grad f(y)} \geq 1.
\end{equation}
For $\eta >0$ given above,  there exists $k_0 $ such that for all $ k>k_0$,
\begin{equation}
 f(z_k) < f(\bar{z}) +\eta.  
\end{equation}
 By definition of the accumulation points,  there exists $ k_1 $ such that  for all $ k>k_1$,
\begin{equation}
\dist(z_k,\omega(z_0)) <\varepsilon.
\end{equation}
Since $\sigma_r (\Phi(X))> \sigma_{r+1}(\Phi(X))$ for any $X$ such that $(X,\U)\in \omega(z_0)$~(\aref{assumption:sigma}), by continuity of the singular values, there exists $\bar{\delta}>0$ such that for all points $z_k$ satisfying $\dist(z_k,\omega(z_0)) <\bar{\delta}$, we have $ \sigma_r (\Phi(X_k))> \sigma_{r+1}(\Phi(X)_k)$. Again by definition, there exists $k_2$ such that for all $ k>k_2$, 
\begin{equation}
\dist(z_k,\omega(z_0)) <\bar{\delta}.
\end{equation}
For $k>l= \max\{k_0, k_1,k_2\}$, we have
\begin{equation}
\kappa'(f(z_k) - f(\bar{z})) \fronorm{\grad f(z_k) } \geq 1.
\end{equation}
Using  $\fronorm{\grad f(z_{k+1})} \leq \rho_2 \dist\Big(z_k ,z_{k+1} \Big)$ (Equation~\eqref{eq:rho2-1}), gives
\begin{equation}
\kappa'(f(z_k) - f(\bar{z})) \geq \dfrac{1}{\rho_2 \dist\Big(z_{k-1},z_k \Big)}.
\end{equation}
Concavity of $\kappa$ gives
\begin{equation}
\kappa\Big(f(z_k) - f(\bar{z})\Big) - \kappa\Big(f(z_{k+1}) - f(\bar{z})\Big) \geq \kappa'\Big(f(z_k) - f(\bar{z})\Big) \Big(f(z_k) - f(z_{k+1})\Big).
\end{equation}
Since $k>l\geq k_2$, we have that $\rho_1 \dist^2\Big(z_k,z_{k+1} \Big) \leq f(z_k) - f(z_{k+1})$ by Equation~\eqref{eq:suff_decrease_rho1} , 
\begin{equation}
\kappa\Big(f(z_k) - f(\bar{z})\Big) - \kappa\Big(f(z_{k+1}) - f(\bar{z})\Big) \geq  \dfrac{1}{\rho_2  \dist\Big(z_{k-1},z_k \Big)} \rho_1  \dist^2\Big( z_k,z_{k+1} \Big),
\end{equation}
and so
\begin{equation}
\dfrac{ \dist^2\Big(z_k,z_{k+1} \Big)}{ \dist\Big(z_{k-1},z_k\Big)} \leq \dfrac{\rho_2}{\rho_1}\kappa\Big(f(z_k) - f(\bar{z})\Big) - \kappa\Big(f(z_{k+1}) - f(\bar{z})\Big)\\
\label{eq:736}
\end{equation}
For any $N>l$, we sum~\eqref{eq:736} for all $l\leq k \leq N$, using that the right hand side is a telescopic sum,
\begin{align}
\nonumber
\sum_{k \geq l}^N \dfrac{ \dist^2\Big(z_k ,z_{k+1} \Big)}{\dist\Big(z_{k-1} ,z_k \Big)} &\leq \sum_{k \geq l}^N \dfrac{\rho_2}{\rho_1}  \Big[ \kappa\Big(f(z_k) - f(\bar{z})\Big) - \kappa\Big(f(z_{k+1}) - f(\bar{z})\Big)\Big] \\\nonumber
&\leq  \dfrac{\rho_2}{\rho_1}\Big[   \kappa\Big(f(z_l) - f(\bar{z})\Big) - \kappa\Big(f(z_{N}) - f(\bar{z})\Big)\Big] \\\nonumber
&\leq  \dfrac{\rho_2}{\rho_1}\Big[   \kappa\Big(f(z_l) - f(\bar{z})\Big) - \kappa\Big(f(\bar{z}) - f(\bar{z})\Big)\Big] \\
&=   \dfrac{\rho_2}{\rho_1}   \kappa\Big(f(z_l) - f(\bar{z})\Big),
\label{eq:740}
\end{align}
where we used that $f(\bar{z}) \leq f(z_N)$ , $\kappa$ is increasing and $\kappa(0)=0$. Letting $N\to \infty$ in~\eqref{eq:740}, we deduce that the left-hand side of~\eqref{eq:740} converges.
By Lemma~\ref{lemma:series_convergence}, $\sum_{k \geq l}^\infty \dist\Big(z_k,z_{k+1} \Big) $ also converges and therefore  
\begin{equation}
\sum_{k=1}^\infty  \dist\Big(z_k ,z_{k+1} \Big) < \infty. 
\end{equation}
This concludes the proof and shows finite length of the sequence of iterates, which implies convergence of the Cauchy sequence $\big(X_k, \U_k\big)_{k\in \mathbb{N}}$ to a unique point $(X_*, \U_*)$. 
\end{proof}

\section{Framework for nonlinear matrix recovery}
\label{sec:framework}
We summarize the different components of the nonlinear matrix recovery problem. The matrix to be completed must be lifted to a higher dimensional space. This can be done through a kernel, in which case one solves problem~\eqref{eq:p1kernel}, or a matrix of features, in which case one solves~\eqref{eq:p1}. When the matrix $M$ to be recovered follows an algebraic variety model, one should use the monomial features or kernel (case study~\ref{example:variety}). When the data is scattered in clusters, 
the Gaussian kernel must be used as lifting (case study~\ref{example:clusters}). 
\begin{center}
\begin{tikzpicture}
\node[above] at (0,10.5) {Data Structure};
\node[above] at (0,10) { in matrix $M$};
\draw[-,thick] (-2,10) to  (2,10);
\draw[-,thick] (2,10) to  (2,11);
\draw[-,thick] (2,11) to  (-2,11);
\draw[-,thick] (-2,11) to  (-2,10);

\draw[->,thick] (-1.5,10) to  (-2.5,9);
\node[above] at (-2.5,8) {Algebraic varieties (e.g. Union of subspaces)};
\draw[->,thick] (-2.5,8) to (-2.5,7);
\node[above] at (-2.5,6) {Lift};

\draw[-,thick] (-3.5,6) to  (-1.5,6);
\draw[-,thick] (-1.5,6) to  (-1.5,6.7);
\draw[-,thick] (-1.5,6.7) to  (-3.5,6.7);
\draw[-,thick] (-3.5,6.7) to  (-3.5,6);

\draw[->,thick] (-2.5,6) to (-0.5,4);
\node[below] at (0,4) {Monomial features};
\draw[->,thick] (-3,6) to (-3.5,5);
\node[below] at (-3.4,5) {Monomial kernel};

\draw[->,thick] (1.5,10) to  (3.5,9);
\node[above] at (4,8) {Clusters};
\draw[->,thick] (4,8) to (4,7);
\node[above] at (4,6) {Lift};
\draw[->,thick] (4,6) to (4,5);
\node[below] at (4,5) {Gaussian kernel};

\draw[-,thick] (3,6) to  (5,6);
\draw[-,thick] (5,6) to  (5,6.7);
\draw[-,thick] (5,6.7) to  (3,6.7);
\draw[-,thick] (3,6.7) to  (3,6);
\end{tikzpicture}
\end{center}
\paragraph{}
In addition, one needs to choose an algorithm to solve the problem formulation~\eqref{eq:p1} or~\eqref{eq:p1kernel}. We propose two families of algorithms: Alternating minimization (Algorithm~\ref{algo:am}) and Riemannian trust region (Algorithm~\ref{algo:RTR}). Each of these algorithms have first- and second-order variants, depending on whether they use the Hessian of the cost function. \nl 
\begin{center}
\begin{tikzpicture}
\node[above] at (0,10.2) {Algorithm};
\draw[-,thick] (-2,10) to  (2,10);
\draw[-,thick] (2,10) to  (2,11);
\draw[-,thick] (2,11) to  (-2,11);
\draw[-,thick] (-2,11) to  (-2,10);
\draw[->,thick] (-1.5,10) to  (-2.5,9);
\node[above] at (-2.5,8) {Riemannian trust-region};
\node[above] at (-2.5,7.5) {(Algorithm~\ref{algo:RTR})};
\draw[->,thick] (1.5,10) to  (3.5,9);
\node[above] at (4,8) {Alternating minimization};
\node[above] at (4,7.5) {(Algorithm~\ref{algo:am})};

\draw[->,thick] (4,7.5) to (5,6.5);
\draw[->,thick] (4,7.5) to (3,6.5);
\node[below] at (2.75,6.5) {first-order};
\node[below] at (5.25,6.5) {second-order};

\draw[->,thick] (-2.5,7.5) to (-3.5,6.5);
\draw[->,thick] (-2.5,7.5) to (-1.5,6.5);
\node[below] at (-3.75,6.5) {first-order};
\node[below] at (-1.25,6.5) {second-order};
\end{tikzpicture}
\end{center}

\section{Numerical experiments}
\label{sec:numerics}
In this section we validate our approach with numerical results on randomly generated test problems. We also compare the performances of the different algorithms we propose.

\subsection{Implementation of the algorithms}
Let us describe the implementation of the different algorithms  and variants that are considered. \texttt{Altmin1} is a first-order version of alternating minimization (Algorithm~\ref{algo:am}) which uses gradient descent with Armijo linesearch to solve subproblem~\eqref{eq:s1}. It uses the monomial kernel (Equation~\eqref{eq:mono_kernel}). The degree of the kernel that gives the best results is almost always $d=2$. We set the constant $c=1$ in the monomial kernel. 
 In \ttt{Altmin2}, a second-order trust region method using the exact Hessian is applied to the minimization of~\eqref{eq:s1}. This is the only difference with \ttt{Altmin1}. The default values for the parameters of Algorithm~\ref{algo:am} and the Gaussian and monomial kernels are presented in the table below. 
 
\begin{center}
 \begin{tabular}{|c|c|c|c|}
 \hline 
  Parameter & Default value & Parameter & Default value \\ 
 \hline 
 $\varepsilon_x$, $\varepsilon_u$  & $10^{-6}$ & $c$ in Equation~\eqref{eq:mono_kernel} & 1 \\ 
 \hline 
  $\varepsilon_{x,k}$ & Equation~\eqref{eq:greedy_decrease}  & $\alpha_0$ in Algorithm~\ref{algoLS} & 2 \\  
   \hline 
  $\sigma$ in Equation~\eqref{eq:gaussian_kernel} & 2.5 & $\tau$ in Algorithm~\ref{algoLS} & 0.5 \\  
 \hline 
  $d$ in Equation~\eqref{eq:mono_kernel} & 2 & $\beta$ in Algorithm~\ref{algoLS} & $10^{-4}$ \\  
 \hline 
 \end{tabular} 
\end{center}
 Our code is available at \url{https://github.com/flgoyens/nonlinear-matrix-recovery} in both Matlab and Python. We use the Manopt~\cite{manopt} and Pymanopt~\cite{townsend2016pymanopt} libraries for optimization on manifold solvers. The Riemannian trust-region \ttt{RTR2}, which implements Algorithm~\ref{algo:RTR}, is the corresponding Manopt solver for optimization on manifolds. We used a second-order version with the Hessian in the model and the default parameters of the solver. The maximum number of iterations is set at $500$ for \ttt{RTR2}. In Manopt, the subproblems are solved with a truncated conjugate gradient method and the final termination criterion is only a first-order condition (the norm of the gradient) which we set at $10^{-6}$ for \ttt{RTR2}. Pymanopt uses automatic differentiation and does not require to compute the derivatives by hand, while the Manopt uses finite differences if the Hessian is not given as an input.

\subsection{Test problems}

We describe the set of parameters that we want to vary and test the dependence of each algorithm with respect to these parameters.

\paragraph{Union of subspaces}
Case study~\ref{test:uos} depends on the following parameters: ambient dimension $n$,
number of subspaces, dimension of each subspace, number of points on each subspace. To generate a random union of subspaces, we place the same number of points on each subspace and take subspaces of the same dimension. We calculate a basis for a random subspace and generate each point on that subspace by taking a random combination of the columns of that basis.

\paragraph{Clusters}
For case study~\ref{example:clusters}, the parameters defining a point cloud divided in clusters in $\R^n$ are: the number of clusters, the number of points in each cluster and the standard deviation $\sigma_c$ of each cluster. We first generate random centres in $\R^n$.  We then add to each centre a cluster of points with multivariate Gaussian distribution with zero mean and covariance $\sigma_c^2 \Id$ with $\sigma_c = 0.5$.

\subsection{Testing methodology}

Throughout we say that an algorithm successfully recovers the matrix $M\in \R^{n\times s}$ if it returns a matrix $X^{output}$ such that the root mean square error (RMSE) is below $10^{-3}$,
\begin{equation}
\mathrm{RMSE}(M,X^{output}) :=\fronorm{X^{output} - M}/\sqrt{ns} \leq 10^{-3}.
\end{equation}
Our goal is to test the ability of our methods to recover the original matrix $M$. We measure the performance against an increase in difficulty of the problem for several parameters. Parameters that increase the difficulty of the recovery include: 
\begin{enumerate}
\item Reducing the number of measurements $m$;
\item Increasing the rank in the feature space. 
\end{enumerate}
In the case of unions of subspaces, for a fixed number of points, the rank of $\Phi_d(M)$ depends on the number and the dimension of the subspaces, as indicated by Proposition~\ref{prop:rank_phi_uos}. For clusters, the rank increases with the number of clusters.
The undersampling ratio is defined as 
\begin{align}\label{eq:undersampling}
\delta = \dfrac{m}{ns},
\end{align}
it is the number of measurements over the number of entries in $M$. 
We present phase transition results to numerically show which geometries can be recovered and which undersampling ratios are needed. 
Typical phase transition plots for matrix completion vary the undersampling ratio and the rank of the matrix~\cite{Tanner2013}. For union of subspaces, the rank of the feature space is difficult to control, therefore we vary the number and dimension of the subspaces. For each value of the varying parameter, we generate $10$ random matrices $M$ that follow the desired structure. We try to recover each with varying $\delta$ from $0.1$ to $0.9$ for a random initial guess. If the RMSE is below $10^{-3}$ in the maximum number of iterations allowed by the algorithm, we consider the recovery to be successful. The phase transition plots record which of the 10 random problems is solved for each configuration. In Figures~\ref{fig:phase_s} through~\ref{fig:uos_rank} the grayscale indicates the proportion of problems solved, with white $= 100\%$ of instances solved and black $=0\%$.
\subsection{Numerical results}

\subsubsection{Comparing the performance of RTR and Alternating minimization}

Figure~\ref{fig:convergence} compares the performance of \ttt{RTR2} (Algorithm~\ref{algo:RTR} using a second-order Taylor model), \ttt{Altmin1} and \ttt{Altmin2} which are first- and second-order alternating minimization (Algorithm~\ref{algo:am}). We chose a problem of matrix completion over a union of subspaces. We find that \ttt{RTR2}  has a local quadratic rate of convergence, which makes it the method of choice if we want to recover $M$ to high accuracy. Both \ttt{Altmin1} and \ttt{Altmin2} make faster progress during the early iterations; thus these methods should be considered if the required accuracy is low. We also observed that, in general, the distance to the solution $M$ is of the same order of magnitude than the gradient norm. That is, using an algorithm, such as \ttt{RTR2}, which terminates with a smaller gradient norm yields a greater accuracy for the recovery. We noticed that the first-order methods, as well as the second-order \ttt{Altmin2}, typically stall numerically when the gradient norm is around $10^{-7}$, but that is not the case for \ttt{RTR2}.

\begin{figure}[!htb]
\centering
\subfigure{
\includegraphics[width = 0.45\textwidth]{./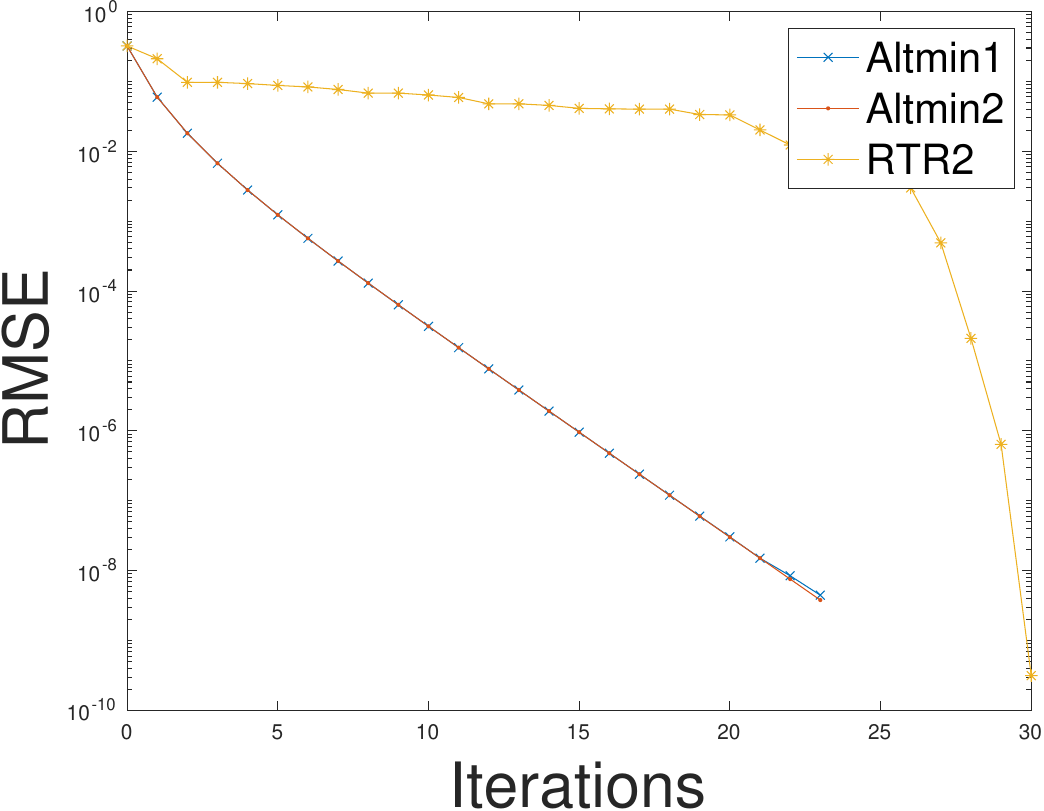}
}
\quad
\subfigure{
\includegraphics[width = 0.45\textwidth]{./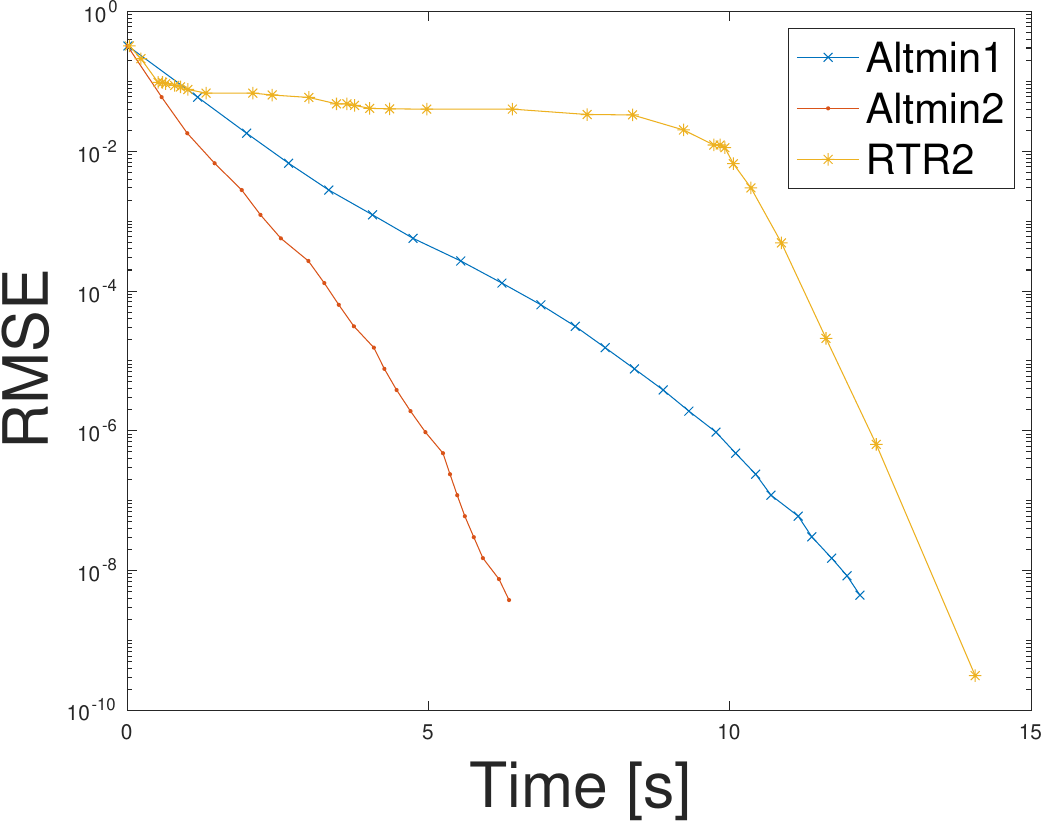}
}
\quad
\subfigure{
\includegraphics[width = 0.45\textwidth]{./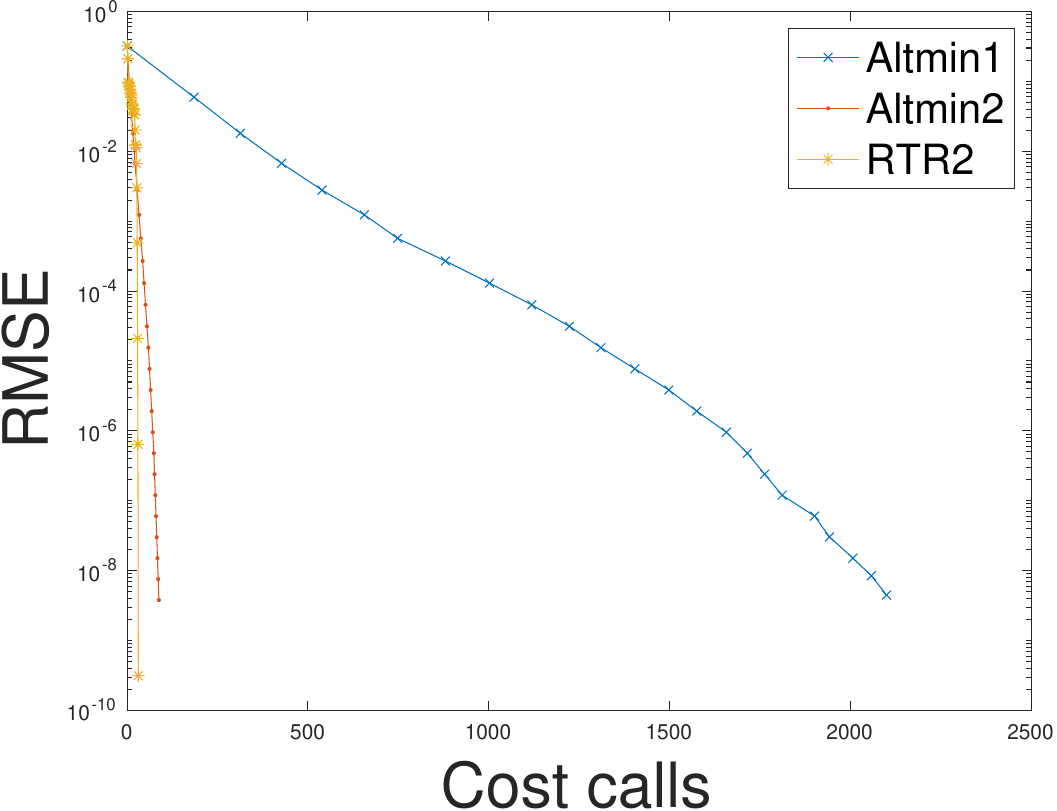}
}
\quad
\subfigure{
\includegraphics[width = 0.45\textwidth]{./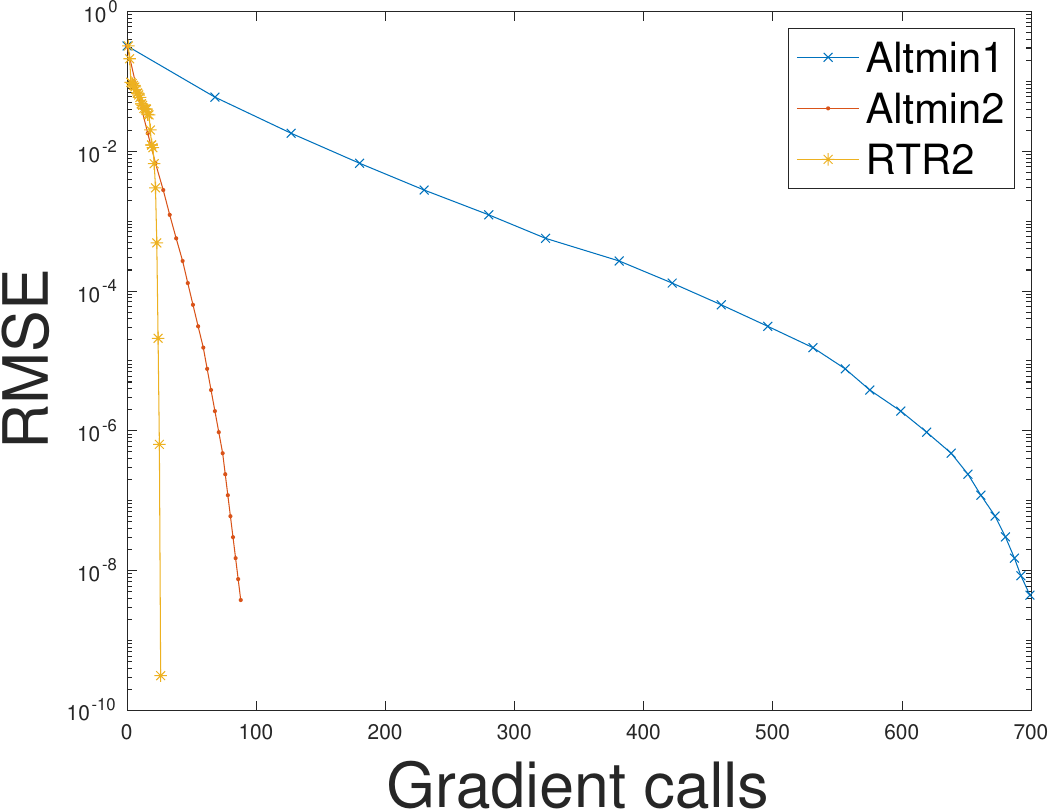}
}
\quad
\subfigure{
\includegraphics[width = 0.45\textwidth]{./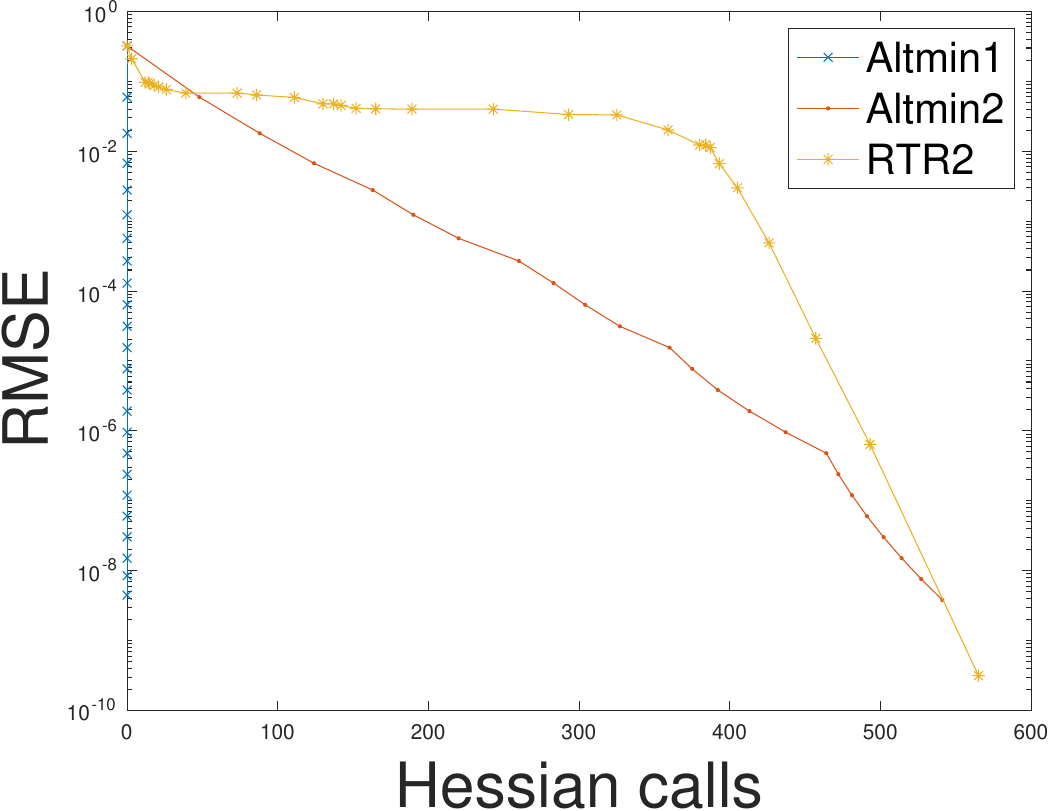}
}
\caption{Comparing alternating minimization (first-order Altmin1 and second-order Altmin2) with the Riemannian trust-region algorithm (RTR2) for a union of subspaces recovery.}
\label{fig:convergence}
\end{figure}

\subsubsection{Recovery of unions of subspaces}

We now illustrate how the parameters at play affect the recovery for data that follows a union of subspaces model.

\paragraph{Degree of the polynomial features}
Deciding which degree $d$ to use in practice requires a careful choice. Previous works limit themselves to $d=2$ and $d=3$.  This is understandable because the dimension of the features $N(n,d)$ increases exponentially with $d$, and so the dimension of the Grassmannian variable in~\eqref{eq:p1} becomes too large and the problem becomes practically intractable for even moderate values of $d$ and $n$. For example, $N(n=20, d=5) = 53130$ while $N(n=20, d=2) = 231$.  

 Therefore, the other natural option is to solve the kernel-based problem~\eqref{eq:p1kernel}, where the dimension of the feature space $N(n,d)$ does not appear explicitly and the Grassmann has dimension $s\times r$. This is attractive because, a priori, $s$ may not be as large as $N(n,d)$. However, there are important requirements on the number of samples needed to allow recovery. We need to ensure that $s \geq N- q$ where $q$ is the number of linearly independent vectors $v$ such that  $v^\top \Phi_d(M) = 0$. That is, $s$ needs to be large enough so that $\rank(\Phi_d(M))$ is not limited by $s$ but the dimension of the variety. Some analysis in~\cite{Ongie2017} shows that the number of points $s$ needed to allow recovery increases exponentially with $d$. For that reason, if $d$ is not small, it is not realistic to solve problems where $s$ is large enough to enable recovery. The monomial basis is also known to be ill-conditioned for large degrees. This gives two obstacles to the performances of these algorithms when the degree increases.

In Figure~\ref{fig:phase_s}, we solve the recovery problems using \ttt{RTR2} with an increasing number of data points, and using monomial kernels of degree one, two and three to compare the recovery that is possible for each degree.
In Figure~\ref{fig:s_uos_d1}, the degree used is $d=1$. For $n=15$, the dimension of the feature space is $N(15,1) = 16$. For a large number of data points spread over 4 subspaces of dimension 2, the rank of the monomial kernel is 9. This explains why recovery is impossible when $s\leq 9$, since the kernel is not rank deficient at the solution $M$.
In Figure~\ref{fig:s_uos_d2}, the degree used is $d=2$. For $n=15$, the dimension of the feature space $N(15,2) = 136$. For a large number of data points spread over  4 subspaces of dimension 2, the rank of the monomial kernel is 21. This explains why recovery is impossible when $s\leq 21$, since the kernel is not rank deficient at the solution $M$. 
In Figure~\ref{fig:s_uos_d3}, the degree used is $d=3$. For $n=15$, the dimension of the feature space is $N(15,3) = 816$. For a large number of data points spread over 4 subspaces of dimension 2, the rank of the monomial kernel is 37. This  explains why recovery is impossible when $s\leq 37$, since the kernel is not rank deficient at the solution $M$. We notice that the recovery is still poor for $s\geq  37$, which is likely induced by a worse conditioning of the monomial embedding. In general, $d=2$ seems to give the best results for the majority of data sets.

\begin{figure}[!htb]
\centering	
\subfigure[$d=1$]{
\includegraphics[width = 0.45\textwidth]{./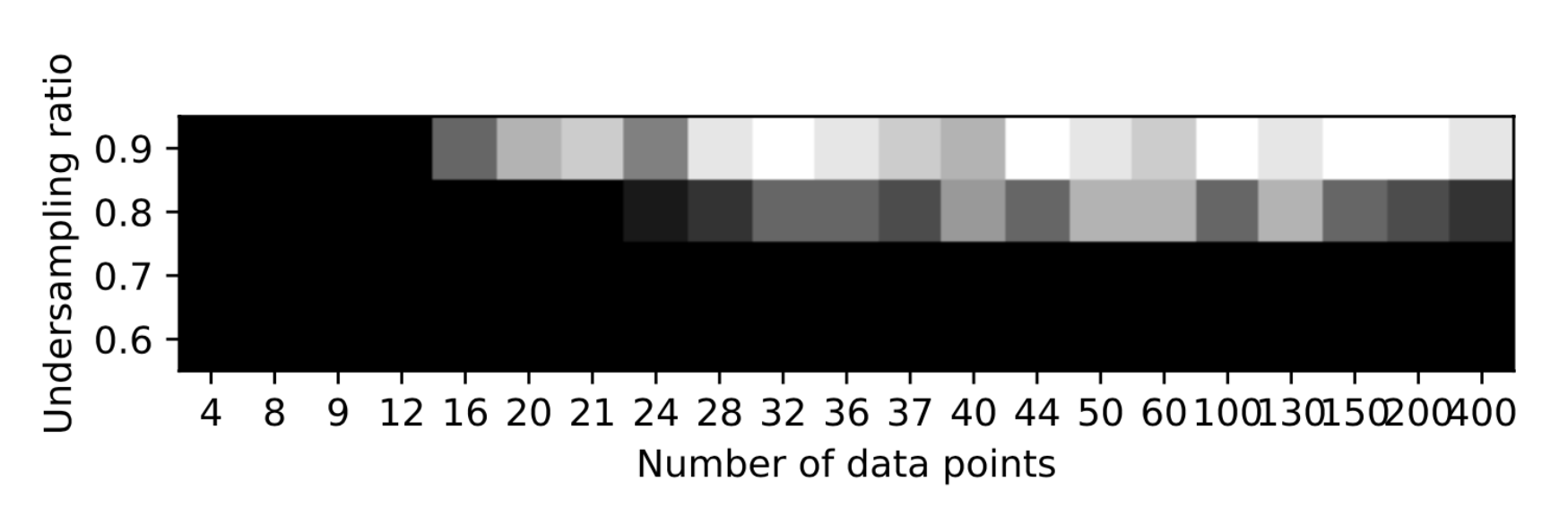}
\label{fig:s_uos_d1}
}
\quad
\subfigure[$d=2$]{
\includegraphics[width = 0.45\textwidth]{./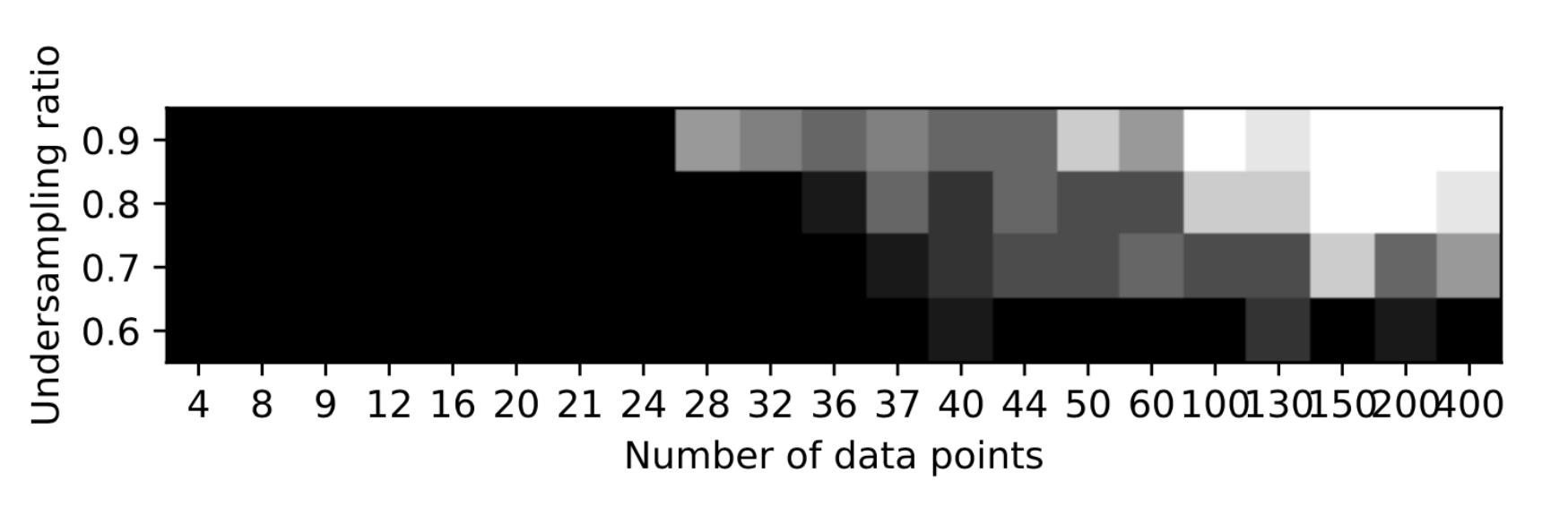}
\label{fig:s_uos_d2}
}
\quad
\subfigure[$d=3$]{
\includegraphics[width = 0.45\textwidth]{./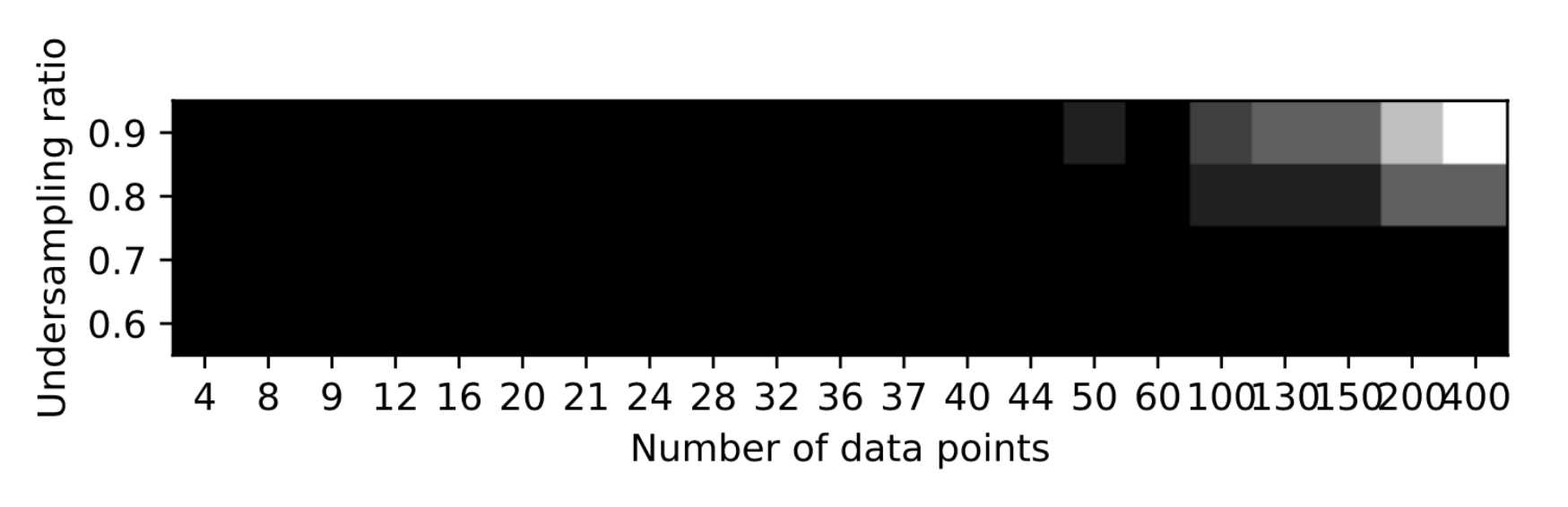}
\label{fig:s_uos_d3}
}
\caption{Phrase transition for data belonging to a union of $4$ subspaces of dimension $2$ in $\mathbb{R}^{15}$ for an increasing number of data points spread across the subspaces. Each square gives the proportion of problems solved over $50$ randomly generated problems, $\text{white}=100\%$ of instance recovered and black~$=0\%$.}
\label{fig:phase_s}
\end{figure}

The dimension of the subspaces that we aim to recover plays a role in the possibility to recover. In Figure~\ref{fig:phase_dimsub}, we increase  the dimension of the subspaces while the other parameters of the data remain fixed. For a fixed number of data points $s$, increasing the dimension of the subspaces increases the rank of the monomial features (see Proposition~\ref{prop:rank_phi_uos}), and therefore, if the dimension of the subspaces becomes too large, the recovery is compromised. For 2 subspaces of dimension smaller than 4 in $\R^{10}$, we can observe good recovery depending on the undersampling ratio.
\begin{figure}[!h]
\centering
\includegraphics[width = 0.45\textwidth]{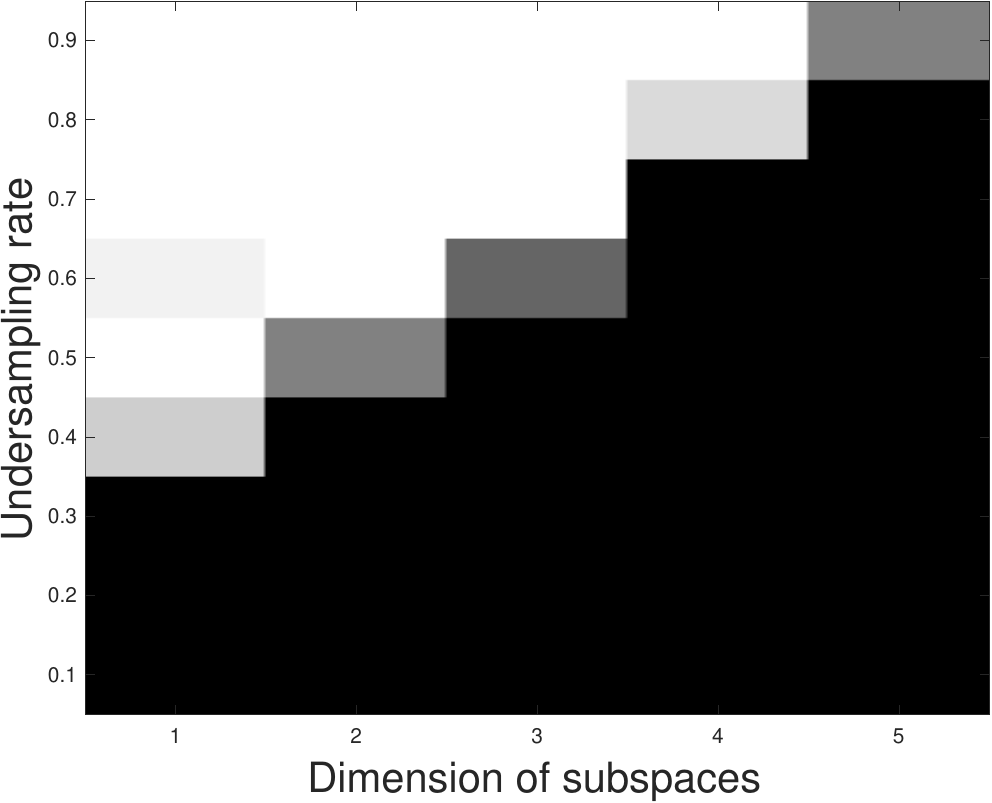}
\caption{Phrase transition for data belonging to a union of 2 subspaces of increasing dimension in $\mathbb{R}^{10}$ with 20 points on each subspace. Each square gives the proportion of problems solved over $50$  randomly generated problems, $\text{white}=100\%$ of instance recovered and black~$=0\%$, using the polynomial kernel of degree $d=2$ for the embedding.}
\label{fig:phase_dimsub}
\end{figure}

The same phenomenon is observed when the number of subspaces is increased for a fixed number of data points, see Figure~\ref{fig:phase_nsub}.

\begin{figure}[!htb]
\centering
\subfigure[Monomial kernel degree $d=1$]{
\includegraphics[width = 0.45\textwidth]{./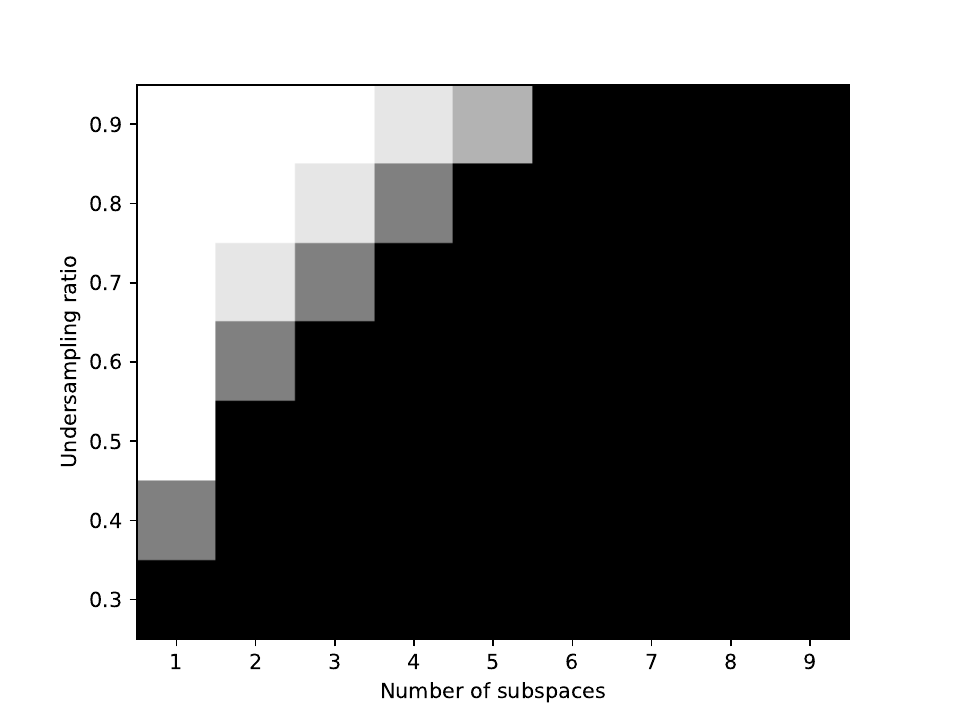}
}
\quad
\subfigure[Monomial kernel degree $d = 2$]{
\includegraphics[width = 0.45\textwidth]{./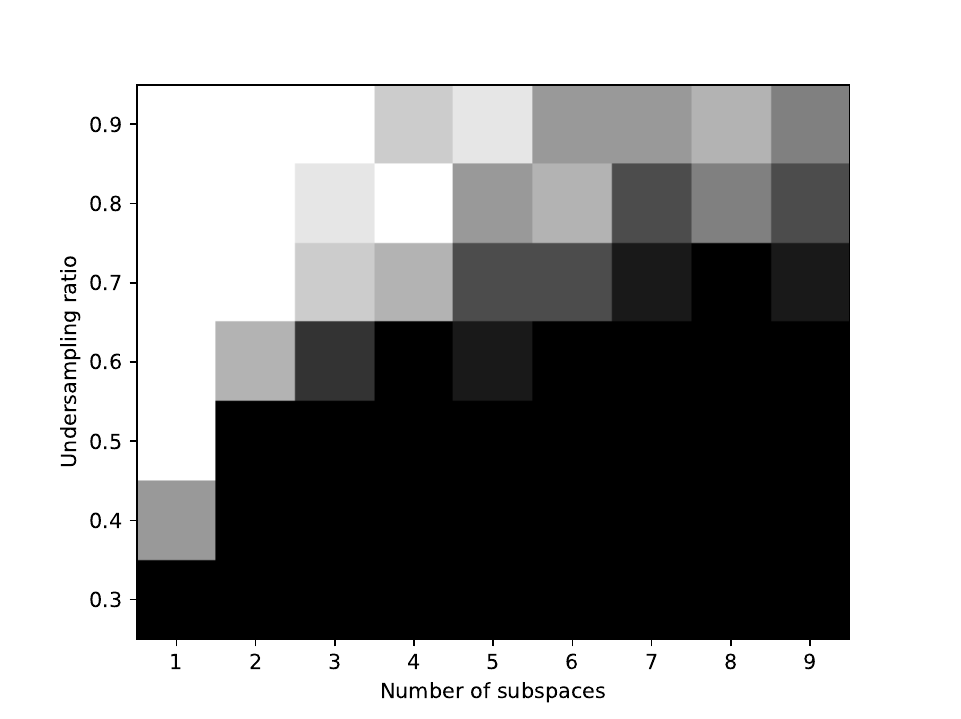}
}
\quad
\subfigure[Monomial kernel degree $d = 3$]{
\includegraphics[width = 0.45\textwidth]{./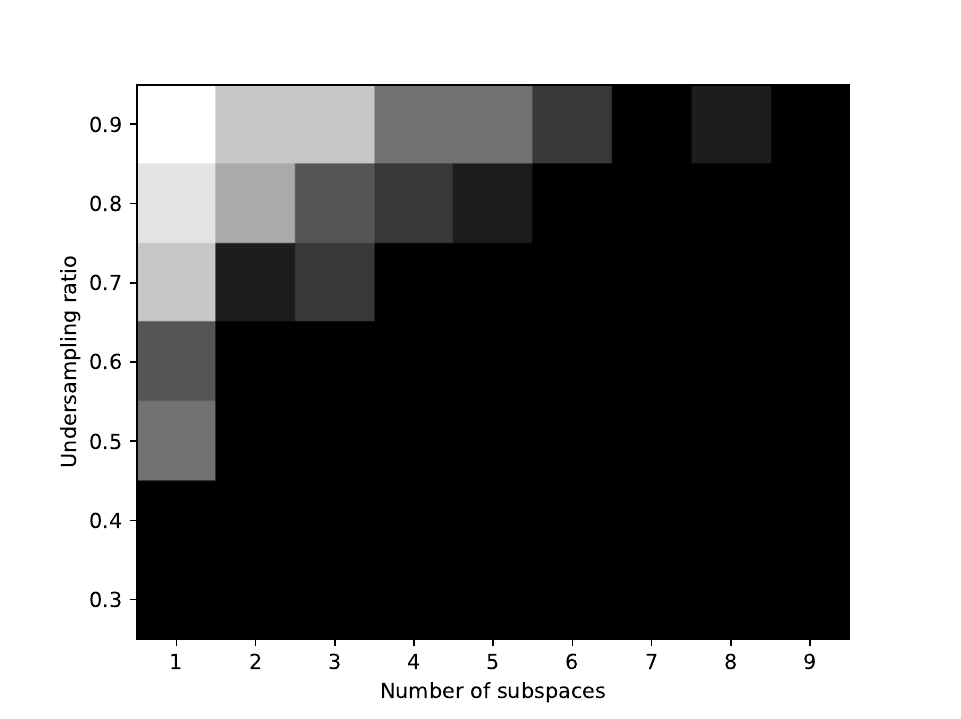}
}
\caption{Phrase transition for data belonging to a union of an increasing number of subspaces  of dimension $2$ in $\mathbb{R}^{15}$ with $150$ data points spread across the subspaces. Each square gives the proportion of problems solved over $10$  randomly generated problems, $\text{white}=100\%$ of instance recovered and black~$=0\%$.}
\label{fig:phase_nsub}
\end{figure}

\FloatBarrier

\subsubsection{Clustering with missing data}
In the case of clusters (case study~\ref{example:clusters}), there is a noise inherent to the model because the kernel at the solution is only approximately low-rank. In fact, the matrix $\K^G(M,M)$ has numerical full rank, but there is a big gap in the singular values. These matrices are notoriously difficult to recover in low rank matrix completion. For this reason the recovery error $\fronorm{M-X^*}^2$ rarely converges to high accuracy and recovering the matrix up to 2 digits of accuracy is typical. This completed matrix $X^*$ allows to do a clustering of the data points starting from missing entries. We are interested in determining when the matrix $X^*$ has the same clustering as $M$. 
We use the Rand index to measure the compatibility of two different clusterings of the same set~\cite{rand1971objective}. We can see in Figure~\ref{fig:clusters} that for 5 clusters or less, the original clustering can be recovered even though up to 40\% of the entries in the original matrix are missing.

\begin{figure}[!h]
\centering
\includegraphics[width = 0.45\textwidth]{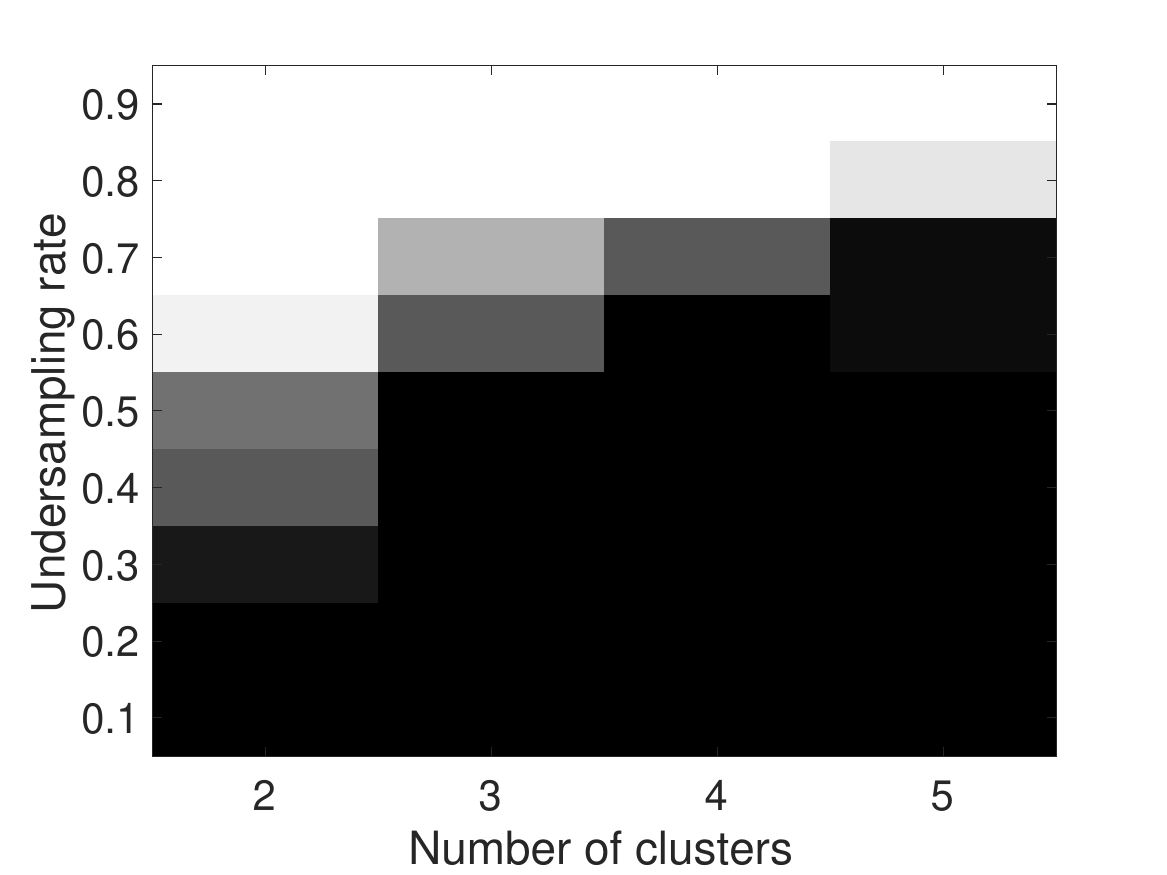}
\caption{Percentage of problems correctly clustered for different sampling rates and increasing number of clusters. 50 random instances generated in each case; $\text{white}=100\%$ of instance correctly clustered and black~$=0\%$. Clusters belong to $\R^5$ with 20 points in each cluster.}
\label{fig:clusters}
\end{figure}
\FloatBarrier

 \subsubsection{Robustness to measurement noise}
 In applications, it is common to assume some noise on the measurements, namely, $\langle A_i, M\rangle = b_i + \xi_i =:  \tilde b_i$ where $ \xi_i\in \R$ is some noise. In the following numerical test we generate white Gaussian noise, i.e. $\xi_i \sim \mathcal{N}(0,\sigma^2)$ for some variance $\sigma^2$. The problem formulation then becomes Problem~\eqref{eq:p1_noise} with $\lambda>0$ as the penalty parameter that should be tuned based on the noise level.

Estimating an appropriate value for $\lambda$ without knowledge of the noise variance $\sigma^2$ is an intricate task. The solution of~\eqref{eq:p1_noise} for $\lambda \in [0, \infty[$ represents the trade-off curve between minimization of the rank residual and minimization of the residual on the linear measurements. In practical settings, a user may be able to determine which trade-off is more meaningful for a particular application. As a general strategy, we use a scheme which increases $\lambda$ over successive calls to the solver, while warm starting each solve with the previous solution to~\eqref{eq:p1_noise}. We have found that starting with the value $\lambda=10^{-6}$ is satisfactory and we multiply $\lambda$ by a factor 10 at each iteration. Figure~\ref{fig:noise12} shows, for three different noise levels, the evolution of the solution of~\eqref{eq:p1_noise}, labelled $X^*$, as the penalty parameter $\lambda$ increases. We see that where the blue and red lines cross, the green line is still near its lowest point, that is, the solution is still minimizing the true measurement residual as well as for any other value of $\lambda$. This allows to recommend the simple strategy of choosing the value of $\lambda$ where the values of the red and blue curves are the closest (which approximates the value for which they intersect). This choice gives equal weight to the rank minimization and satisfaction of the measurements.
 
Table~\ref{table:noise} shows the accuracy of the solution $X^*$ for that choice of $\lambda$. We see that both the infeasibility ($\norm{ \A(X^*)-b }$) and the distance to the solution ($\norm{X^*-M}$) are proportional to the noise level and decreases with the later. This shows that the warm start scheme to find a good value for $\lambda$ in conjunction with Problem~\eqref{eq:p1_noise} handles the presence of noise in the measurements very well. 
 
\begin{figure}[!htb]
\centering
\subfigure[$\sigma = 10^{-2}$]{
\includegraphics[width = 0.45\textwidth]{./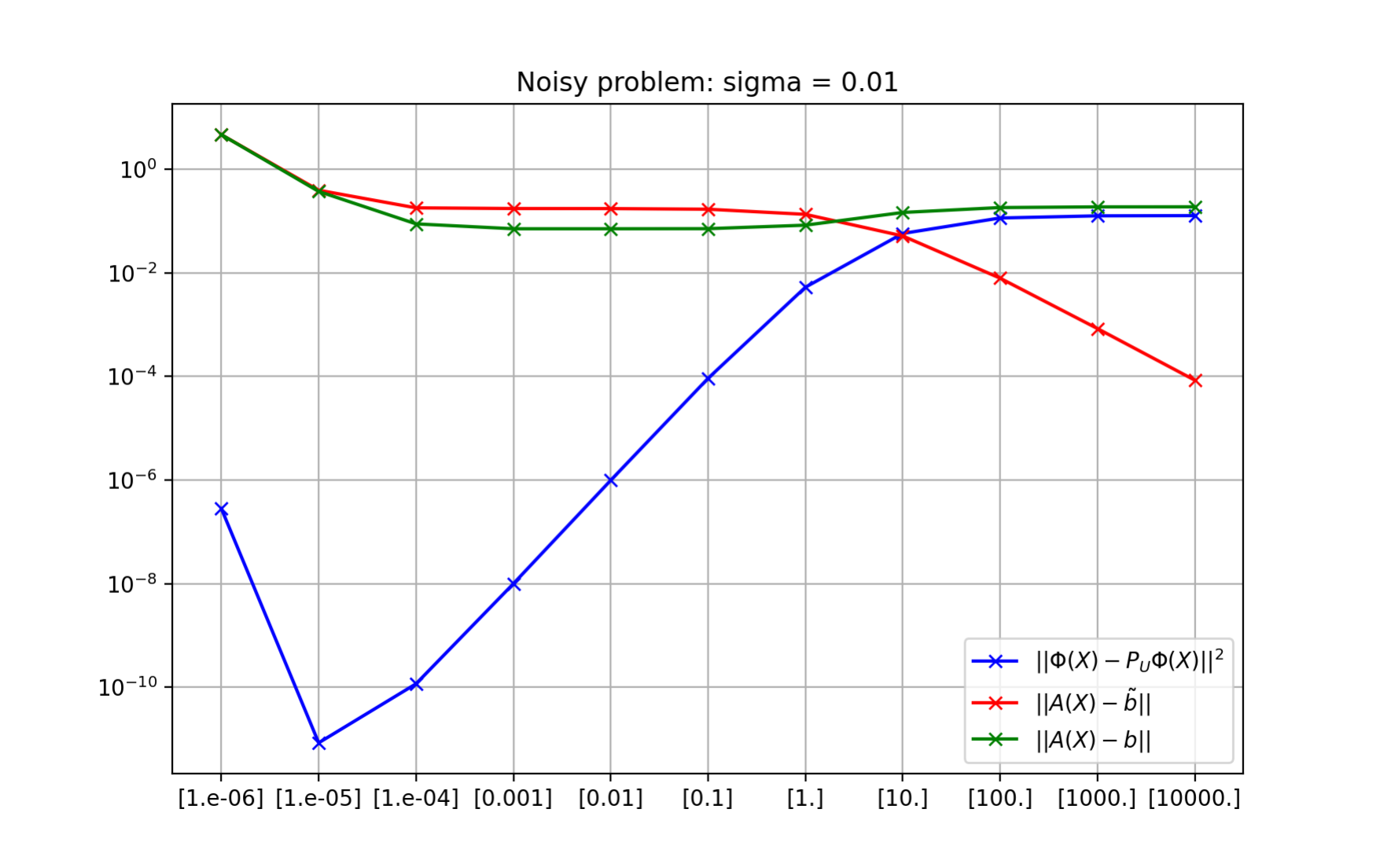}
}
\quad
\subfigure[$\sigma = 10^{-3}$]{
\includegraphics[width = 0.45\textwidth]{./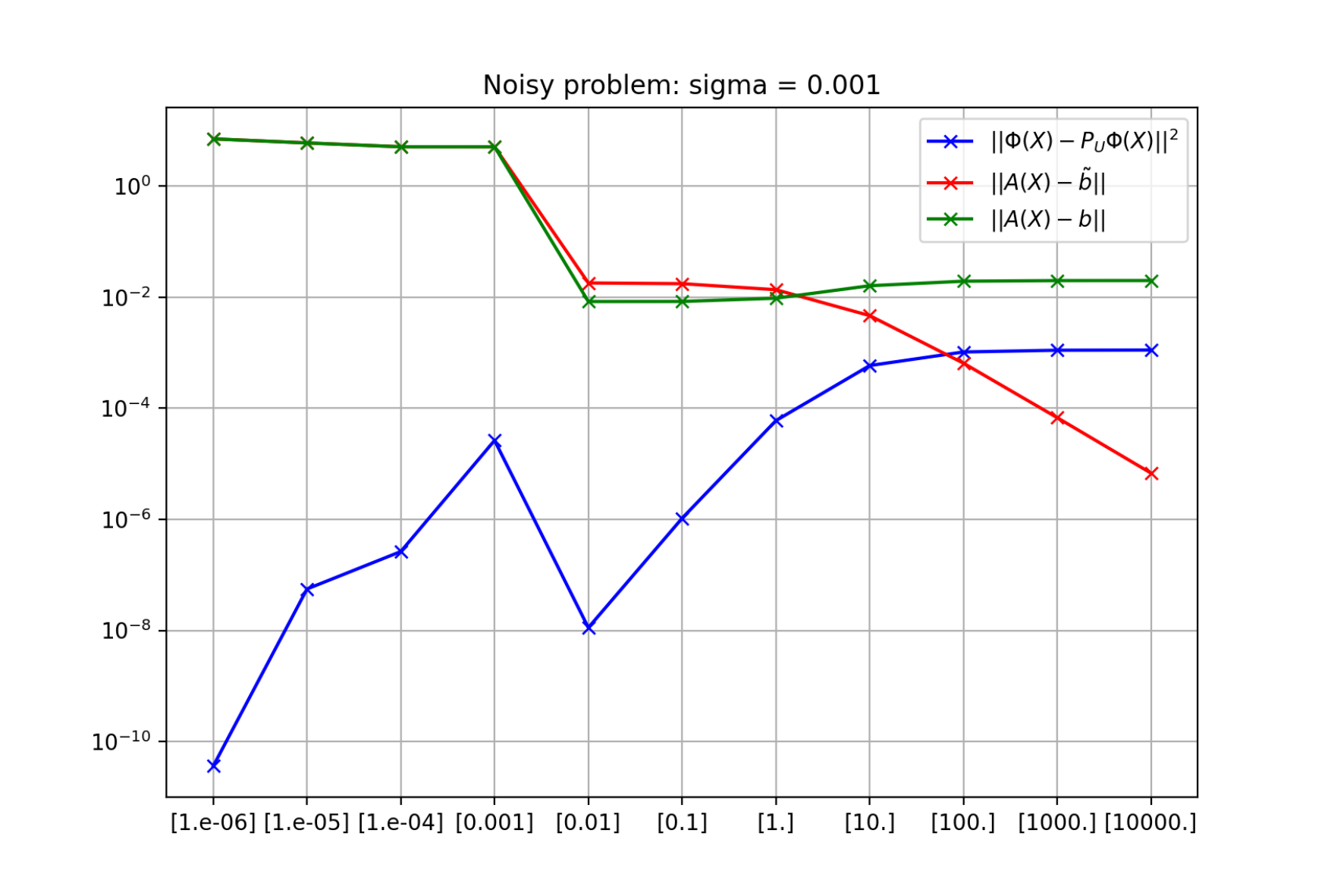}
}
\quad
\subfigure[$\sigma = 10^{-4}$]{
\includegraphics[width = 0.45\textwidth]{./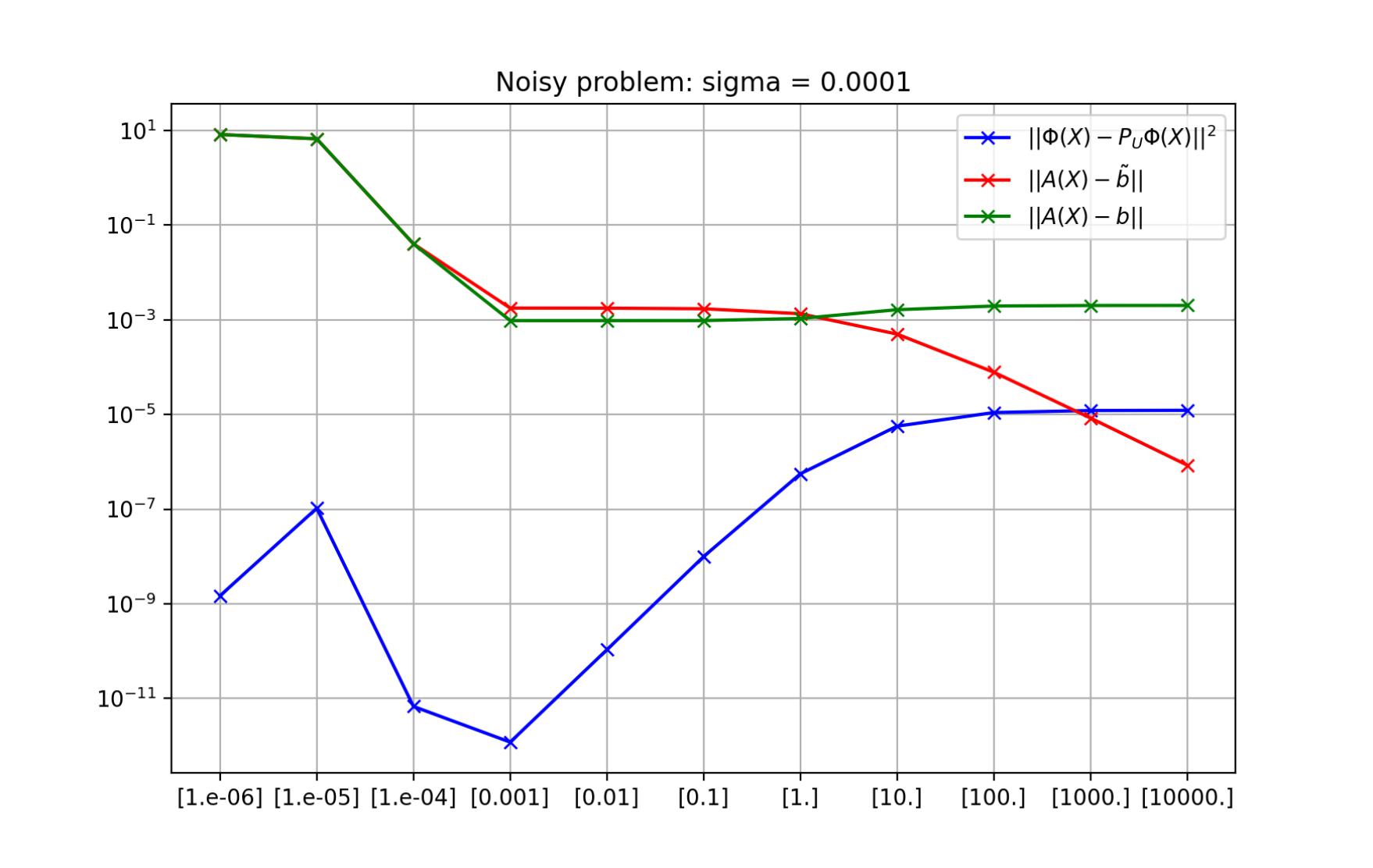}
}
\caption{Solutions for noisy problems as a function of the parameter $\lambda$ on the horizontal axis.}
\label{fig:noise12}
\end{figure}
\begin{table}
\begin{center}
\begin{tabular}{|c|c|c|c|c|}
\hline 
Standard deviation & $\norm{ \A(X^*)-\tilde b }$ & $\norm{ \A(X^*)-b }$ & $\norm{X^*-M}$ & $\norm{\A(M)-b}$ \\ 
\hline 
$\sigma = 10^{-2}$ & $2\cdot 10^{-1}$ & $7\cdot 10^{-2}$ & $8\cdot 10^{-2}$  & $0.2$\\ 
\hline 
$\sigma = 10^{-3}$ & $2\cdot10^{-2} $ & $8\cdot 10^{-3}$ & $8\cdot 10^{-3}$ & $0.02$ \\ 
\hline 
$\sigma = 10^{-4}$ & $2\cdot 10^{-3} $ & $8\cdot 10^{-4}$ & $9\cdot 10^{-4}$ & $0.002$ \\ 
\hline 
\end{tabular} 
\end{center}
\caption{Quality of the solution $X^*$ for different levels of noise $\sigma$ in the measurements.}
\label{table:noise}
\end{table}

\FloatBarrier
 
\subsubsection{Robustness to a bad estimate of the rank}
\label{sec:bad_rank_estimation}
In the case of a union of subspaces, the polynomial features are exactly low rank. Then, it is important to have an accurate upper bound on the rank. Recovery is sometimes possible if the upper bound is close to the correct value. If the estimated rank is less than the exact rank or much too large, recovery will normally fail. This intuition is guided by the cost function that we use. If the variable $\U$ is artificially constrained to be the leading singular vectors of $\Phi(X)$, that is $\U= \ttt{truncate-svd}(\Phi(X))$, then it can be substituted and the cost function in~\eqref{eq:p1} simplifies to 
\begin{equation}
\min_X \sum_{i = r+1}^{\min(N,s)} \sigma_i^2(\Phi(X)).
\end{equation}
The cost function represents the energy in the tail of the singular value decomposition, where $r$ is the estimation of the rank. In Figure~\ref{fig:uos_rank_phase}, the data belongs to a union of 2 subspaces of dimension $2$ in $\R^{15}$, with a total of $s= 150$ data points. With a kernel of degree $2$, the dimension of the feature space is $N(15,2) = 136$.

\begin{figure}[!htb]
\centering
\subfigure[Phase transition]{
\includegraphics[width = 0.45\textwidth]{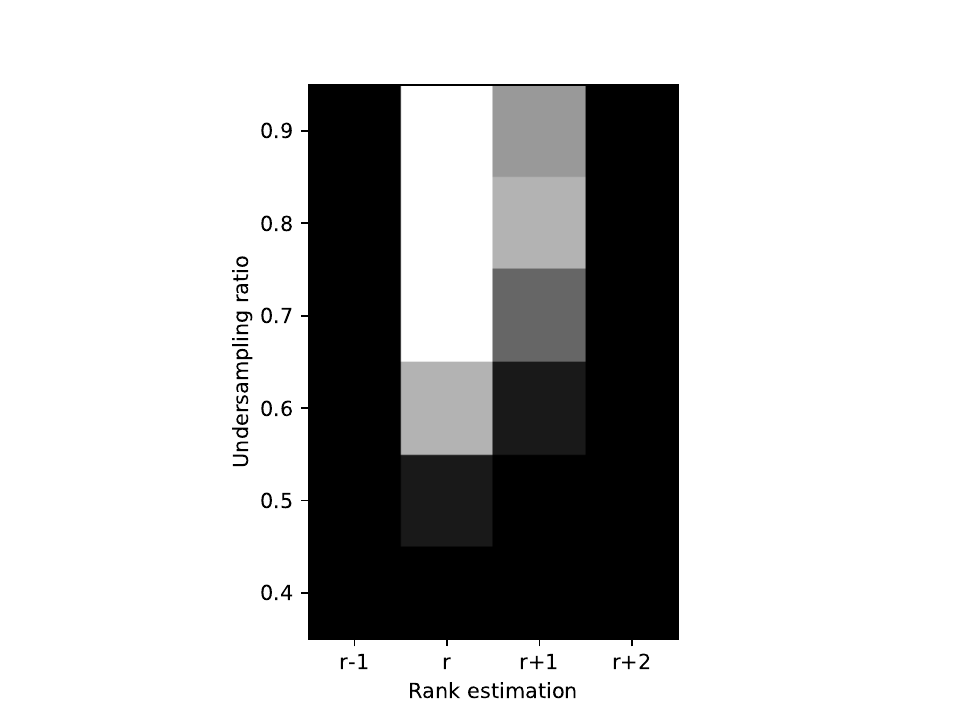}
\label{fig:uos_rank_phase}
}
\quad
\subfigure[Singular values of the monomial kernel $k_2(M,M)$]{
\includegraphics[width = 0.45\textwidth]{./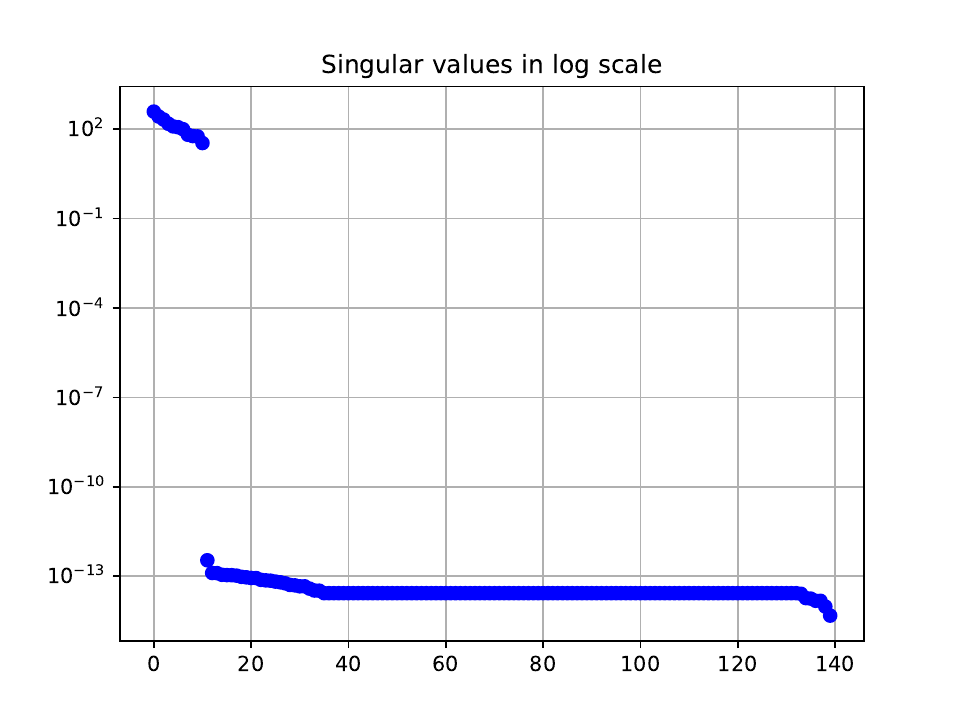}
\label{fig:uos_rank_phase_svd}
}
\caption{Impact of an incorrect estimate of the rank for the completion of a union of subspaces. Each square gives the proportion of problems solved over $50$  randomly generated problems, $\text{white}=100\%$ of instance recovered and black~$=0\%$, using the polynomial kernel of degree $d=2$ for the embedding.}
\label{fig:uos_rank}
\end{figure}

\FloatBarrier
\newcommand{\VMC}{\texttt{VMC }}
\newcommand{\Altminone}{\texttt{Altmin1 }}
\newcommand{\RTRtwo}{\texttt{RTR2 }}

\subsubsection{Comparison with other methods}
We compare the proposed methods \ttt{Altmin1} and \ttt{RTR2} with \VMC (variety matrix completion) from~\cite{Ongie2017}, described in the related work section on page~\pageref{sec:related_work}. We compare the methods on the recovery of a union of subspaces from a subset of entries, as \VMC is designed for matrix completion. \RTRtwo is a second-order method, while \Altminone and \VMC are both first-order methods which are quite similar in spirit. They both alternate between truncated SVDs of the kernel matrix and some gradient steps, which are performed on different cost functions. \Altminone minimizes a smooth approximation of the Schatten p-norm (Equation~\eqref{eq:Schatten}), while \Altminone minimizes Equation~\eqref{eq:s1}. \Altminone may perform several gradient steps between two SVD, while \VMC performs a single gradient step between two SVD.

Figure~\ref{fig:compare-vmc} shows the decrease in root mean square error (RMSE) over time for the three methods on the completion of a matrix $M\in \R^{15\times s}$ whose columns are contained in a union of two subspaces of dimension two. The total number of points (divided equally across each subspace) is taken as $s=\{100,200,400\}$. Figure~\ref{fig:compare-vmc-c} shows that \ttt{RTR2} clearly outperforms \ttt{VMC} and \Altminone in run-time for matrices with many columns (large $s$). For matrices with fewer columns (Figure~\ref{fig:compare-vmc-a}), \ttt{VMC} performs well in the early iterations in comparison with \ttt{RTR2}, but is consistently slower than \ttt{Altmin1}. Figure~\ref{fig:compare-vmc} indicates that for a comparable runtime, \VMC performs many more iterations than \Altminone does. Each iteration of \ttt{VMC} is therefore faster to compute than an iteration of \ttt{Altmin1}, but they yield a smaller decrease in RMSE.

  \begin{figure}[!htb]
  \centering
    \subfigure[100 data points]{
  \includegraphics[width = 0.45\textwidth]{./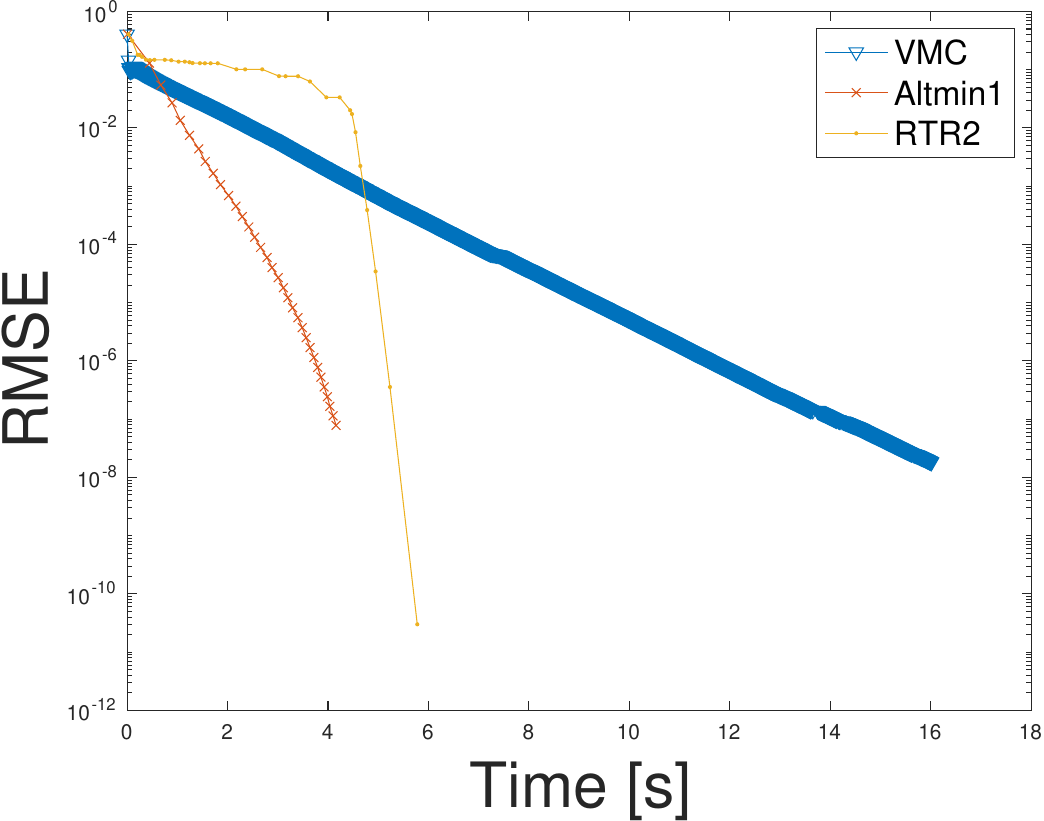}
    \label{fig:compare-vmc-a}
  }
  \quad
  \subfigure[200 data points]{
  \includegraphics[width = 0.45\textwidth]{./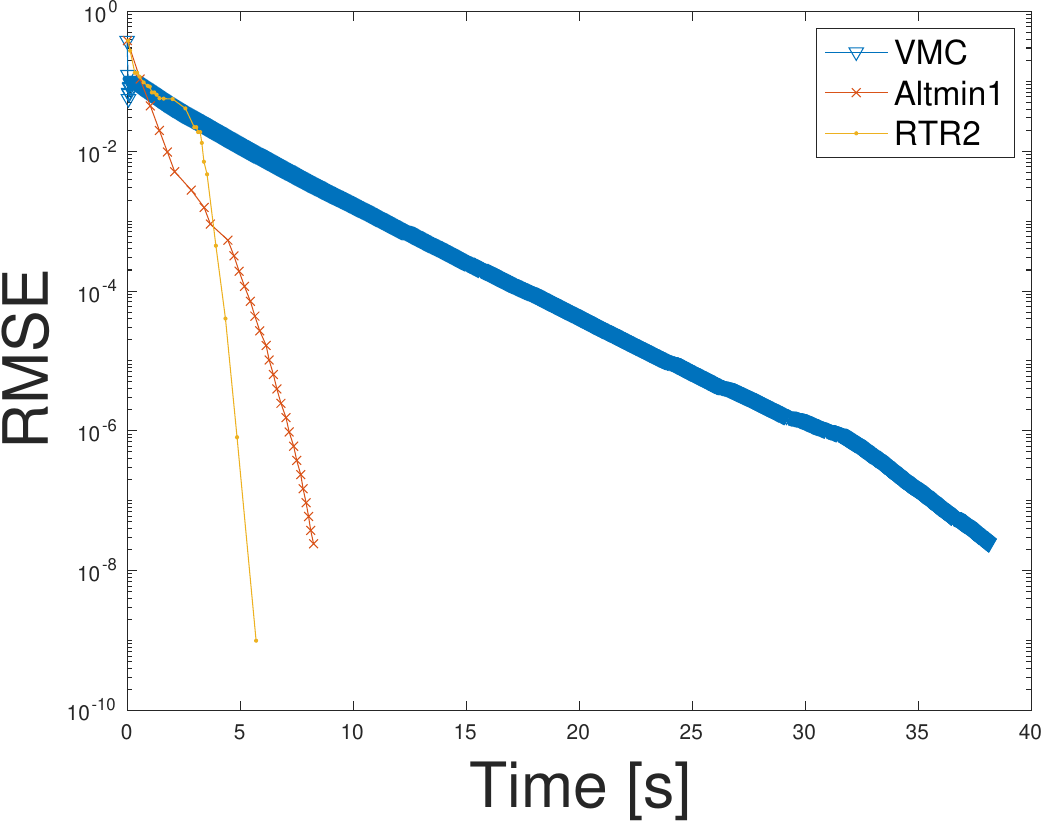}
  }
  \quad
  \subfigure[400 data points]{
  \includegraphics[width = 0.45\textwidth]{./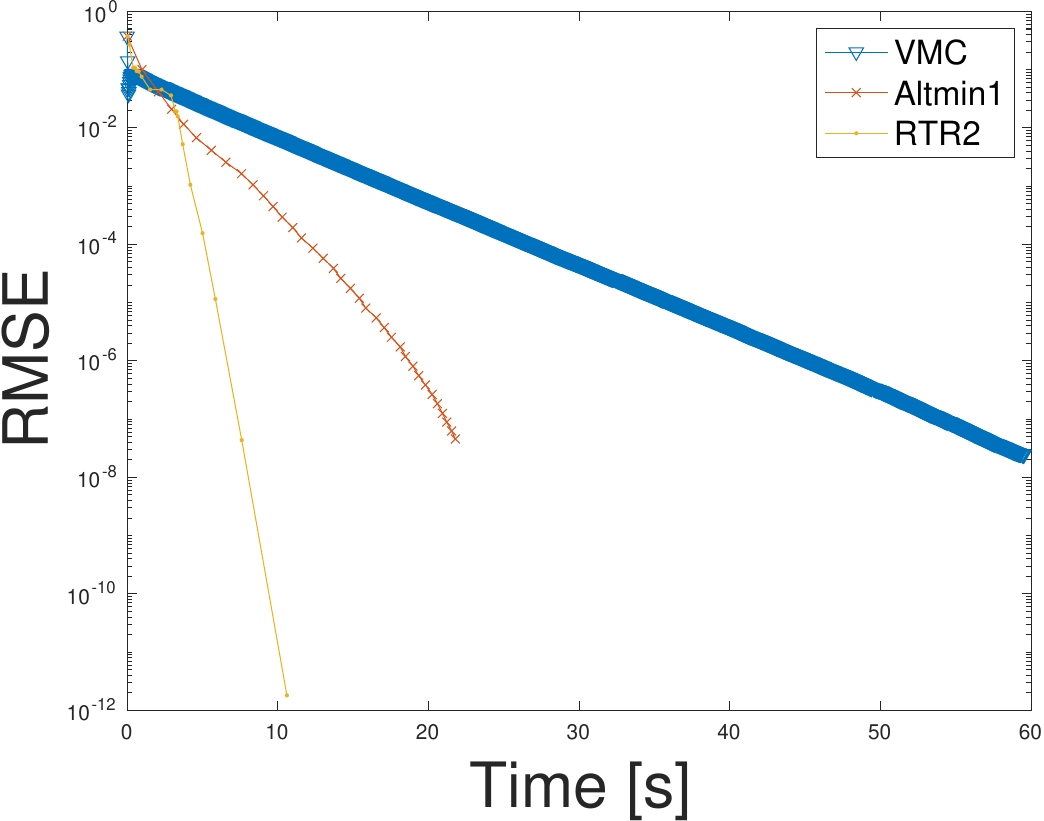}
  \label{fig:compare-vmc-c}
  }
  \caption{Comparison of the proposed \ttt{Altmin1} and \ttt{RTR2} methods with \VMC from~\cite{Ongie2017} on the recovery of a union of 2 subspaces of dimension 2 in $\R^{15}$ with an under-sampling ratio of $0.9$ and an increasing number of points $s$.}
  \label{fig:compare-vmc}
  \end{figure}  
  
Figure~\ref{fig:compare_n_vmc} shows the runtime of the methods \VMC , \Altminone and \RTRtwo for an increasing ambient dimension $n$. Again, \Altminone is consistently faster than \VMC; and we see that both first-order methods perform better than the second-order \RTRtwo in the early iterations as $n$ increases (Figure~\ref{fig:compare_n_c}).
  
   \begin{figure}[!htb]
  \centering
    \subfigure[$n=15$]{
  \includegraphics[width = 0.45\textwidth]{./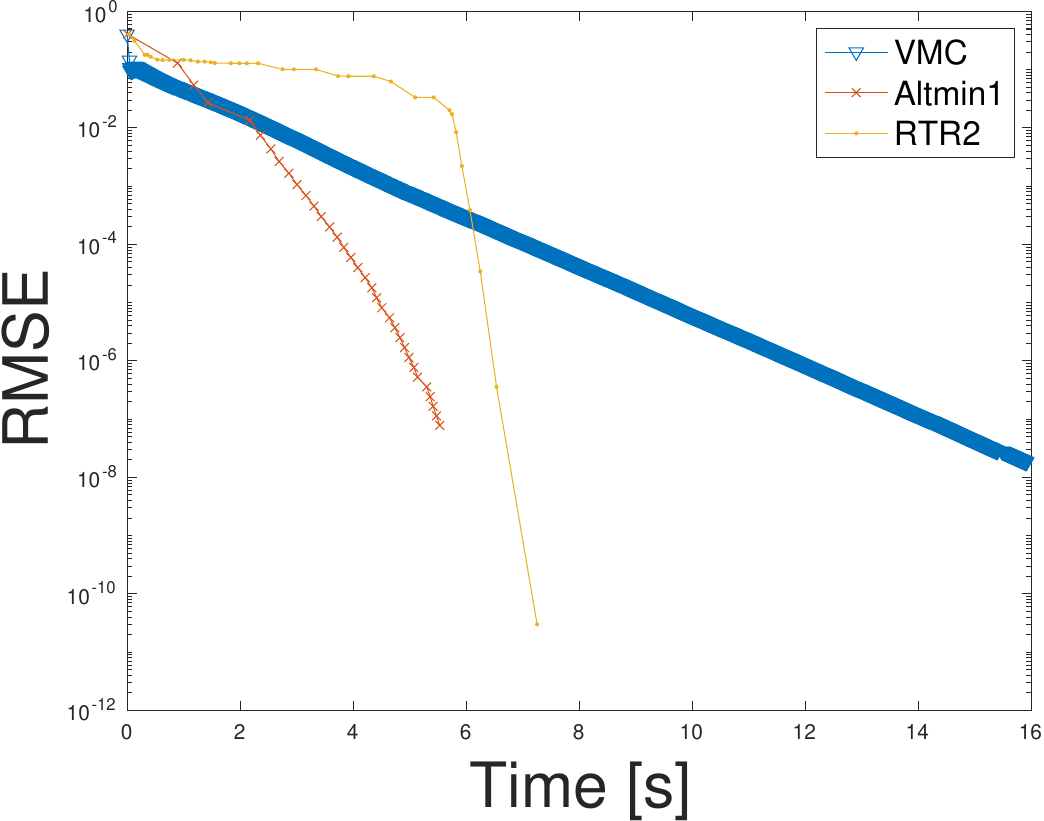}
  }
  \quad
  \subfigure[$n=30$]{
  \includegraphics[width = 0.45\textwidth]{./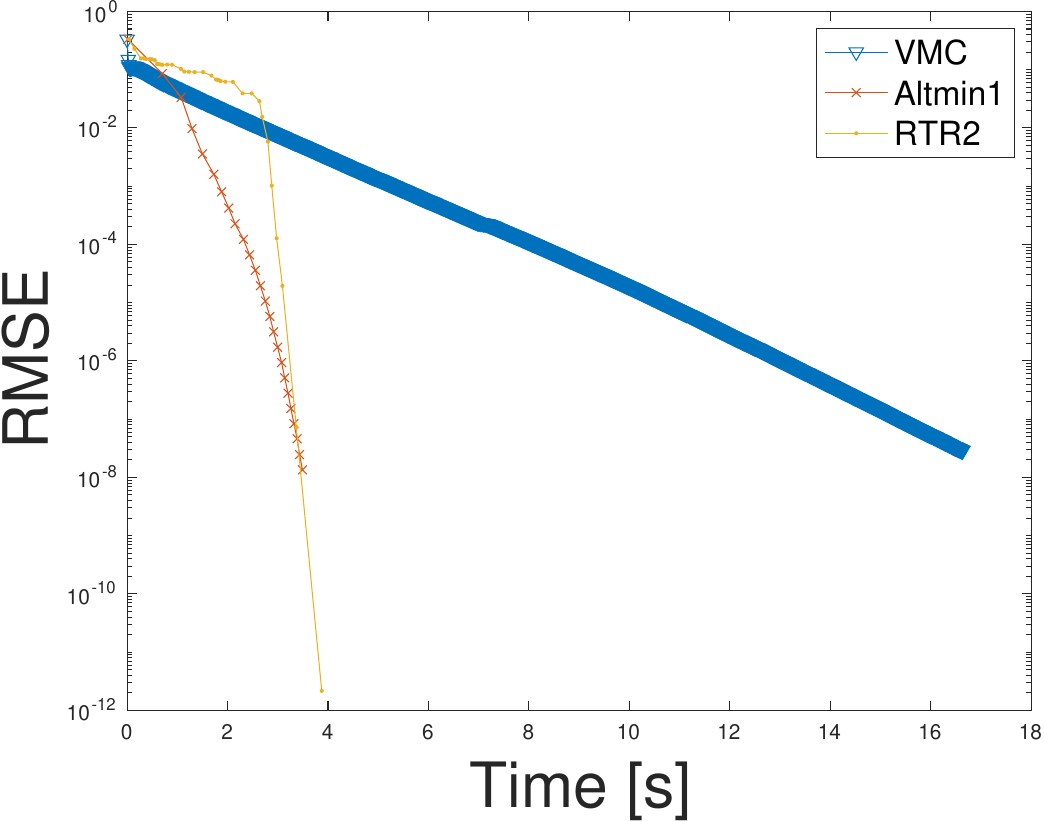}
  }
  \quad
  \subfigure[$n=50$]{
  \includegraphics[width = 0.45\textwidth]{./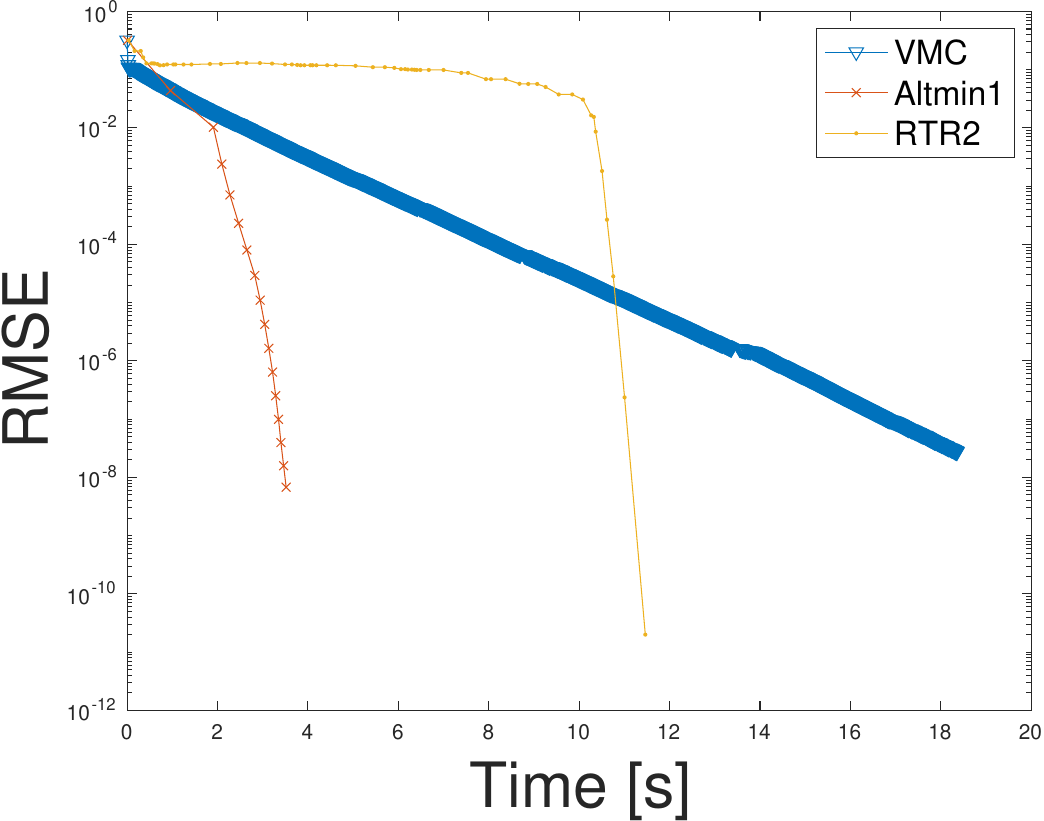}
  \label{fig:compare_n_c}
  }
  \caption{Comparison of the proposed \ttt{Altmin1} and \ttt{RTR2} methods with \VMC from~\cite{Ongie2017} on the recovery of 100 data points belonging to a union of 2 subspaces of dimension 2 in $\R^{n}$ for dimensions $n= \{15,30,50\}$ with an under-sampling ratio of $0.9$.}
  \label{fig:compare_n_vmc}
  \end{figure}
  
Figure~\ref{fig:compare_vmc_recovery} compares the proportion of problems solved for a decreasing undersampling ratio (defined in Equation~\eqref{eq:undersampling} as the ratio of observed entries over the size of the matrix to complete). The recovery rate indicates the proportion of problems solved over a set of $5$ randomly generated problems of recovery of a matrix $M\in \R^{15\times 100}$ whose columns belong to the union of two subspaces of dimension two. The methods \VMC and \RTRtwo perform slightly better than \Altminone at recovering the matrix $M$ when the number of available entries decreases; though the difference is not significant.
  \begin{figure}[!htb]
  \centering
  \includegraphics[width = 0.6\textwidth]{./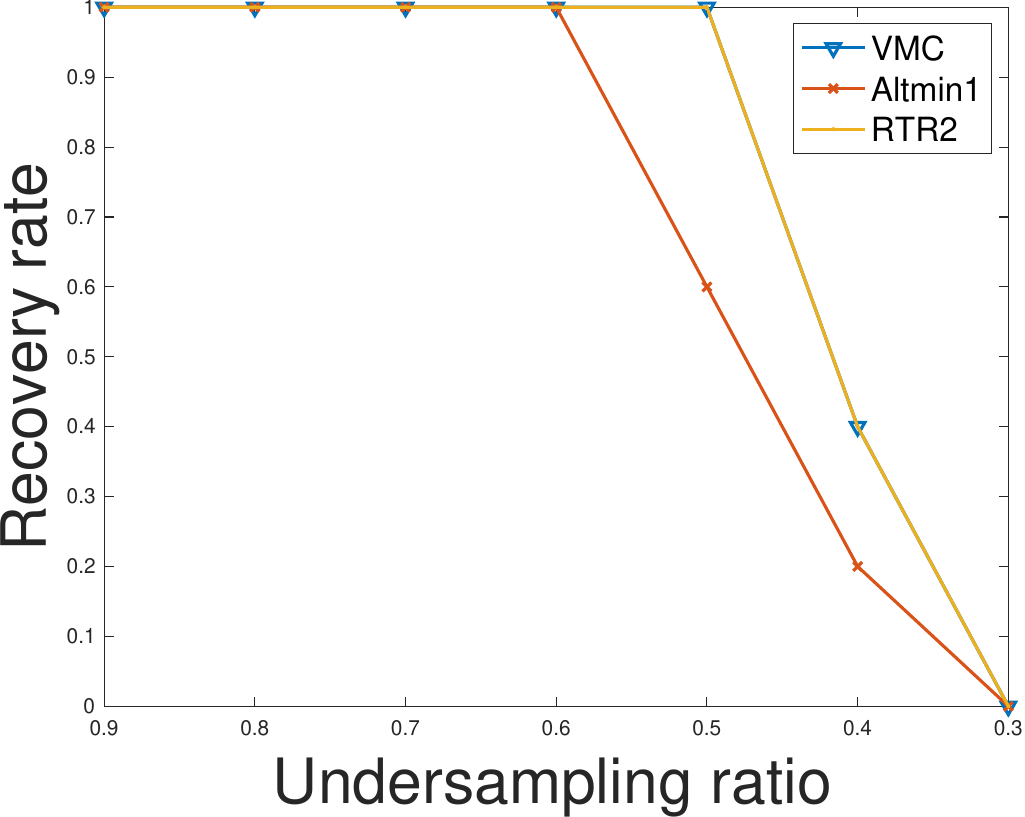}
  \caption{Proportion of problems solved for decreasing under-sampling ratio}
  \label{fig:compare_vmc_recovery}
  \end{figure}

\FloatBarrier

\section{Conclusion}

In this work,  we study the problem of nonlinear matrix completion where one tries to recover a high rank matrix that exhibit low rank structure in a feature space. 
In terms of the use cases considered, in addition to the union of subspaces and algebraic varieties, we propose the use of the Gaussian kernel for clustering problems with missing data, which we believe is novel in the context of nonlinear matrix completion. 

 We propose a novel formulation for the nonlinear matrix completion problem using the Grassmann manifold, which is inspired from low-rank matrix completion techniques.
 We then show how Riemannian optimization and alternating minimization methods can be applied effectively to solve this optimization problem. 
 The algorithms we propose, come with strong global convergence results to critical points and worst-case complexity guarantees. In addition, we show that the alternating minimization algorithm converges to a unique limit point using the Kurdyka-Lojasiewicz property.
 
We provide extensive numerical results that attest to the efficiency of the approach to recover high-rank matrices drawn from union of subspaces or clustered data. We note that the second-order Riemannian trust-region method allows to recover with high accuracy.  We expose the difficulty of using polynomials of high degree in the monomial kernel, as they require an exponentially increasing number of sample points to allow recovery. Our approach proves to be efficient at clustering a data set despite the presence of missing entries and our approach also shows great robustness against the presence of noise in the measurements. Finally, we show that our algorithm greatly outperforms other code available online for nonlinear matrix completion. 

\subsection*{Acknowledgement}
The authors would like to thank Estelle Massart and Greg Ongie for interesting discussions and their helpful ideas.
\appendix

\bibliography{references_nlmc.bib}

\section{Derivatives of cost functions}
\label{sec:appendix_derivatives}
In this section we compute by hand the first- and second-order derivatives of the cost function that appears in the optimization problem~\eqref{eq:p1kernel}. To compute the derivative of a matrix valued function, we write a Taylor expansion and identify the gradient by looking at first order terms. Before computing derivatives for a specific kernel, we look at the Lipschitz continuity properties of the gradient problem, for an arbitrary kernel.

\subsection{Lipschitz properties}
Let us consider the cost function of~\eqref{eq:p1kernel}, defined over $\M = \LAb \times \Grass(s,r)$. We work out conditions on the problem which ensure that the Riemannian gradient is Lipschitz continuous. Lipschitz continuity of a vector field on a smooth manifold is defined as follows using a vector transport.
\begin{definition}\emph{\textbf{(\cite[Definition 10.42]{boumal2022intromanifolds})}}
A vector field $V$ on a connected manifold $\M$ is L-Lipschitz continuous if, for all $x,y\in \M$ with $\dist(x,y) <\textrm{inj}(x)$,
$$\norm{\mathrm{PT}^\gamma_{0\leftarrow 1} V(y) - V(x)} \leq L \dist(x,y),$$
where $\gamma: [0,1] \rightarrow \M$ is the unique minimizing geodesic connecting $x$ to $y$ and $\textrm{PT}^\gamma_{0\leftarrow 1}$ denotes the parallel transport along $\gamma$.
\label{def:lipschitz-continuous-manifold}
\end{definition}
Functions with Lipschitz continuous gradient exhibit the following regularity condition for the pullback $\hat{f} = f\circ \Retr$, provided the retraction used is the exponential map, $\Retr = \Exp$.
\begin{proposition}\emph{\textbf{(\cite[Corollary 10.52]{boumal2022intromanifolds})}}
If $f\colon\M \rightarrow \R$ has L-Lipschitz continuous gradient, then
$$
\left| f(\Exp_x(s)) - f(x) - \inner{s}{\grad f(x)} \right| \leq \dfrac{L}{2}\norm{s}^2
$$
for all $(x,s)$ in the domain of the exponential map.
\label{pro:lip-grad-lip-pullback}
\end{proposition}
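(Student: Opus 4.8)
The plan is to reduce the claim to a one-dimensional calculus argument along the geodesic emanating from $x$ with initial velocity $s$. Let $\gamma:[0,1]\to\M$ be the geodesic $\gamma(t) = \Exp_x(ts)$, so that $\gamma(0) = x$, $\gamma(1) = \Exp_x(s)$, and $\dot\gamma(0) = s$. Defining the scalar function $g(t) = f(\gamma(t))$, we have $g(0) = f(x)$ and $g(1) = f(\Exp_x(s))$, and by the fundamental theorem of calculus
\begin{equation}
g(1) - g(0) - g'(0) = \int_0^1 \left[ g'(t) - g'(0) \right]\, dt .
\end{equation}
The entire task is then to bound the integrand.

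Next I would compute $g'(t)$ by the chain rule, $g'(t) = \inner{\dot\gamma(t)}{\grad f(\gamma(t))}$. The key structural fact is that, since $\gamma$ is a geodesic, its velocity field is parallel along $\gamma$, so that $\dot\gamma(t) = \mathrm{PT}^\gamma_{t\leftarrow 0}\, s$. Because parallel transport is a linear isometry, its reverse transport is the adjoint, and I can move it onto the gradient factor to obtain
\begin{equation}
g'(t) = \inner{s}{\mathrm{PT}^\gamma_{0\leftarrow t}\,\grad f(\gamma(t))}.
\end{equation}
In particular $g'(0) = \inner{s}{\grad f(x)}$, which is exactly the linear term in the statement, and subtracting gives
\begin{equation}
g'(t) - g'(0) = \inner{s}{\, \mathrm{PT}^\gamma_{0\leftarrow t}\,\grad f(\gamma(t)) - \grad f(x)}.
\end{equation}

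Then I would apply Cauchy--Schwarz followed by the Lipschitz hypothesis (Definition~\ref{def:lipschitz-continuous-manifold}) to the vector field $V = \grad f$, applied to the minimizing geodesic segment from $x$ to $\gamma(t)$. Using $\dist(x,\gamma(t)) = t\norm{s}$, this yields
\begin{equation}
\left| g'(t) - g'(0) \right| \leq \norm{s}\,\norm{ \mathrm{PT}^\gamma_{0\leftarrow t}\,\grad f(\gamma(t)) - \grad f(x)} \leq L\, t\, \norm{s}^2 .
\end{equation}
Integrating over $t\in[0,1]$ gives $\int_0^1 L t\norm{s}^2\,dt = \tfrac{L}{2}\norm{s}^2$, which together with the first display establishes the bound.

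The only delicate points are bookkeeping rather than substance. One must match the orientation convention of parallel transport in the Lipschitz definition (the $0\leftarrow 1$ direction, applied to the reparametrized subgeodesic from $x$ to $\gamma(t)$) with the direction used above, and one must justify $\dist(x,\gamma(t)) = t\norm{s}$. The latter is precisely where the restriction to the domain of the exponential map enters: it guarantees that the geodesic segment $\gamma\vert_{[0,t]}$ remains length-minimizing, so that its length $t\norm{s}$ equals the Riemannian distance between its endpoints.
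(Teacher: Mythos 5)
Your strategy is the standard one behind the cited result, and most of the steps are sound: note that the paper itself offers no proof of this proposition (it simply cites \cite[Corollary 10.52]{boumal2020intromanifolds}), and the intended argument is indeed to restrict $f$ to the geodesic, write $g'(t)$ using parallel transport of the velocity, exploit that transport is an isometry to move it onto the gradient, and integrate. The chain rule, the identity $\dot\gamma(t) = \mathrm{PT}^\gamma_{t\leftarrow 0}\,s$, the adjoint manipulation, the identification $g'(0)=\inner{s}{\grad f(x)}$, and the final integration are all correct.

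The gap is precisely the step you dismiss as bookkeeping. The paper's Definition~\ref{def:lipschitz-continuous-manifold} only provides the bound $\norm{\mathrm{PT}^\gamma_{0\leftarrow 1}V(y)-V(x)}\le L\,\dist(x,y)$ when $\dist(x,y)<\mathrm{inj}(x)$, and with transport taken along the \emph{unique minimizing} geodesic from $x$ to $y$. Your justification---that $(x,ts)$ lying in the domain of the exponential map guarantees $\gamma\vert_{[0,t]}$ remains length-minimizing---is false: on a compact manifold such as the Grassmannian (the case of interest in this paper), or simply on a sphere, $\Exp_x$ is defined on all of $T_x\M$, yet geodesics stop minimizing beyond the cut locus. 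For such $t$ one has $\dist(x,\gamma(t))<t\norm{s}$, possibly $\dist(x,\gamma(t))\ge \mathrm{inj}(x)$, and the minimizing geodesic from $x$ to $\gamma(t)$ is a different curve whose parallel transport bears no relation to $\mathrm{PT}^\gamma_{0\leftarrow t}$; so the inequality $\norm{\mathrm{PT}^\gamma_{0\leftarrow t}\grad f(\gamma(t))-\grad f(x)}\le L\,t\norm{s}$ is not an instance of the stated hypothesis. The repair is a local-to-global argument: sufficiently short subsegments of $\gamma$ \emph{are} unique minimizing geodesics lying within the injectivity radius (which is bounded below along the compact image of $\gamma$), so the curve $h(t):=\mathrm{PT}^\gamma_{0\leftarrow t}\grad f(\gamma(t))$ in $T_x\M$ is locally $L\norm{s}$-Lipschitz (using that parallel transport along $\gamma$ composes and is an isometry), hence $L\norm{s}$-Lipschitz on all of $[0,1]$ by subdivision and the triangle inequality; this yields $\left|g'(t)-g'(0)\right|\le L\,t\norm{s}^2$, and your integration then finishes the proof. (Alternatively, if one defines Lipschitz continuity of vector fields directly through the exponential map---bounding $\norm{\mathrm{PT}_{0\leftarrow 1}V(\Exp_x(v))-V(x)}$ by $L\norm{v}$ for all $(x,v)$ in the domain of $\Exp$, with transport along $t\mapsto\Exp_x(tv)$ itself---then your proof closes immediately as written; but that is not the definition this paper states.)
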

In order to show Lipschitz continuity of the gradient, we use both the definition, and the following proposition which uses an upper bound on the derivative of the gradient.
\begin{proposition}\emph{\textbf{(\cite[Corollary 10.45]{boumal2022intromanifolds})}}
If $f:\M \rightarrow \R$ is twice continuously differentiable on a manifold $\M$, then $\grad f$ is L-Lipschitz continuous if and only if $\Hess f(x)$ has operator norm bounded by L for all $x$, that is, if for all x we have
$$
\norm{\Hess f(x)} = \max_{\mathclap{\substack{x\in T_x\M \\
                  \norm{s}=1}}} ~\norm{\Hess f(x)[s]}\leq L.
$$
\label{prop:bounded-hessian-lip-gradient}
\end{proposition}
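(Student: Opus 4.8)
The plan is to prove both implications by relating the parallel-transported gradient field to the Riemannian Hessian through the covariant derivative taken along geodesics. The identity I would build everything on is that, for a geodesic $\gamma$, the covariant derivative of the vector field $t \mapsto \grad f(\gamma(t))$ equals $\Hess f(\gamma(t))[\dot\gamma(t)]$; this is essentially the definition of $\Hess f$ as the covariant derivative of $\grad f$. Coupled with the fact that parallel transport is a linear isometry between tangent spaces, this is exactly what converts statements about $\grad f$ along a curve (which is what Definition~\ref{def:lipschitz-continuous-manifold} controls) into statements about the operator norm of $\Hess f$.

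For the direction ``$\norm{\Hess f}\le L$ implies $\grad f$ is $L$-Lipschitz'', I would fix $x,y\in\M$ with $\dist(x,y) < \mathrm{inj}(x)$ and let $\gamma:[0,1]\to\M$ be the unique minimizing geodesic from $x$ to $y$, so that $\norm{\dot\gamma(t)} = \dist(x,y)$ is constant in $t$. Setting $g(t) = \mathrm{PT}^\gamma_{0\leftarrow t}\,\grad f(\gamma(t)) \in \Trm_x\M$, I have $g(0) = \grad f(x)$ and $g(1) = \mathrm{PT}^\gamma_{0\leftarrow 1}\,\grad f(y)$. Because parallel transport commutes with the covariant derivative and preserves norms, $\norm{g'(t)} = \norm{\Hess f(\gamma(t))[\dot\gamma(t)]} \le L\,\norm{\dot\gamma(t)} = L\,\dist(x,y)$. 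Integrating then yields $\norm{g(1)-g(0)} \le \int_0^1 \norm{g'(t)}\,dt \le L\,\dist(x,y)$, which is precisely the Lipschitz inequality.

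For the converse I would fix a point $x$ and a unit vector $s\in\Trm_x\M$, take $\gamma(t) = \Exp_x(ts)$, and use that $\Hess f(x)[s]$ is the covariant derivative of $\grad f(\gamma(t))$ at $t=0$, writable as the limit $\lim_{t\to 0} t^{-1}\big(\mathrm{PT}^\gamma_{0\leftarrow t}\,\grad f(\gamma(t)) - \grad f(x)\big)$. Since $\dist(x,\gamma(t)) = t$ for small $t$, the Lipschitz hypothesis bounds the numerator by $Lt$, so $\norm{\Hess f(x)[s]} \le L$; taking the supremum over unit $s$ gives the operator-norm bound.

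The main subtlety is not any hard estimate but the careful bookkeeping of parallel transport: one must justify that differentiating the transported field $g$ produces the transported covariant derivative, so that the isometry property can be applied, and that the covariant derivative of $\grad f$ along $\gamma$ is genuinely $\Hess f[\dot\gamma]$. The injectivity-radius restriction $\dist(x,y) < \mathrm{inj}(x)$ is exactly what guarantees a unique minimizing geodesic and hence a well-defined $\mathrm{PT}^\gamma$. As this is a standard fact of Riemannian geometry, in the paper I would simply invoke~\cite[Corollary 10.45]{boumal2020intromanifolds}, with the argument above as its justification.
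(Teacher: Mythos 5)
Your proof is correct: the sufficiency direction (bounded Hessian implies Lipschitz gradient) via integrating the parallel-transported covariant derivative of $\grad f$ along the unique minimizing geodesic, and the necessity direction via the difference-quotient characterization of $\Hess f(x)[s]$ through parallel transport, are exactly the standard argument. Note that the paper itself gives no proof of this proposition --- it is imported verbatim by citation to \cite[Corollary 10.45]{boumal2020intromanifolds}, which is precisely what you say you would do in practice, and your argument is the one given in that reference.
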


 We first compute the Euclidean gradient with respect to each variable,
\begin{equation}
\nabla_X f(X,\W) = \D \K(X)^* \Prm_{\W^\perp}
\end{equation}
and 
\begin{equation}
\nabla_\W f(X,\W) = -2 \K(X)\W.
\end{equation}
This naturally gives
\begin{equation}
\nabla^2_{\W\W} f(X,\W)[\Delta] = - 2 \K(X)\Delta. 
\end{equation}

\begin{proposition}
Consider the cost function of~\eqref{eq:p1kernel} and assume that the retraction being used is the exponential map. If $\D \K(X)$ is Lipschitz continuous over $\LAb$, then~\eqref{eq:lipschitz_x_am} holds where $L_x$ is the Lipschitz constant of $\D \K(X)$.  If $\fronorm{\K(X)}\leq M$ for all $X\in \LAb$, condition~\eqref{eq:lipschitz_u_am} holds with $L_u = 2M$.
\end{proposition}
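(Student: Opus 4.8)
The plan is to treat the two coordinates separately, since the bounds \eqref{eq:lipschitz_x_am} and \eqref{eq:lipschitz_u_am} each freeze one variable and perturb only the other. In both cases the strategy is identical: establish that the relevant \emph{partial} Riemannian gradient is Lipschitz continuous in the sense of Definition~\ref{def:lipschitz-continuous-manifold}, and then invoke Proposition~\ref{pro:lip-grad-lip-pullback}, which applies precisely because the retraction is assumed to be the exponential map. This reduction is legitimate because the retraction on the product manifold $\M$ splits coordinatewise, so that $f\circ\Retr(\eta_x,0) = f(\Retr_X(\eta_x),\U)$ is exactly the pullback of the partial map $f(\cdot,\U)$ on $\LAb$, and symmetrically for the $\U$-coordinate.

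For \eqref{eq:lipschitz_x_am}, fix $\U$ and work on $\LAb$. Because $\LAb$ is an affine subspace it is flat: its exponential map is the identity and parallel transport is trivial, so Lipschitz continuity of $\grad_X f(\cdot,\U)$ reduces to ordinary Euclidean Lipschitz continuity. Using the Euclidean gradient computed in the appendix, the Riemannian gradient is $\grad_X f(X,\U) = \Prm_{\Trm\LAb}\big(\D K(X)^* \Prm_{\U^\perp}\big)$, where $\Prm_{\Trm\LAb}$ is a fixed orthogonal projection and $\Prm_{\U^\perp}$ is fixed since $\U$ is frozen. Subtracting the gradients at $X_1$ and $X_2$, using that both projections have operator norm at most $1$, and that $\D K(X)^*$ inherits the Lipschitz constant of $\D K(X)$, yields $\fronorm{\grad_X f(X_1,\U) - \grad_X f(X_2,\U)} \le L_x \fronorm{X_1-X_2}$ with $L_x$ the Lipschitz constant of $\D K(X)$. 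Proposition~\ref{pro:lip-grad-lip-pullback} then gives \eqref{eq:lipschitz_x_am}.

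For \eqref{eq:lipschitz_u_am}, fix $X$, write $K := K(X)$ (symmetric positive semidefinite, being a Gram matrix of a valid kernel), and work on $\Grass(s,r)$. Here I would invoke Proposition~\ref{prop:bounded-hessian-lip-gradient} and bound the operator norm of the Riemannian Hessian $\Hess_\U f(X,\cdot)$. Starting from $\nabla_\W f = -2K\W$ and applying the Grassmann Hessian formula (projection of the derivative of the gradient field onto the horizontal space), one obtains the Rayleigh-quotient Hessian $\Hess_\U f(X,\U)[H] = -2\big(\Prm_{W^\perp}K H - H\, W^\top K W\big)$ for tangent $H$, i.e. $W^\top H = 0$. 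The delicate step—and the one that secures the sharp constant $L_u = 2M$ rather than a crude $4M$ from the triangle inequality—is to bound the operator norm through the quadratic form: since the Hessian is self-adjoint, $\norm{\Hess_\U f} = \max_{\fronorm{H}=1,\,W^\top H=0} |\inner{H}{\Hess_\U f[H]}|$, and the quadratic form equals $-2\big(\trace(H^\top K H) - \trace(H^\top H\, W^\top K W)\big)$. Both traces are nonnegative because $K\succeq 0$, and each is bounded by $\norm{K} \le \fronorm{K}\le M$, so the difference lies in $[-2M,2M]$. Hence $\norm{\Hess_\U f}\le 2M$, and Propositions~\ref{prop:bounded-hessian-lip-gradient} and~\ref{pro:lip-grad-lip-pullback} deliver \eqref{eq:lipschitz_u_am} with $L_u=2M$.

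The routine parts are the gradient and Hessian computations (already recorded in the appendix) and the affine-manifold simplifications for the $X$-coordinate; the one genuinely subtle point is the self-adjointness argument in the last paragraph, which is what preserves the factor of two in $L_u=2M$.
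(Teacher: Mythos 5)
Your proof is correct and its skeleton matches the paper's: treat the two blocks separately, use the flatness of $\LAb$ (exponential map equal to the identity, trivial parallel transport) to reduce~\eqref{eq:lipschitz_x_am} to Euclidean Lipschitz continuity of $X \mapsto \Prm_{\Trm\LAb}\bigl(\D K(X)^*\Prm_{\U^\perp}\bigr)$, and for~\eqref{eq:lipschitz_u_am} bound the operator norm of the Riemannian Hessian of $f(X,\cdot)$ on $\Grass(s,r)$, then chain Proposition~\ref{prop:bounded-hessian-lip-gradient} with Proposition~\ref{pro:lip-grad-lip-pullback}. The genuine difference lies in the Grassmann Hessian itself. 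The paper works with $\Hess_\U f(X,\U)[\Delta] = -2\Prm_{\W_\perp}K\Delta$, i.e.\ only the tangent-space projection of the Euclidean Hessian, and reads off the bound $2\fronorm{K}\leq 2M$ immediately; that formula silently drops the curvature (Weingarten) term $+2\Delta\, W^\top K W$ that arises from differentiating the projected gradient field $W\mapsto -2\Prm_{W^\perp}KW$, which is precisely the term your formula retains. With the full Hessian, a plain triangle inequality would only give $4M$, and your observation that the Hessian is self-adjoint with quadratic form $-2\trace(H^\top KH)+2\trace(H^\top H\,W^\top KW)$ --- a difference of two nonnegative quantities, each at most $M\fronorm{H}^2$ because $K=\Phi(X)^\top\Phi(X)\succeq 0$ --- is exactly what recovers the stated constant $L_u=2M$. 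So you reach the same conclusion by a more careful route and, in passing, repair a small inaccuracy in the paper's own computation. The one caveat you share with the paper is the norm bookkeeping in the $X$-block: bounding $\fronorm{\bigl(\D K(X_1)^*-\D K(X_2)^*\bigr)[\Prm_{\U^\perp}]}$ by the Lipschitz constant of $\D K$ times $\fronorm{X_1-X_2}$ implicitly fixes in which operator norm that Lipschitz hypothesis is read (a factor $\fronorm{\Prm_{\U^\perp}}=\sqrt{s-r}$ appears otherwise); this is a convention issue in the statement of the hypothesis, not a gap in your argument.
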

\begin{proof}
For a given $X\in \LAb$, consider the function $f_\W(X,\cdot): \Grass(s,r) \rightarrow \R$. Its Riemannian Hessian is such that for $\Delta \in \Trm_\W \Grass(s,r)$, $\Hess f_\W(\W,X)[\Delta] = 	-2 \Prm_{\W\p} \K(x)\Delta$. Hence $\norm{\Hess f_\W(\W,X)}\leq \fronorm{k(X)}$. Using~\cite{boumal2022intromanifolds} Corollary 10.45, the vector field $\grad f_\W (X,\cdot)$ is Lipschitz continuous with constant $L_\W = 2\fronorm{k(x)}$. If the kernel is upper-bounded for all $X\in \LAb$, the constant $L_\W$ is independent of $X$. This implies that~\eqref{eq:lipschitz_u_am} holds.

We  also analyze Lipschitz continuity of the vector field $\grad f_X$.
\begin{align}
\fronorm{\grad_X f(X_1,\W) - \grad_X f(X_2, \W) } &= \fronorm{\Prm_{T \LAb} \left( \nabla_X f(X_1,\W) - \nabla_X f(X_2,\W) \right)}\\
&\leq \fronorm{ \nabla_X f(X_1,\W) - \nabla_X f(X_2,\W) }\\
&\leq \fronorm{\left(\D \K(X_1)^* - \D \K(X_2)^*\right) \Prm_{\W\p}}\\
&\leq \norm{\D \K(X_1)^* - \D \K(X_2)^*}_2 \fronorm{\Prm_{\W\p}} \\
&\leq \norm{\D \K(X_1) - \D \K(X_2)}_2.
\end{align}
If $\D \K(X)$ is $L_x$-Lipschitz over $\LAb$, we can write
\begin{equation}
\fronorm{\grad_X f(X_1,\W) - \grad_X f(X_2, \W) } \leq L_x \fronorm{X_1-X_2}
\end{equation}
and $\grad_X f(.,\W)$ is also $L_x$-Lipschitz, where the constant $L_x$ is independent of $\W\in \Grass(s,r)$.  This implies that~\eqref{eq:lipschitz_x_am} holds.
\end{proof}

\begin{proposition}
Consider the cost function of either~\eqref{eq:p1} or~\eqref{eq:p1kernel} and apply Algorithm~\ref{algo:am} or Algorithm~\ref{algo:RTR}  with the exponential map as the retraction. If the convex hull of the sequence of iterates $(X_k)_{k\in \mathbb{N}}$ and the trial points is a bounded set, then~\eqref{eq:grad-pullback-lipschitz},
\eqref{eq:hess-pullback-lipschitz} and \eqref{eq:lipschitz_u_am}-\eqref{eq:lipschitz_x_am} hold at every iterate $\left(X_k,\U_k\right)_{k\in\mathbb{N}}$ and trial points of the algorithm.
\end{proposition}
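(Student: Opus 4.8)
The plan is to exploit compactness together with the product structure of $\M=\LAb\times\Grass(N,r)$ (or $\Grass(s,r)$ in the kernel case) and the two regularity results recalled just above. First I would observe that by hypothesis the convex hull of the $X$-iterates and their trial points is bounded; let $\mathcal{C}_X\subset\LAb$ denote its closure, which is then compact. Since the Grassmann is itself a compact manifold, the set $S:=\mathcal{C}_X\times\Grass(N,r)\subset\M$ is compact and contains every iterate and trial point produced by Algorithm~\ref{algo:am} or Algorithm~\ref{algo:RTR}. For case studies~\ref{example:variety} and~\ref{example:clusters} the cost function $f$ is smooth on $\M$, so $\Hess f$ and its covariant derivative $\nabla\Hess f$ are continuous, hence their operator norms attain finite maxima $L_g$ and $L_H$ on $S$.

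Next I would check that the geodesics relevant to the pullback bounds remain inside $S$. Because $\M$ is a Riemannian product, every geodesic decomposes into a geodesic in each factor: on the affine factor $\LAb$ the exponential map is the identity, so the $X$-component of a geodesic connecting two points of $S$ is the straight segment between their $X$-parts, which lies in the convex set $\mathcal{C}_X$; the $\U$-component is a geodesic of $\Grass(N,r)$, which trivially stays in the compact factor. In particular, for any iterate $z=(X,\U)\in S$ and any tangent vector $\eta$ pointing toward a trial point, the curve $t\mapsto\Exp_z(t\eta)$ lies entirely in $S$.

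Then I would conclude by chaining the two propositions. The bound $\norm{\Hess f}\le L_g$ on $S$ together with Proposition~\ref{prop:bounded-hessian-lip-gradient} shows that $\grad f$ is $L_g$-Lipschitz along the geodesics contained in $S$; feeding this into Proposition~\ref{pro:lip-grad-lip-pullback} (applicable since the retraction is the exponential map) yields exactly the pullback estimate~\eqref{eq:grad-pullback-lipschitz} at every iterate and trial point. Restricting the tangent direction to the forms $(\eta_x,0)$ and $(0,\eta_u)$, and invoking the Remark that \aref{assu:lip-gradient-tr} implies \aref{assumption:pullback_lipschitz_altmin} with $L_x=L_u=L_g$, gives the coordinatewise bounds~\eqref{eq:lipschitz_x_am}--\eqref{eq:lipschitz_u_am}. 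The Hessian estimate~\eqref{eq:hess-pullback-lipschitz} follows by the same pattern one order higher: the bound $\norm{\nabla\Hess f}\le L_H$ on $S$ makes $\Hess f$ an $L_H$-Lipschitz tensor field, and the third-order analogue of Proposition~\ref{pro:lip-grad-lip-pullback} (again using the exponential map) produces~\eqref{eq:hess-pullback-lipschitz}.

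The main obstacle I expect is the geometric bookkeeping of the second paragraph: the pullback conditions are statements along geodesics, whereas compactness only bounds the derivatives of $f$ pointwise, so the argument works only because the product structure forces the connecting geodesics to stay inside the compact set $S$ (the affine factor supplies convexity, the Grassmann factor supplies global compactness). A secondary technical point is to ensure the geodesics are minimizing and shorter than the injectivity radius, as required by the definition of Lipschitz continuity on a manifold (Definition~\ref{def:lipschitz-continuous-manifold}); this is automatic on the flat factor and is controlled on the compact Grassmann factor, where the injectivity radius is bounded below.
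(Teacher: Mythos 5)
Your proof is correct and follows essentially the same route as the paper's: compactness of the closure of the convex hull of the $X$-iterates times the (compact) Grassmann factor, continuity of the second- and third-order derivatives giving uniform bounds there, and then the chain Proposition~\ref{prop:bounded-hessian-lip-gradient} $\Rightarrow$ Lipschitz gradient $\Rightarrow$ Proposition~\ref{pro:lip-grad-lip-pullback} $\Rightarrow$ pullback estimates, with the Hessian bound handled analogously one order higher. Your second paragraph (verifying that the connecting geodesics stay inside the compact set, via convexity of the affine factor and compactness of the Grassmann) is a detail the paper's proof leaves implicit, but it is part of the same argument rather than a different one.
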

\begin{proof}
Provided the kernel is a smooth function, the derivatives of the cost function are continuous. As a consequence of the Weierstrass theorem, the derivatives are bounded on the closure of the convex hull of the iterates, which is compact ($\Grass(s,r)$ is compact). If the Hessian is bounded on the closure of convex hull of the iterates, the gradient is Lipschitz continuous on that set (Proposition~\ref{prop:bounded-hessian-lip-gradient}) and therefore~\aref{assu:lip-gradient-tr} and~\aref{assumption:pullback_lipschitz_altmin} hold with the exponential map as the retraction (Proposition~\ref{pro:lip-grad-lip-pullback}). The continuity of the third-order derivatives implies~\aref{assu:lip-hessian-tr} in a similar way.
\end{proof}

\subsection{Monomial kernel}
We wish to find the Euclidean derivatives of the cost function in~\eqref{eq:p1kernel} for the monomial kernel defined in~\eqref{eq:mono_kernel}. First, we make the following developments. Up to first order in $\Delta_X$,  
 \begin{equation}
 \begin{aligned}
 \K_d(X+\Delta_X) &= \K_d(X) + d \K_{d-1}(X)\odot(X^\top \Delta_X + \Delta_X^\top X) + \mathcal{O}(\Delta_X^2).
 \end{aligned}
 \end{equation}
 For $\Delta_W = W\p B$, we have 
 \begin{equation}
 \begin{aligned}
 \P_{W+\Delta_W} &= WW\transpose + W\Delta_W\transpose + \Delta_W W\transpose,\\
 				&= \P_W + W\Delta_W\transpose + \Delta_W W\transpose +  \mathcal{O}(\Delta_W^2).\\
 \end{aligned}
 \end{equation}
 Let us write $$f(X,W) = \tr\left(\K_d(X) - \P_{W}\K_d(X)\right). $$
We find the gradient in $X$ using direct computation, 
\begin{equation}
\begin{aligned}
\nabla_X f(X,W) &= \nabla_X  \tr\left(\P_{W\p }\K_d(X)\right),\\
						&= 2 d X\left(\K_{d-1}(X) \odot \P_{W\p }\right),
\end{aligned}
\end{equation}
since $\P_{W\p }$ is symmetric. 
Quite naturally we find $ \nabla_W f(X,W)$ with the expansion, 
 \begin{equation}
\begin{aligned}
 f(X,W+ \Delta_W ) &= \tr\left(\K_d(X) - \P_{W+\Delta_W}\K_d(X)\right) ,\\
				&=  \tr\left(\K_d(X) - (\P_W + W\Delta_W\transpose + \Delta_W W\transpose ) K_d(X)\right) ,\\
				&=  \tr\left(  \P_{W\p }\K_d(X) - ( W\Delta_W\transpose + \Delta_W W\transpose ) K(_dX)\right) ,\\
				&=  \tr\left(  \P_{W\p }\K_d(X)\right)  -\tr  \left( (W\Delta_W\transpose + \Delta_W W\transpose ) \K_d(X)\right) ,\\
				&=  f(X,W)  -\tr  \left( W\Delta_W\transpose \K_d(X)\right) -\tr \left( \Delta_W W\transpose \K(X) )\right) ,\\
				&=  f(X,W)  +\langle \Delta_W,-2  \K_d(X)W\rangle .\\
\end{aligned}
\end{equation}
Ans so we observe $ \nabla_W f(X,W) = -2  \K_d(X)W$.
Quickly we have $ \nabla^2_W f(X,W)[E] = -2  \K_d(X)E$. Now we want to find the second derivative in $X$
 \begin{equation}
\begin{aligned}
 \nabla_X f(X+\Delta_X,W ) &=  2 d (X+\Delta_X \left(\K_{d-1}(X+\Delta_X) \odot \P_{U\p }\right)\\
  				&=   2 d (X+\Delta_X) \left(  \left[\K_{d-1}(X) + (d-1) \K_{d-2}(X)\odot(X^\top\Delta_X + \Delta_X^\top X)\right]  \odot \P_{W\p }\right)\\
  				&=   2 d X \left( \K_{d-1}(X) \odot \P_{W\p }\right) +  2 d(d-1) X \left(  \K_{d-2}(X)\odot(X^\top \Delta_X + \Delta_X^\top X) \odot \P_{W\p }\right)  \\  				&~~~+ 2 d \Delta_X \left( \K_{d-1}(X) \odot \P_{W\p }\right) + \mathcal{O}(\Delta_X^2)\\
  				&=   \nabla_X f(X,W )  +  2 d(d-1) X \left(  \K_{d-2}(X)\odot(X^\top \Delta_X + \Delta_X^\top X) \odot \P_{W\p }\right)  \\  	
  				&~~~+ 2 d \Delta_X \left( \K_{d-1}(X) \odot \P_{W\p }\right) + \mathcal{O}(\Delta_X^2).\\
\end{aligned}
\end{equation}
 And so we identify 
 \begin{equation}
  \nabla_X^2 f(X,W)[\Delta_X] =2 d(d-1) X \left(  \K_{d-2}(X)\odot(X^\top \Delta_X + \Delta_X^\top X) \odot \P_{W\p }\right) + 2 d \Delta_X \left( \K_{d-1}(X) \odot \P_{W\p }\right).
 \end{equation}
 Now we need the cross derivatives $ \nabla_W \nabla_X f(X,W)[\Delta_W]$ and $ \nabla_X \nabla_W f(X,W)[\Delta_W]$. 
\begin{equation}
 \begin{aligned}
  \nabla_W f(X+\Delta_X,W ) 	&= -2  \K(X+\Delta_X)W 	\\
 	&= 	  -2  \left( K_d(X) + d \K_{d-1}(X)\odot(X^\top \Delta_X + \Delta_X^\top X) \right) W \\
 	 	&= 	  -2   K_d(X)W -2d\left(  \K_{d-1}(X)\odot(X^\top \Delta_X + \Delta_X^\top X) \right) W \\
 	&= 	 \nabla_U f(X,W )  -2d\left(  \K_{d-1}(X)\odot(X^\top \Delta_X + \Delta_X^\top X) \right) W. \\
 \end{aligned}
 \end{equation} 
 So $$ \nabla_X \nabla_W f(X,W)[\Delta_X] = -2d\left(  \K_{d-1}(X)\odot(X^\top \Delta_X + \Delta_X^\top X) \right) W \in s \times r.$$ 
Similarly we find
 \begin{equation}
 \begin{aligned}
  \nabla_X f(X,W+\Delta_W ) 	&= 2 d X\left(\K_{d-1}(X) \odot  \P_{(W+\Delta_W)\p } \right)	\\
 	&=  2 d X\left(\K_{d-1}(X) \odot ( \P_{W\p }-W\Delta_W\transpose - \Delta_W W\transpose  ) \right)		   \\
 	&= 2 d X\left(\K_{d-1}(X) \odot  \P_{W\p } \right)	 - 2 d X\left(\K_{d-1}(X) \odot  ( W\Delta_W\transpose - \Delta_W W\transpose  )\right) \\
 	 	&=  \nabla_X f(X,W ) - 2 d X\left(\K_{d-1}(X) \odot  ( W\Delta_W\transpose - \Delta_W W\transpose  )\right). \\
 \end{aligned}
 \end{equation} 
 And so
$$\nabla_W \nabla_X f(X,W)[\nabla_W] = - 2 d X\left(\K_{d-1}(X) \odot  ( W\Delta_W\transpose - \Delta_W W\transpose  )\right) \in n \times s .$$
In the end
\begin{equation}
\nabla^2 f(X,W) \begin{bmatrix}
\Delta_X\\
\Delta_W \\
\end{bmatrix} = \begin{pmatrix}
 \nabla_X^2 f(X,W)[\Delta_X] + \nabla_W \nabla_X f(X,W)[\nabla_W]\\
  	 \nabla_X \nabla_W f(X,W)[\Delta_X] + \nabla_W^2 f(X,W)[\Delta_W].\\
\end{pmatrix}
\end{equation}

\subsection{Gaussian kernel}
Consider the Gaussian kernel defined in~\eqref{eq:gaussian_kernel}. A direct computation gives
\begin{equation}
\nabla_X f(X,W)=  -\dfrac{2}{\sigma^2}X \left( \mathrm{diag} \left(\mathrm{sum}(\K^{G}\odot \P_{W^\perp},1)\right)-\K^{G}\odot \P_{W^\perp}\right)
\end{equation}
for $\K^{G}$ the Gaussian kernel and $\mathrm{sum}(\K^{G}\odot \P_{W^\perp},1)$ is the vector with the sum of each column of the matrix $\K^{G}\odot \P_{W^\perp}$. 
And similarly to the monomial kernel above, we have 
\begin{equation}
\nabla_W f(X,W) = -2\K^{G}(X)W.
\end{equation}
We do not compute the Hessian by hand for the Gaussian kernel. We either use automatic differentiation or finite differences of the gradient in the algorithm.

\section{Proofs for Section~\ref{sec:am_convergence} (Convergence of the alternating minimization algorithm)}
\label{appendix:proofs-6}

\begin{lemma}\emph{\textbf{(Descent lemma based on~\cite[Theorem 4]{boumal2019global})}}
Let \aref{assumption:bounded_below} and \aref{assumption:pullback_lipschitz_altmin} hold for $f:\M \to \R$. 
Then, for any $k\geq 0$, the iterates produced by Algorithm~\ref{algo:am} satisfy
\begin{equation}
f(X_{k},\U_k) - f(X_{k+1},\U_{k+1}) \geq \dfrac{1}{2L_u}\|\grad_\mathcal{U} f(X_{k+1},\U_k)\|^2,
\end{equation}
where $L_u$ is the Lipschitz constant of the gradient of the pullback~(\aref{assumption:pullback_lipschitz_altmin}).
\end{lemma}
\begin{proof}
We follow the development of~\cite[Theorem 4]{boumal2019global}. By Lipschitz continuity of the gradient we have,
\begin{equation}
\left| f\left(X_{k+1}, R_{\calU_k}(\eta)\right) - [f(X_{k+1},\calU_k) + \langle \grad_\calU f(X_{k+1},\calU_k), \eta \rangle] \right| \leq \dfrac{L_u}{2} \norm{\eta}^2~~~~ \forall \eta \in \Trm_\calU\Grass.
\end{equation}
Let $\eta = -\dfrac{1}{L_u} \grad_\calU f(X_{k+1},\calU_k)$ and define $\calU^+ = \Retr_{\calU_k}\left(\dfrac{-1}{L_u} \grad_\calU f(X_{k+1},\calU_k)\right)$, which gives 
\begin{align}
f(X_{k+1},\U^+) &\leq f(X_{k+1},\U_k) + \langle \grad_\U f(X_{k+1},\U_k), \dfrac{-1}{L_u} \grad_U f(X_{k+1},\U_k)\rangle \\\nonumber
&~~~~~+\dfrac{L_u}{2}\norm{\dfrac{-1}{L_u} \grad_\U f(X_{k+1},\U_k)}^2\\
 &\leq  f(X_{k+1},\U_k) -\dfrac{1}{2L_u}\| \grad_\U f(X_{k+1},\U_k)\|^2.
\end{align}
This gives 
\begin{equation}
f(X_{k+1},\calU_k) -f(X_{k+1},\U^+) \geq \dfrac{1}{2L_u}\|\grad_\U f(X_{k+1},\U_k)\|^2.
\end{equation}
Using that the singular value decomposition step of Algorithm~\ref{algo:am} finds the minimum of $f(X_{k+1},\cdot)$ over $\Grass(N,r)$, we have $f(X_{k+1},\U_{k+1}) \leq f(X_{k+1}, \U^+)$. Each update of the variable $X$ is non-increasing, that is, $f(X_{k},\U_k) \geq f(X_{k+1}, \U_k)$. Hence, we can conclude
\begin{equation}
f(X_{k},\calU_k) -f(X_{k+1},\U_{k+1}) \geq f(X_{k+1},\calU_k) -f(X_{k+1},\U_{k+1}) \geq \dfrac{1}{2L_u}\|\grad_\U f(X_{k+1},\U_k)\|^2.
\end{equation}
\end{proof}

\begin{lemma}
Under~\aref{assumption:pullback_lipschitz_altmin}, for the direction
$-\grad_X f(X_k^{(i)},\U_k) \in T_{X_k}\LAb$, 
the linesearch Algorithm~\ref{algoLS} produces a step size $\alpha_k^{(i)}$ that satisfies 
\begin{equation}
\underline{\alpha} := \min \left\lbrace \alpha_0,   \dfrac{ 2\tau(1  - \beta)  }{L_x } \right\rbrace\leq \alpha_k^{(i)} \leq \alpha_{0}
\end{equation}
and produces the following decrease
\begin{equation}
   f(X_k^{(i)},\U_k) -  f(X_k^{(i+1)},\U_k) \geq   \beta \alpha \fronorm{\grad_X  f(X_k^{(i)},\U_k)}^2,
   \label{eq:armijo_decrease_grad_appendix}
\end{equation}
where $X_k^{(i+1)} = X_k^{(i)} - \alpha_k^{(i)} \grad_X f(X_k^{(i)},\U_k)$.
\end{lemma}
\begin{proof}
It is clear from the algorithm that $\alpha_k^{(i)} \leq \alpha_{0}$. The Armijo condition also ensures~\eqref{eq:armijo_decrease_grad_appendix}.
For any $\alpha>0$, Lipschitz continuity of the gradient gives
\begin{align}
f\Big(X_k^{(i)} - \alpha \grad_X f(X_k^{(i)},\U_k) ,\U_k\Big) &\leq f(X_k^{(i)},\U_k) - \alpha \fronorm{ \grad_X f(X_k^{(i)},\U_k)}^2 \\
&~~~~~~~~+ \alpha^2 \dfrac{L_x}{2} \fronorm{\grad_X f(X_k^{(i)},\U_k) }^2.
\end{align}
Hence the Armijo condition~\eqref{eq:armijo_decrease_grad_appendix} is satisfied whenever
\begin{align}
  -\alpha \norm{ \grad_X f(X_k^{(i)},\U_k) }^2 + \alpha^2 \dfrac{L_x}{2} \fronorm{ \grad_X f(X_k^{(i)},\U_k)}^2  \leq -\alpha\beta \fronorm{ \grad_X f(X_k^{(i)},\U_k) }^2,
\end{align}
which simplifies to
\begin{equation}
\alpha \leq\dfrac{ 2(1  - \beta)  }{L_x }=:\alpha_{max}.
\end{equation}
If $\alpha_0$ satisfies Armijo, then $\alpha_k^{(i)} = \alpha_0$. Otherwise, we have $\alpha_k^{(i)} = \tau \alpha_l$ where $\alpha_l> \alpha_{max}$ is the last $\alpha$ that does not satisfy Armijo and $\alpha_{l+1} = \tau \alpha_l$ satisfies Armijo. In this case we have $\alpha_k^{(i)} \geq \tau \alpha_{max}=   \dfrac{ 2\tau(1  - \beta)  }{L_x }$. 
\end{proof}

\end{document}